\setlist{nosep}
\newtheorem{definition}{Definition}
\newtheorem{proposition}{Proposition}
\newtheorem{lemma}{Lemma}
\newcommand{\icol}[1]{
  \left(\begin{smallmatrix}#1\end{smallmatrix}\right)%
}
\def\defRecurse{P}
\newcommand{\Recurse}[1]{\defRecurse^{(#1)}}
\def\Op{\Recurse{d}}
\newcommand{\Opl}[1]{\Recurse{#1}}
\newcommand{\DotOp}[2]{\defRecurse^{(d)}(#1,#2)}
\newcommand{\DotOperand}[3]{\defRecurse^{(#3)}(#1,#2)}
\def\x{x}
\def\defxvec{\tilde{\x}}
\def\xv{\defxvec{}}
\def\xcol{\icol{\x\\1}}
\def\defVCBool{2k\log_2(8esd)}
\def\VCbool{VC^{Bool}_{k,d,s}}
\def\VCind{VC^{\text{NoData}}}
\def\VCvap{\text{VCDim}}
\newcommand{\mul}[1]{\mu^{#1}}
\newcommand{\taul}[1]{\tau^{#1}}
\newcommand{\sigl}[1]{\sigma^{#1}}
\def\mubar{\bar{\mu}}
\def\taubar{\bar{\tau}}
\def\sigbar{\bar{\sigma}}
\def\Sigbar{\bar{\Sigma}}
\def\SigbarM{\bar{\Sigma}_{-}}
\def\SigbarO{\bar{\Sigma}_{0}}
\def\SigbarP{\bar{\Sigma}_{+}}
\newcommand{\Sigbarl}[1]{{\Sigbar}^{#1}}
\newcommand{\SigbarOl}[1]{{\SigbarO}^{#1}}
\def\Archi{ArchI}
\def\Archii{ArchII}
\def\Archiii{ArchIII}
\def\Datai{DataI}
\def\Dataii{DataII}
\def\Dataiii{DataIII}
\def\defwidth{n}
\newcommand{\width}[1]{\defwidth_{#1}}
\def\widthbar{\bar{\defwidth}}
\def\defrank{r}
\newcommand{\rankl}[1]{\defrank_{#1}}
\def\rankbar{\bar{\defrank}}
\def\defrankbar{(\rankl{0},\rankl{1},\ldots,\rankl{d})}
\def\depth{d}
\def\Wmat{A}
\newcommand{\WeightMatrix}[1]{\Wmat^{(#1)}}
\newcommand{\W}[1]{\WeightMatrix{#1}}
\newcommand{\bs}{b}
\newcommand{\B}[1]{\bs^{(#1)}}
\def\relu{R}
\def\net{\mathcal{N}}
\newcommand{\netl}[1]{\net^{(#1)}}
\newcommand{\norm}[1]{\lVert #1\rVert}
\def\bbR{\mathbb{R}}
\def\pdata{\mathcal{D}}
\begin{document}

\title{Deep Networks as Logical Circuits:  Generalization and Interpretation}
\author{
Christopher Snyder\\ 
Department of Biomedical Engineering\\
The University of Texas at Austin\\ 
\texttt{22csnyder@gmail.com}
\and \textbf{Sriram Vishwanath}\\Department of Electrical and Computer Engineering\\
The University of Texas at Austin\\ 
\texttt{sriram@austin.utexas.edu}}
\maketitle

\begin{abstract}
Not only are Deep Neural Networks (DNNs) black box models, but also we frequently conceptualize them as such. We lack good interpretations of the mechanisms linking inputs to outputs. Therefore, we find it difficult to analyze in human-meaningful terms (1) what the network learned and (2) whether the network learned. We present a hierarchical decomposition of the DNN discrete classification map into logical (AND/OR) combinations of intermediate (True/False) classifiers of the input. Those classifiers that can not be further decomposed, called atoms, are (interpretable) linear classifiers. Taken together, we obtain a logical circuit with linear classifier inputs that computes the same label as the DNN. This circuit does not structurally resemble the network architecture, and it may require many fewer parameters, depending on the configuration of weights. In these cases, we obtain simultaneously an interpretation and generalization bound (for the original DNN), connecting two fronts which have historically been investigated separately. Unlike compression techniques, our representation is \textit{exact}. We motivate the utility of this perspective by studying DNNs in simple, controlled settings, where we obtain superior generalization bounds despite using only combinatorial information (e.g. no margin information). We demonstrate how to "open the black box" on the MNIST dataset. We show that the learned, internal, logical computations correspond to semantically meaningful (unlabeled) categories that allow DNN descriptions in plain English. We improve the generalization of an already trained network by interpreting, diagnosing, and replacing components \textit{within} the logical circuit that is the DNN.
\end{abstract}

\section{Introduction}
Deep Neural Networks (DNNs) are among the most widely studied and applied models, in part because they are able to achieve state-of-the-art performance on a variety of tasks such as predicting protein folding, object recognition, playing chess. Each of these domains was previously the realm of many disparate, setting-specific, algorithms. The underlying paradigm of Deep Learning (DL) is, by contrast, relatively similar across these varied domains. This suggests that the advantages of DL may be relevant in a variety of future learning applications rather than being restricted to currently-known settings.


The philosophy of investigating deep learning has typically focused upon keeping experimental parameters as realistic as possible. A key advantage enabled by this realism is that the insights from each experiment are immediately transferable to settings of interest. However, this approach comes with an important disadvantage: Endpoints from realistic experiments can be extremely noisy and complicated functions of variables of interest, even for systems with simple underlying rules. Newton's laws are simple, but difficult to discover except in the most controlled of settings.

The goal of our study is to understand the relationship between generalization error and network size. We seek to clarify why DNN architectures that can potentially fit all possible training labels are able to generalize to unseen data. Specifically, we would like to understand why increasing the capacity of a DNN (through increasing the number of layers and parameters) is not always accompanied by an increase in  test error. 
To this end we study fully-connected, Gaussian-initialized, unregularized, binary classification DNNs trained with gradient descent to minimize cross-entropy loss on $2$-dimensional data \footnote{Though the generalization of DNNs has been attributed in part to SGD, dropout, batch normalization, weight sharing (e.g. CNNs), etc., none of these are strictly necessarily to exhibit the apparent paradox we describe.}. Even in such simple settings, generalization is not yet well-understood (as bounds can be quite large for deep networks), and our goal is to take an important step in that direction.

In adopting a minimalist study of this generalization phenomenon, the view taken in this paper is aligned with that expressed by Ali Rahimi in the NIPS2017 "Test of Time Award" talk: 
"This is how we build knowledge. We apply our tools on simple, easy to analyze setups; we learn; and, we work our way up in complexity\textellipsis Simple experiments --- simple theorems are the building blocks that help us understand more complicated systems."

---\cite{Rahimi2017}

Our contributions are:
\begin{enumerate}
    \item We give an intuitive, visual explanation for generalization using experiments on simple data. We show that prior knowledge about the training data can imply regularizing constraints on the image of gradient descent independently of the architecture. We observe this effect is most pronounced at the decision boundary.
    \item We represent exactly a DNN classification map as a logical circuit with many times fewer parameters, depending on the data complexity.
    \item We demonstrate that our logical transformation is useful both for interpretation and improvement of trained DNNs. On the MNIST dataset we translate a network "into plain English". We improve the test accuracy of an already trained DNN by debugging and replacing \textit{within the logical circuit} of the DNN a particular intermediate computation that had failed to generalize.
    \item We give a formal explanation for generalization of deep networks on simple data using classical VC bounds for learning Boolean formulae. 
    Our bound is favorable to state of the art bounds that use more information (e.g. margin). Our bounds are extremely robust to increasing depth.
\end{enumerate}
\section{Setting and Notation}

In this paper we study binary classification ReLU fully connected deep neural networks, $\net:\bbR^{\width{0}}\mapsto\bbR$, that assign input $x$, label $y\in\{False,True\}$ according to the value $[\net(x)\geq 0]$. This network has $d$ hidden layers, each of width $\width{l}$, indexed by  $l=1,\ldots,d$. We
reserve the index $0$[$d+1$] for the input[output] space, so that $\width{d+1}=1$. Our ReLU nonlinearities, $\relu(x)_i=\max\{0,x_i\}$, are applied coordinate-wise, interleaving the affine maps defined by weights $\W{l}\in\bbR^{\width{l+1}\times\width{l}}, \B{l}\in\bbR^{\width{l+1}}$. These layers compute recursively 
\begin{align*}
\netl{l+1}(x)&\triangleq\B{l+1}+\W{l}\relu( \netl{l}(x) ).
\end{align*}
Here, we include the non-layer indices $0$ and $d+1$ to address the input, $x=\netl{0}(x)$, and the output, $\net(x)=\netl{d+1}(x)$, respectively.

For a particular input, $x$, each neuron occupies a binary "state" according to the sign of its activation. The set of inputs for which the activation of a given neuron is identically $0$ comprises a "neuron state boundary"(NSB), of which we consider the decision boundary to be a special case by convention. We can either group these states by layer or all together to designate either the layer state, $\sigl{l}(x)\in\{0,1\}^{\width{l}}$, or the network state, $\sigbar(x)=(\sigl{1}(x),\ldots,\sigl{d}(x))$, respectively.



We consider our training set, $\{(x_1,y_1),\ldots,(x_m,y_m)\}$, to represent samples from some distribution, $\pdata$. We define generalization error of the network to be the difference between the fraction correctly classified in the finite training set and in the overall distribution. We define training as the process that assigns parameters the final value of unregularized gradient descent on cross-entropy loss.

\section{Insights From a Controlled Setting}
\label{sec:insights}
\subsection{Finding the Right Question}
Note that one is \textit{not} always guaranteed small generalization error. There are many settings where DNNs under-perform and have high generalization error. For our purposes, it suffices to recall that when the inputs and outputs $(X,Y)\sim\pdata$ are actually independent, e.g., $Y|X\sim Bernoulli(1/2)$, neural networks still obtain zero empirical risk, which implies the generalization error can be arbitrarily bad in the worst case \citep{Zhang2017}. 
From this, we can conclude that making either an explicit or implicit assumption about the dataset, the data, or both, is \textit{strictly necessary} and unavoidable. At the very least, one must make an assumption which rules out random labels with high probability. 

Notice that the only procedural distinction between a DNN that will generalize and one that will memorize is the dataset. Those network properties capable of distinguishing learning from memorizing, e.g., Lipschitz constant or margin, must therefore arise as secondary characteristics. They are functions of the dataset the network is trained on.

We want clean descriptions of DNN functions that generalize. By the above discussion, these are the DNNs that inherit some regularizing property from the training data through the gradient descent process. What sort of architecture agnostic language allows for succinct descriptions of trained DNNs exactly when we make some strong assumption about the training set?


\begin{figure*}
\renewcommand{\tabcolsep}{0pt}
\renewcommand{\arraystretch}{0.15}
\def\nsbwid{0.15\textwidth}
\def\nsbht{1.4cm}
  \centering
    \begin{tabular}[t]{ccc}
\begin{subfigure}{0.5\textwidth}
    \centering
    \includegraphics[height=4.2cm,keepaspectratio]{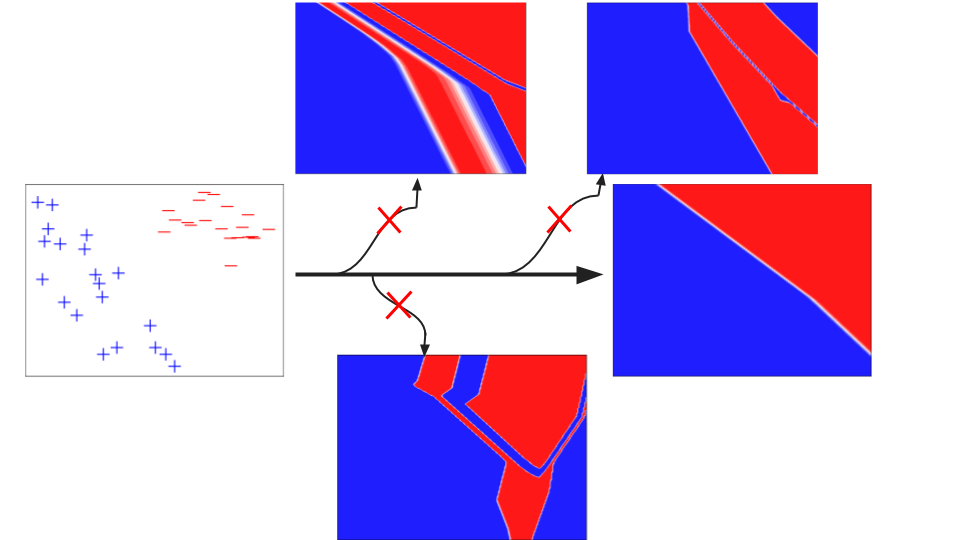}
    \caption{Regularizing  Effect of Linearly Separable Data}
    \label{fig:1-a}
\end{subfigure}
    &
        \begin{tabular}{c}
         \begin{subfigure}[b]{\nsbwid}
                \centering
                \includegraphics[height=\nsbht{}]{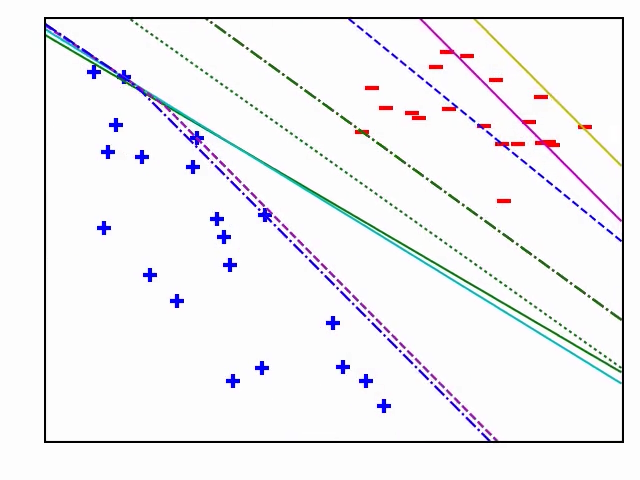}
         \end{subfigure}\\
         \begin{subfigure}[b]{\nsbwid}
                \centering
                \includegraphics[height=\nsbht{}]{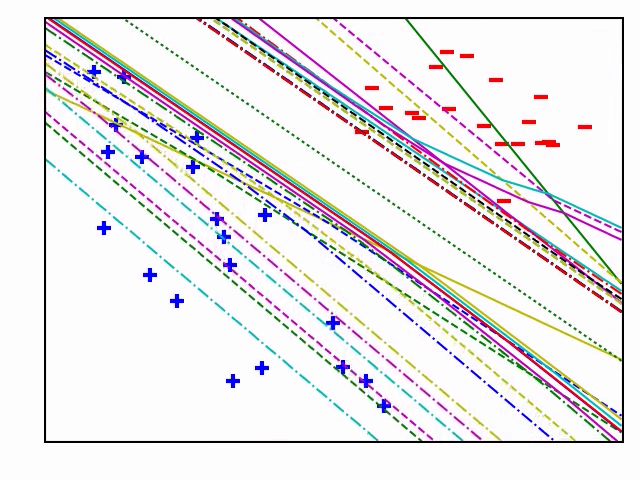}
         \end{subfigure}\\
         \begin{subfigure}[b]{\nsbwid}
                \centering
                \includegraphics[height=\nsbht{}]{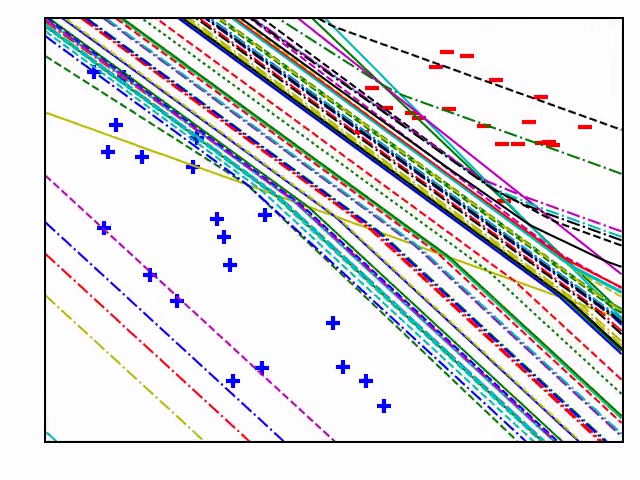}
                \caption{{\Datai}}
                \label{fig:1-b}
         \end{subfigure}
        \end{tabular}
&
        \begin{tabular}{c}
         \begin{subfigure}[b]{\nsbwid}
                \centering
                \includegraphics[height=\nsbht{}]{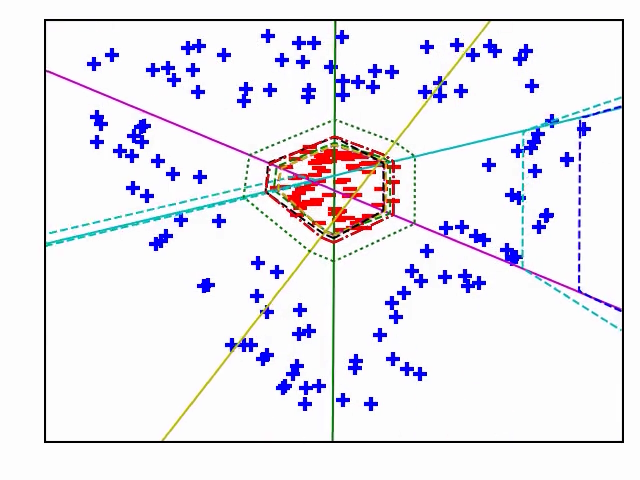}
         \end{subfigure}\\
         \begin{subfigure}{\nsbwid}
                \centering
                \includegraphics[height=\nsbht{}]{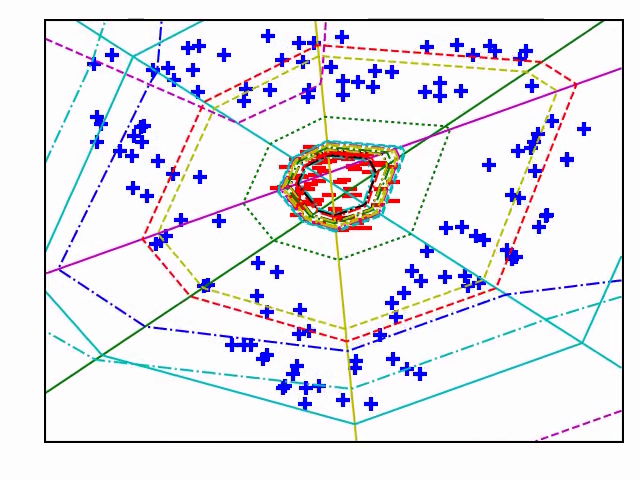}
         \end{subfigure}\\
         \begin{subfigure}{\nsbwid}
                \centering
                \includegraphics[height=\nsbht{}]{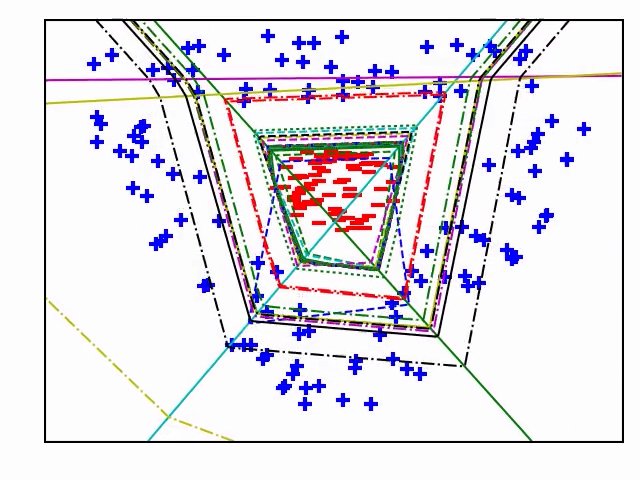}
                \caption{{\Dataii}}
                \label{fig:1-c}
         \end{subfigure}
        \end{tabular}\\
    \end{tabular}
  \caption{Structural organization of the decision boundary(DB) and NSBs (where each neuron changes from "on" to "off") of trained DNNs as the data (Fig. \ref{fig:1-a}) and architecture complexity varies (Figs.\ref{fig:1-b}, \ref{fig:1-c}). In Fig. \ref{fig:1-a}, \textit{if and only if} we include additional noisy training data to the linearly classifiable {\Datai} can we avoid learning an (essentially) linear classifier. Regularity is not tied to data fit but data structure: all $4$ {\Archiii} classifiers have $0$ training error and vanishing loss on the same original data.
  %
  In columns (\ref{fig:1-b},\ref{fig:1-c}) we plot all NSBs(different linestyle[color] distinguishes NSBs of neurons in different[the same] hidden layer) and the DB(dotted). We see that for fixed dataset, increasing the architecture size (moving down a column) does not qualitatively change the learned DB. Additional layers may add more NSBs, but these organize during training in redundant, parallel shells that do not make the DB more complex. 
  Only those NSBs that intersect the DB influence the DB and cause it to bend. Not only is the number of intersections between the DB and NSBs minimized, but also they separate from one another during training, as if by some (regularizing) "repulsive force", most readily apparent in row $2$ col \ref{fig:1-b} (and in the Supplemental animations), that repels the NSBs from the decision boundary. There are several sources of relevant additional information for this figure. The Appendix contains Figures \ref{fig:arch_diagrams} and \ref{fig:data_diagrams}, which are useful to quickly visually appreciate the architectures, ArchI,II,III, and training data, DataI,II,III. It also contains Figure \ref{fig:neuron_states}, which elaborate on these NSB diagrams in number, kind, and size.
  \ref{fig:arch_diagrams} and \ref{fig:data_diagrams}
    }
  \label{fig:unused_capacity}
\end{figure*}

\subsection{A Deep Think on Simple Observations}
\begin{displayquote}
\centering
We find in our experiments that DNNs of any architecture trained on linearly classifiable data are almost always linear classifiers (Fig \ref{fig:unused_capacity}).
\end{displayquote}

Is this interesting? Let us consider: though our network has enormous capacity, in this fixed setting of linearly separable data, the deep network behaves as though it has no more capacity than a linear model. When we discuss capacity of a class a functions, we ordinarily consider a hypothesis class consisting of networks indexed over all possible values of weights (or perhaps in a unit ball), since no such restrictions are explicitly built into in the learning algorithm. For a large architecture, such as {\Archiii}, this hypothesis class consists of a tremendous diversity of decision boundaries that fit the data. However, here we observe only a subset of learners: Not every configuration of weights nor every hypothesis is reachable by training with gradient descent on linearly classifiable data. Consider a learning the DNN weights corresponding to the $9$ layer network, {\Archiii}. The VCDim of such hypotheses indexed by every possible weight assignment is $1e6$, which is unhelpfully large. But, have we measured the capacity of the correct class? If we instead use the class reachable by gradient descent, then data assumptions, which are in some form necessary, by constraining the inputs to our learning algorithm in turn restrict our hypothesis class. Linear separability is a particularly strong data assumption which reduces our the VC dimension of our hypothesis class from $1e6$ to $3$. We conclude:

To ensure generalization of unregularized DNN learners, not only are data assumptions \textit{necessary}, but also strong enough assumptions on the training data are themselves \textit{sufficient} for generalization.

In Figures \ref{fig:1-b},\ref{fig:1-c}, 
we see that a DNN with more parameters learns a more complicated \textit{function} but not a more complicated \textit{classifier}. For example, the number of linear regions does seem to scale with depth for fixed dataset. However, instead of intersecting the decision boundary or one another, these additional NSBs form redundant onion-like structures parallel to the decision boundary.

Since we have argued that learning guarantees in this setting are essentially equivalent to training data guarantees, capacity measures on the learned network $\net$ that imply generalization must somehow reflect the regularity of the data that was originally trained on. Conversely, the factors not determined by the training data structure should \textit{not} factor into the capacity measure. For example, we desire bounds which do not grow with depth.

A capacity measure on $\net(x)$ that is determined entirely by restricting $\net$ to a neighborhood of its decision boundary accomplishes both such goals. The effect of the data is captured because the geometry of this boundary closely mirrors the that of the training data in arrangement and complexity. Consider also that behavior of $\net$ at the decision boundary is still is sufficient to determine each input classification and therefore the generalization analysis is unchanged. We claim  restricting $\net$ to near the decision boundary destroys the architecture information used to parameterize $\net$. More specifically, only the existence of neurons whose NSB intersects the decision boundary can be inferred from observation of inputs $x$ and outputs $\net(x)$ near the boundary. For example, this restriction is the same both for a linear classifier and for a deep network that learns a linear classifier with $50$ linear regions (as in Fig. \ref{fig:1-b}).




\section{Opening the Black Box through Deep Logical Circuits}
\label{sec:interpret}

\begin{figure*}[htb]
\def\fbfwidth{0.09\linewidth}
    \captionsetup[subfigure]{labelformat=empty}
   \centering
   \[\left[
    \begin{minipage}[h]{0.10\linewidth}
	\vspace{0pt}
	\includegraphics[width=\linewidth]{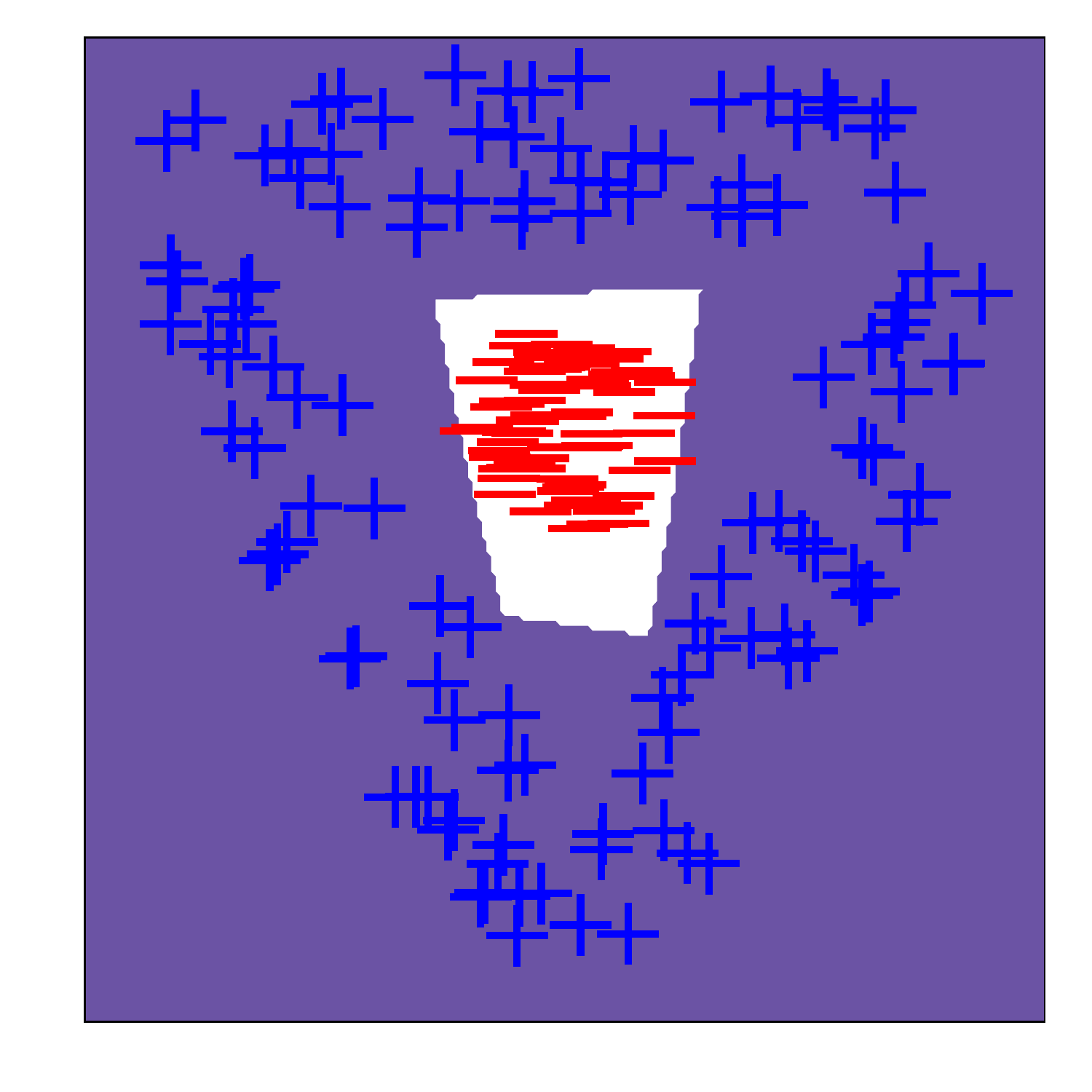}
	\subcaption{$\net(x)\geq 0$}
    \end{minipage}
   \right] 
   \Leftrightarrow
   \left[
   \begin{minipage}[h]{\fbfwidth}
	\vspace{0pt}
	\includegraphics[width=\linewidth]{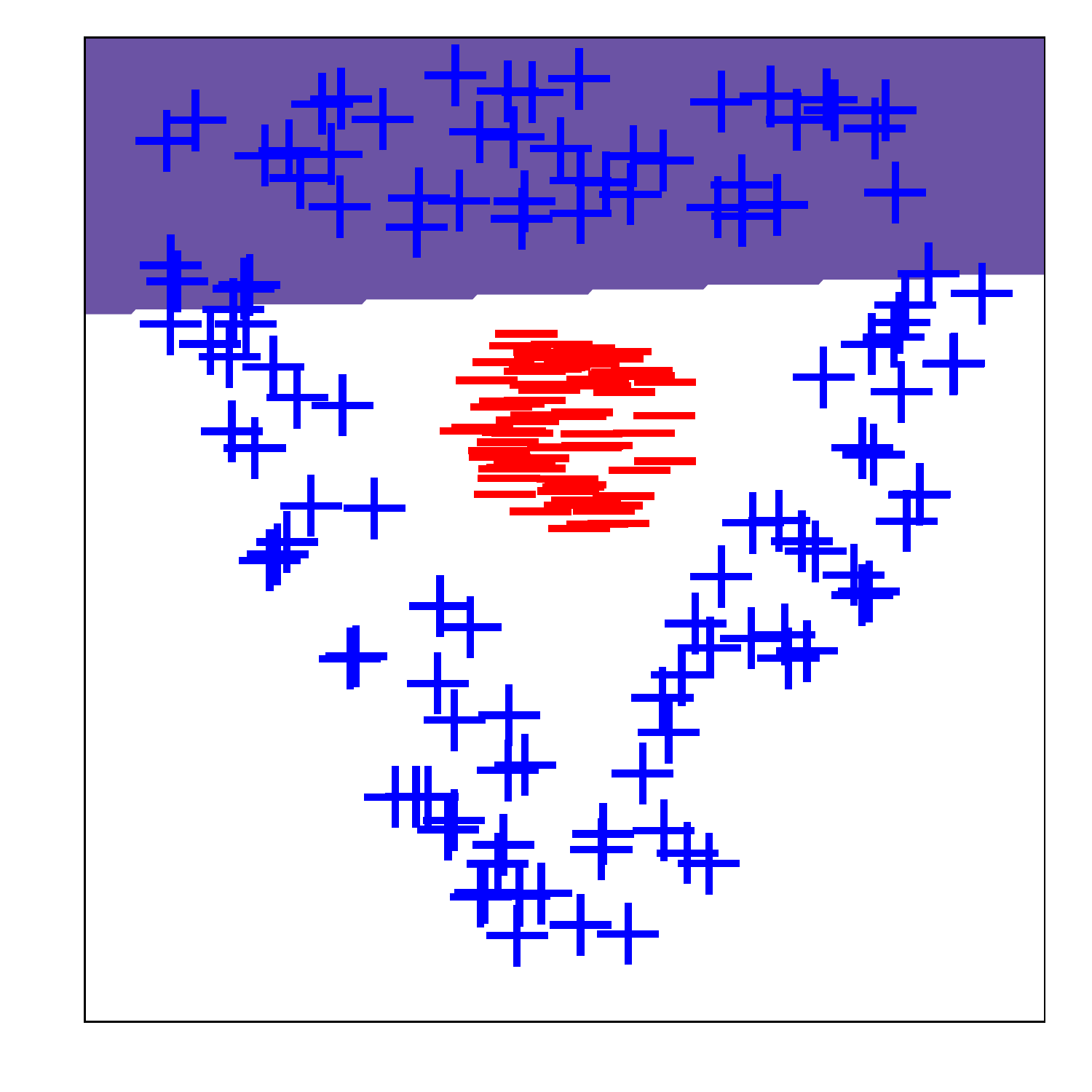}
    \end{minipage}
    \bigvee 
   \begin{minipage}[h]{\fbfwidth}
	\vspace{0pt}
	\includegraphics[width=\linewidth]{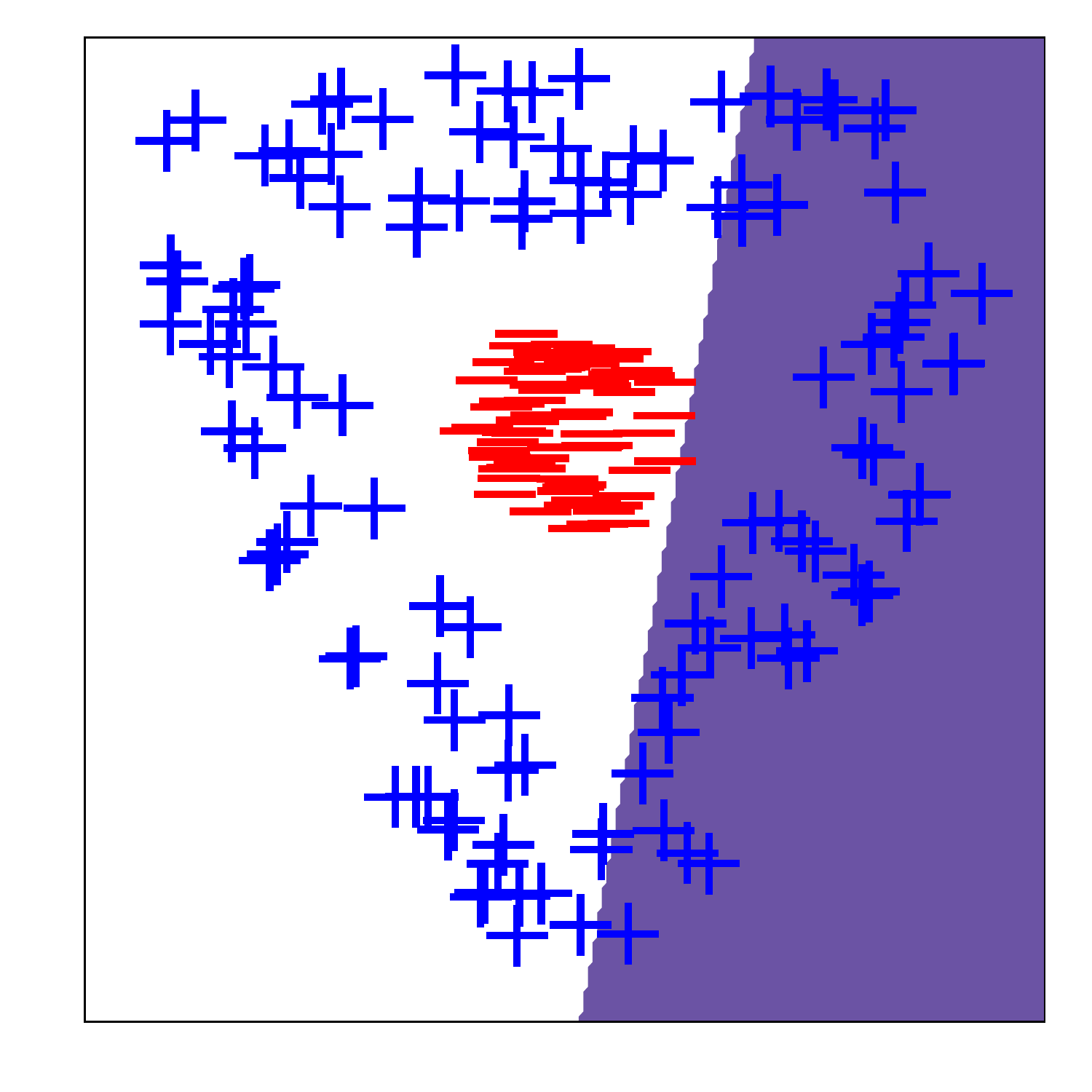}
    \end{minipage}
    \bigvee 
   \begin{minipage}[h]{\fbfwidth{}}
	\vspace{0pt}
	\includegraphics[width=\linewidth]{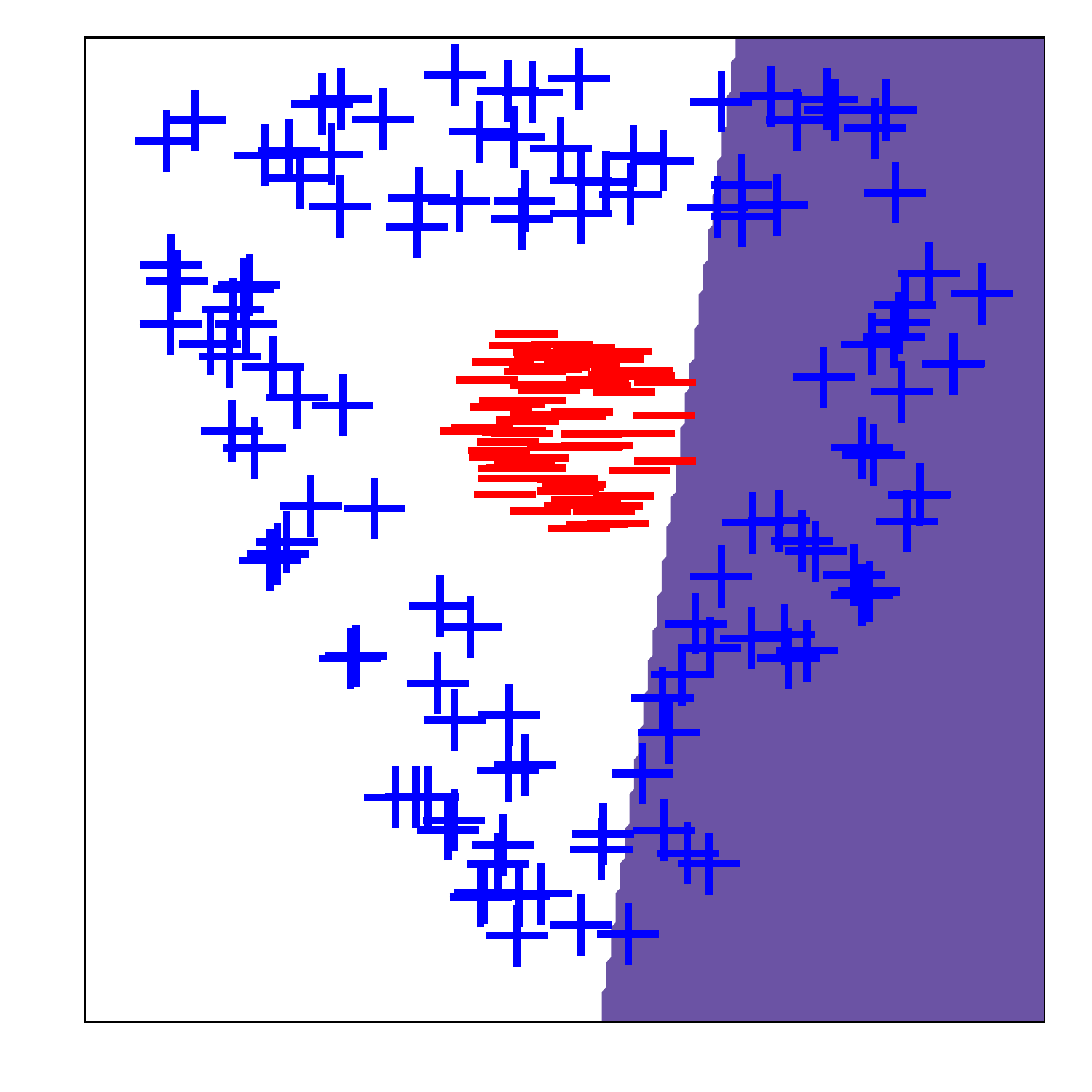}
    \end{minipage}
    \bigvee 
   \begin{minipage}[h]{\fbfwidth{}}
	\vspace{0pt}
	\includegraphics[width=\linewidth]{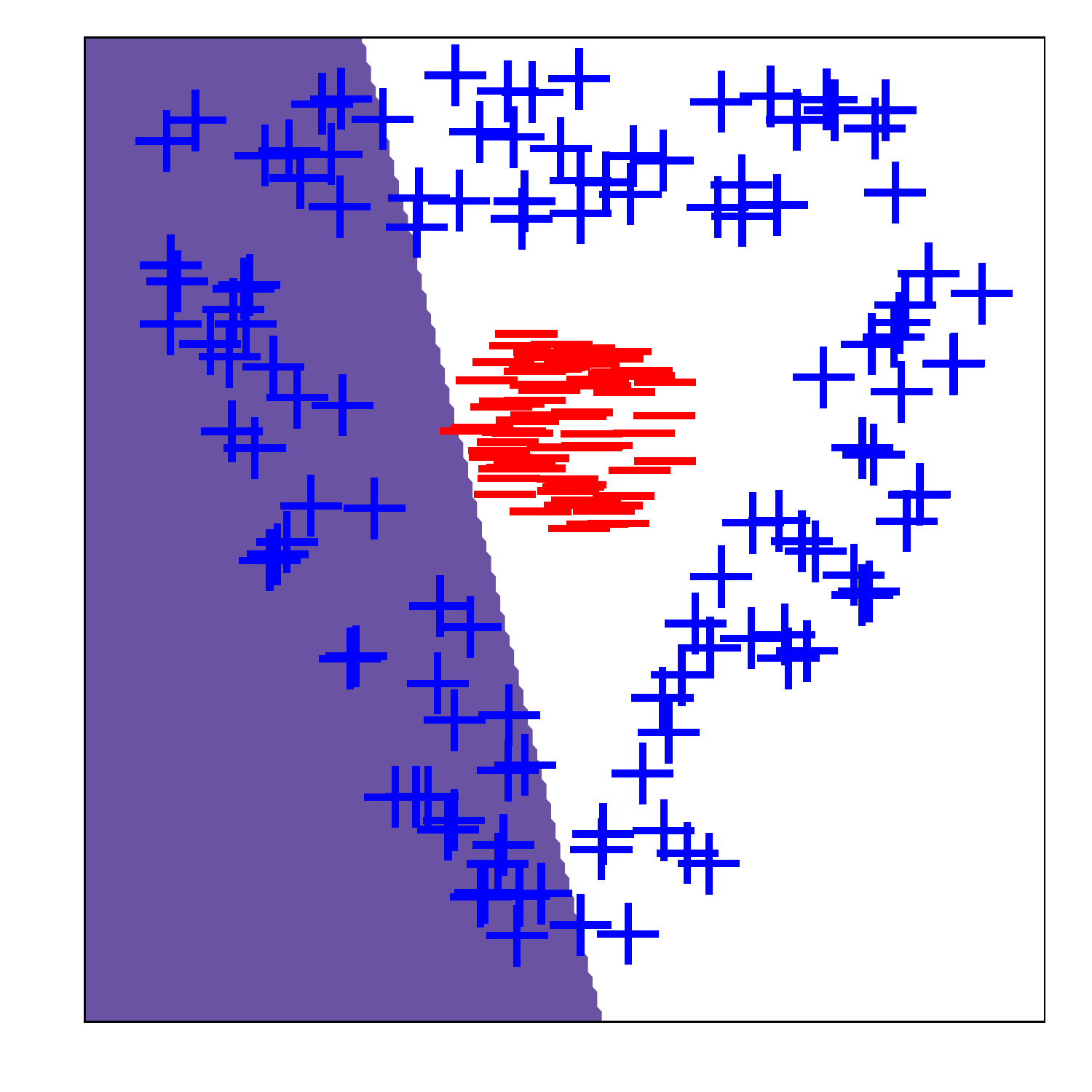}
    \end{minipage}
    \bigvee 
   \begin{minipage}[h]{\fbfwidth{}}
	\vspace{0pt}
	\includegraphics[width=\linewidth]{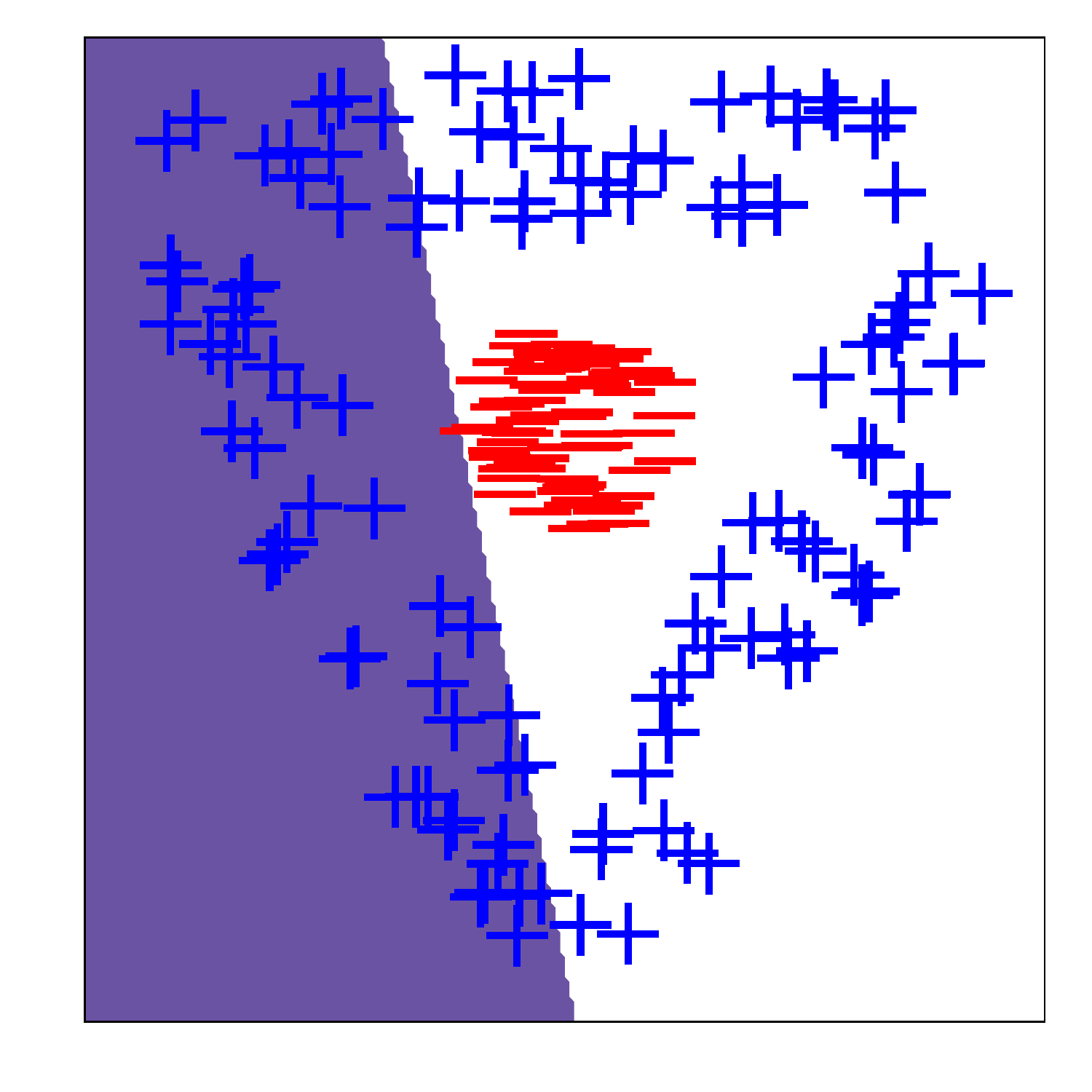}
    \end{minipage}
    \bigvee 
   \begin{minipage}[h]{\fbfwidth{}}
	\vspace{0pt}
	\includegraphics[width=\linewidth]{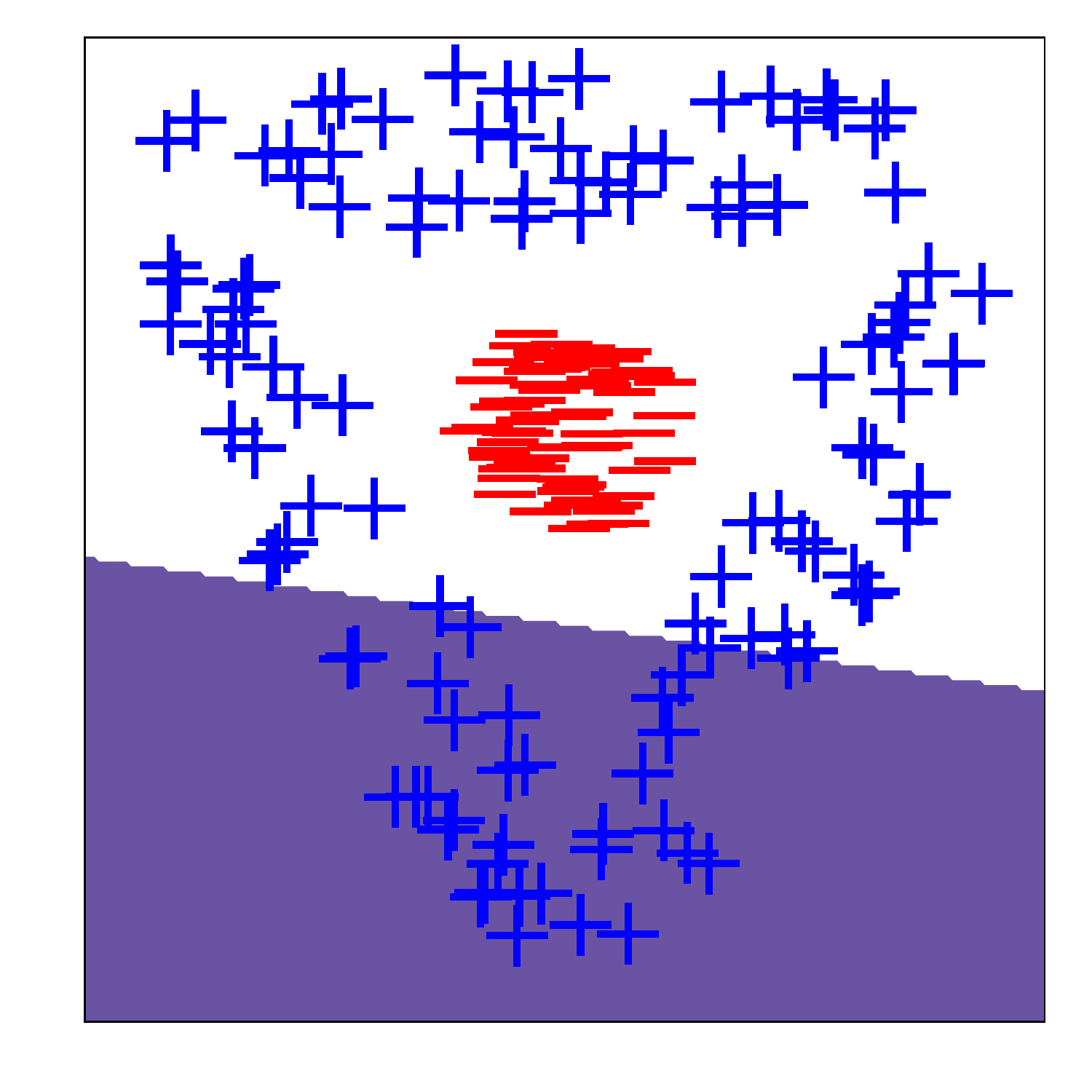}
    \end{minipage}
    \right]
   \]
   \caption{
   The logical representation of the classifier learned by the {\Archiii} network to classify the {\Dataii} data (shaded region classified True). Our algorithm outputs the RHS: the rules the DNN uses to assign a positive label. For at least one of the images on the right, the input must lie in the blue region. These rules are not apparent by inspection of the $\sim7e5$ network parameters.}
   \label{fig:bool_equiv_D2A3}
\end{figure*}

The key idea is to characterize the decision boundary of the DNN by writing the discrete valued classifier $x\mapsto\net(\x)\geq 0$ as a \textit{logical combination} of a hierarchy of intermediate classifiers. These intermediate classifiers identify higher order features useful for the learned task. The final result will be a logical circuit which produces the same binary label as the DNN classifier on all inputs. Finding a circuit that is "simple" is our key to both interpretability and generalization bounds. 

One such example is shown in Figure \ref{fig:bool_equiv_D2A3}. We show that our method translates a $1e^{6}$ DNN classification map into an OR combination of just $6$ linear classifiers. To functionally emphasize, our representation \textit{is} the DNN. It applies to all inputs: training, test, and adversarial alike.

\begin{figure*}
  \centering
         \begin{subfigure}[b]{\linewidth}
                \centering
                \includegraphics[width=\textwidth]{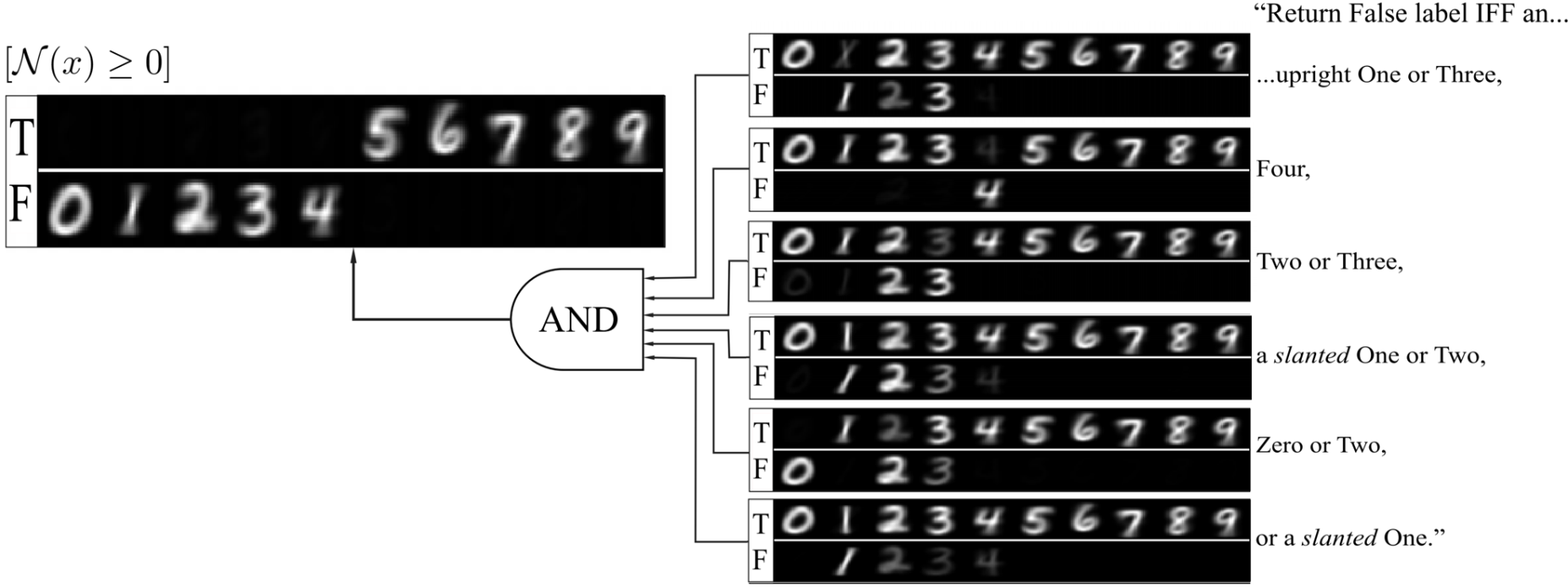}
                \caption{Trained DNN with a Concise "English" Description}
                \label{fig:mnist-a}
         \end{subfigure}
        \vspace{.18cm}

          \begin{subfigure}[b]{\linewidth}
                \centering
                \includegraphics[width=\textwidth]{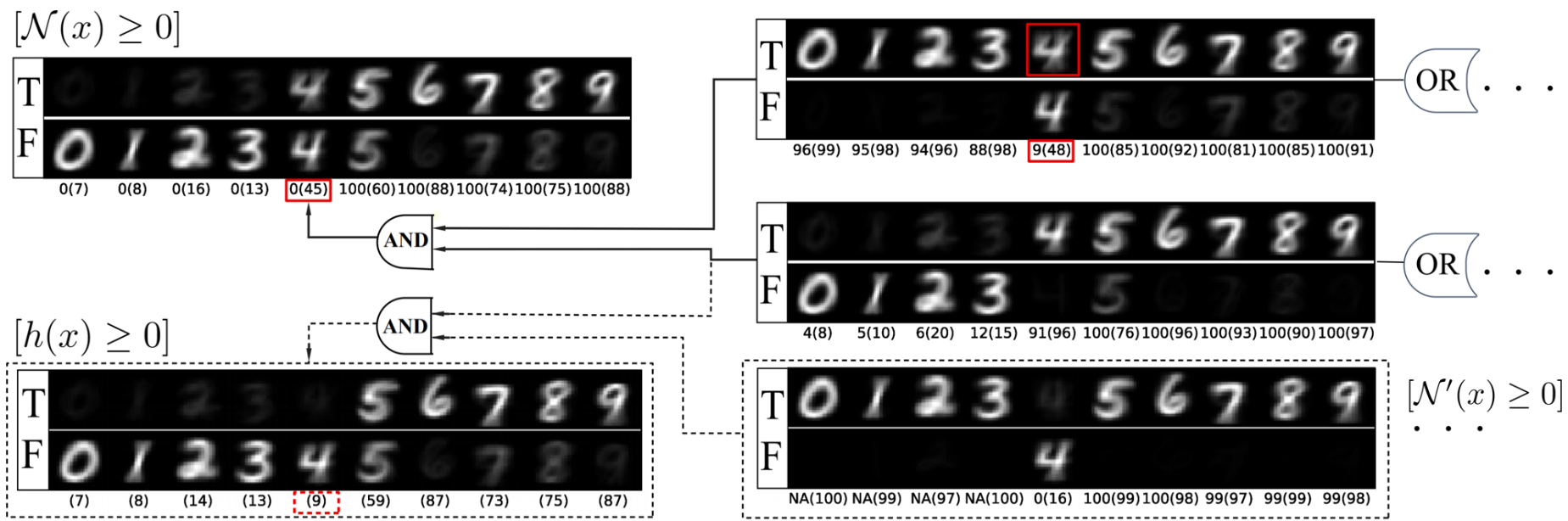}
                \caption{A circuit with localized memorization}
                \label{fig:mnist-c}
     \end{subfigure}
         
  \caption{Selected subsets of the logical circuits corresponding to binary classification DNNs trained on the MNIST dataset. Each $2\times10$ array represents a different binary classifier within the network circuit, which assigns True or False to every input image. In Fig. \ref{fig:mnist-a}[Fig. \ref{fig:mnist-c}] training[test] images of number $l$ contribute in the $l^{th}$ column to the brightness of either top or the bottom row of every array. The choice of row corresponds to whether corresponding classifier outputs True or False. The diagram reads right to left along solid arrows terminating in the leftmost array corresponding to the DNN binary output $[\net(\x)\geq 0]$. 
  The training objective only explicitly distinguishes $0-4$ from $5-9$. Yet, we see that the intermediate logical computations the network learns delineate semantically meaningful subcategories. In Fig. \ref{fig:mnist-a}, the DNN internal logic even admits a description \textit{in plain English}. 
  We show in Figure \ref{fig:mnist-c} how the internal logical circuitry \textit{within the DNN} can be tweaked to improve generalization. Connected with solid lines, we see a network that has overfit badly ($1.0,0.78$ train and test accuracy). The percentage [in parenthesis] under each column indicates how that training[test] digit is assigned True. We see that the intermediate classifier (middle right) struggles to separate Four from the positive labels. A second classifier (top right) is dedicated to learning to identify Fours as False, allowing the network to fit the training data. However, by comparing training and test performance, we can see that these Fours are not learned but memorized: As shown in solid rectangles, the intermediate and final classifier, respectively, assign True to $9\%$[$48$\%] and $0\%$[$45\%$] of the Fours in the training[test] set, accounting for the bulk of the generalization error! In practice, this network would be discarded and retrained from scratch. Since we now have access to the internal logic of the network, we are instead able to surgically replace the memorizing component. The first step we have done implicitly: we use \textit{domain knowledge} to interpret the component function as "excluding Fours". We then train a second network, we call a "prosthetic", learning $[\net^{\prime}(x)\geq 0]$ (with the same settings and data), to label $4$ as False and $5-9$ as True. We can then excise the memorizing component, replacing its role in the logical circuit with the prosthetic (bottom right) to obtain a new classifier consisting of the three classifiers connected by dotted lines. The classifier we engineer ($[h(x)\geq 0]$ bottom left) does better on Fours, 
  $45\%\rightarrow9\%$ classified True (dotted rectangle) and has higher test accuracy overall ($.78\rightarrow.83$).
  }
  \label{fig:mnist}
\end{figure*}

When we train networks on the MNIST dataset, the learned circuit is more complicated, but we can still understand "role" of the intermediate classifiers within the circuit. By probing the internal circuitry with training and validation inputs, we can interpret the role of the components by  cross-referencing with semantic categories (perhaps provided by a domain expert). \textit{A priori}, there is no reason why this should be possible: The high level features a DNN learns as useful for this task are not obliged to be those that humans identify. However we see experimentally extremely encouraging evidence for this. When we group digits $0-4$ and $5-9$ into binary targets for classification, the DNN virtually always learns individual digits as intermediate steps within the logical circuit (Figure \ref{fig:mnist}). For space, only those circuit components closest to the output are shown. A more involved circuit study is available in Figure \ref{fig:mnist-b}.

The dichotomy presented is that Fig \ref{fig:mnist-a} demonstrates the importance of our method to interpretability, while Fig \ref{fig:mnist-c}, demonstrates the importance toward improving generalization. Although interpretability and generalization are usually studied separately, understanding "what $\net$ has learned" is actually very closely related to understanding "what $\net$ has memorized". In fact, one of the takeaways from Figure \ref{fig:mnist} is that the \textit{mechanism of memorization} itself can have interpretation. In Figure \ref{fig:mnist-c} we exploit such an interpretation to improve generalization error by "repairing" the defect.



\def\TheorySection{A Theory of DNNs as Logical Hierarchies}
\section{\TheorySection{}}
In this section, starting with any fixed DNN classifier, we show how to construct, simplify, measure complexity of, and derive generalization bounds for an equivalent logical circuit. These bounds apply to the original DNN. We show they compare favorably with traditional norm based capacity measures.

\subsection{Boolean Conversion: Notation and Technique}
In our theory, we designate $\mu$ and $\tau$ as special characters with dual roles and identical conventions. We consider the symbols, $\mu,\tau$, to represent binary vectors that index by default over all binary vectors
and implicitly promote to diagonal binary matrices, $Diag(\mu), Diag(\tau)$, for purposes of matrix multiplication. For matrices, $M$, we define $(M_{\pm})_{i,j}=\max\{0,\pm M_{i,j}\}$.
To demonstrate, we have for $d=1$: 
$\net(\x)=\B{1}+\max_\mu \W{1}_+\mu(\B{0}+\W{0}\x) - \max_\tau \W{1}_-\tau(\B{0}+\W{0}\x)$. In fact, we may write this as a MinMax or a MaxMin formulation by substituting, $-\max_\tau=\min-\tau$, and factoring out the Max and Min in either order. Our primary tool to relate to Boolean formulations is the following.

\begin{restatable}{proposition}{PropOpEquiv} \label{prop:minmax2bool}
Let $f:\mathcal{A}\times\mathcal{B}\mapsto \bbR$. Then we have the following logical equivalence:
\begin{equation*}
\biggl[ \max_{\alpha\in\mathcal{A}} \min_{\beta\in\mathcal{B}} f(\alpha,\beta)\geq 0 \biggr] \Longleftrightarrow \bigvee_{\alpha\in\mathcal{A}} \bigwedge_{\beta\in\mathcal{B}} \biggl[ f(\alpha,\beta)\geq 0 \biggr]
\end{equation*}
\end{restatable}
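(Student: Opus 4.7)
The plan is to prove the biconditional directly, one direction at a time, treating the $\max$ and $\min$ as realized by explicit witnesses (which is valid since in the intended application $\mathcal{A},\mathcal{B}$ are sets of binary vectors and hence finite; the general case follows by the same argument with $\sup/\inf$ replacing $\max/\min$).

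For the forward direction, I would assume $\max_{\alpha} \min_{\beta} f(\alpha,\beta)\geq 0$ and let $\alpha^{*}\in\mathcal{A}$ attain the outer maximum. Then $\min_{\beta} f(\alpha^{*},\beta)\geq 0$, and by definition of the minimum, $f(\alpha^{*},\beta)\geq 0$ for every $\beta\in\mathcal{B}$. Hence the conjunction $\bigwedge_{\beta} [f(\alpha^{*},\beta)\geq 0]$ evaluates to True, which immediately makes the disjunction $\bigvee_{\alpha}\bigwedge_{\beta} [f(\alpha,\beta)\geq 0]$ True as well.

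For the reverse direction, I would assume the disjunction-of-conjunctions holds, so there exists some $\alpha^{*}\in\mathcal{A}$ with $f(\alpha^{*},\beta)\geq 0$ for all $\beta\in\mathcal{B}$. Taking the minimum over $\beta$ preserves the inequality, giving $\min_{\beta} f(\alpha^{*},\beta)\geq 0$, and then taking the max over $\alpha$ only increases the value: $\max_{\alpha}\min_{\beta} f(\alpha,\beta)\geq \min_{\beta} f(\alpha^{*},\beta)\geq 0$.

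The proposition is essentially a tautological restatement of how $\max/\min$ of real-valued functions interact with the logical quantifiers $\exists/\forall$ applied to the predicate $[\cdot \geq 0]$, so there is no real obstacle. The only mild subtlety worth flagging is that in full generality ($\mathcal{A},\mathcal{B}$ arbitrary) one should replace $\max,\min$ by $\sup,\inf$ and replace the witness $\alpha^{*}$ by a near-optimizer argument (for any $\epsilon>0$ pick $\alpha_{\epsilon}$ with $\inf_{\beta} f(\alpha_{\epsilon},\beta)\geq -\epsilon$ and send $\epsilon\to 0$), but for the intended use case with $\mathcal{A},\mathcal{B}\subseteq\{0,1\}^{n}$ the attained-witness version suffices.
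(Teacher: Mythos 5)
Your proof is correct and is essentially the paper's argument made explicit: the paper's own proof is the one-line observation that both sides are equivalent to $\exists\alpha\,\forall\beta\; f(\alpha,\beta)\geq 0$, which is precisely what your two witness-based directions establish. One caution on your closing aside: the claimed extension to arbitrary $\mathcal{A},\mathcal{B}$ with $\sup/\inf$ is false when the outer supremum is not attained --- for instance, if $\inf_{\beta} f(\alpha,\beta) = -1/\alpha$ for $\alpha\in\mathbb{N}$, then $\sup_{\alpha}\inf_{\beta} f = 0 \geq 0$ while no single $\alpha$ satisfies $f(\alpha,\beta)\geq 0$ for all $\beta$, and the $\epsilon$-near-optimizer argument only yields $f(\alpha_{\epsilon},\beta)\geq-\epsilon$ for each fixed $\epsilon$, which cannot be passed to the limit to produce a witness. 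Since the proposition as stated (and as used, over finite sets of binary vectors) involves attained max/min, your main argument stands; just drop or correct the general-case remark.
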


We classify network states, $\Sigbar=\SigbarP\cup\SigbarM$, in terms of the output sign,
$\Sigbar_{\pm}=\{\sigbar(\x)|\pm\net(\x)\geq 0\}$. We use $\SigbarO=\SigbarP\cap\SigbarM$ for those states at the boundary. For $J\subset[\depth]$, we define $\Sigbarl{J}$[$\SigbarOl{J}$] to be the projection of $\Sigbar$[$\SigbarO$] onto the coordinates indexed by $J$. As a shorthand, we understand the symbols $\mubar^{[k]}=(\mul{1},\ldots,\mul{k})$ and $\mubar=\mubar^{[d]}=(\mul{1},\ldots,\mul{d})$ to be equivalent in any context they appear together.



Define for every $\tau,\mu$, a linear function of $\x$, $\Recurse{1}(\mu,\tau,\x)=\B{1}+\W{1}_{+}\mul{1}(\B{0}+\W{0}\x) - \W{1}_{-}\taul{1}(\B{0}+\W{0}\x)$, 
called the "Net Operand". We have 
$[\net(x)\geq 0]\Leftrightarrow \vee_\mu\wedge\_\tau [\Recurse{1}(\mu,\tau,\x)\geq 0]$. To generalize to more layers, we can recursively define:
\begin{align*}
    &\Recurse{l+1}(\mubar^{[l+1]},\taubar^{[l+1]}, \x)=
    \W{l+1}_{+}\mul{l+1}\Recurse{l}(\mubar^{[l]},\taubar^{[l]}, \x)\\
    &\quad -\W{l+1}_{-}\taul{l+1}\Recurse{l}(\taubar^{[l]},\mubar^{[l]}, \x)
    +\B{l+1}.\\
\end{align*}
One can derive by substitution that $\DotOp{\sigbar(\x),\sigbar(\x)}{\x}=\net(\x)$. This choice of $\mubar=\taubar=\sigbar(\x)$ will always be a saddle point solution to Eqn $1$ in the following theorem.
\begin{restatable}{theorem}{CompositionalTheorem}\label{thm:hierarchical}
Let $\Op$ be the net operand for any fully-connected ReLU network, $\net$. 
Then,
\begin{align}
    \net(\x)&=\max_{\mul{d}}\min_{\taul{d}}\cdots
    \max_{\mul{1}}\min_{\taul{1}} 
    \DotOp{\mubar,\taubar}{\x}                  \label{eqn:net_minmax}
    \\
    \biggl[ \net(x)\geq 0 \biggr]&\Leftrightarrow \bigvee_{\mul{d}}\bigwedge_{\taul{d}} 
    \cdots \bigvee_{\mul{1}}\bigwedge_{\taul{1}} \biggl[ 
    \DotOp{\mubar,\taubar}{\x}\geq 0
    \biggr]  \label{eqn:net_orand}
\end{align}
\end{restatable}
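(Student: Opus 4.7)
The plan is to reduce the Boolean equivalence \eqref{eqn:net_orand} to the real-valued max--min identity \eqref{eqn:net_minmax} via Proposition~\ref{prop:minmax2bool}, and then prove \eqref{eqn:net_minmax} by induction on depth. Once \eqref{eqn:net_minmax} is established, repeatedly applying the proposition---one $\max/\min$ pair at a time, with $f$ chosen as the remaining inner expression---peels off each of the $d$ layers of quantifiers and yields \eqref{eqn:net_orand}. So the real work is concentrated in \eqref{eqn:net_minmax}.

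\textbf{Scalar lemma.} The workhorse for the inductive step is a coordinate-wise identity: for any row vector $w\in\bbR^{1\times n}$ (split $w=w_+-w_-$ with $w_\pm\ge 0$ entrywise) and any $v\in\bbR^n$,
\[
w\,\relu(v)\;=\;\max_{\mu\in\{0,1\}^n}w_+\mu v\;-\;\max_{\tau\in\{0,1\}^n}w_-\tau v\;=\;\max_\mu\min_\tau\bigl(w_+\mu v-w_-\tau v\bigr).
\]
Coordinate $j$ satisfies $w_j\relu(v_j)=(w_+)_j\max_{\mu_j}\mu_j v_j-(w_-)_j\max_{\tau_j}\tau_j v_j$ since exactly one of $(w_+)_j,(w_-)_j$ vanishes; the $\max/\min$ then commute past the coordinate sum because the binary variables are independent.

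\textbf{Strengthened inductive claim.} To carry induction through the internal layers I would strengthen \eqref{eqn:net_minmax} to: for every $l\le d$ and every pair of nonnegative row vectors $w,w'\in\bbR^{1\times\width{l+1}}$,
\[
\max_{\mul{l}}\min_{\taul{l}}\cdots\max_{\mul{1}}\min_{\taul{1}}\bigl[w\,\Opl{l}(\mubar^{[l]},\taubar^{[l]},x)-w'\,\Opl{l}(\taubar^{[l]},\mubar^{[l]},x)\bigr]\;=\;(w-w')\,\netl{l+1}(x).
\]
Equation~\eqref{eqn:net_minmax} is then the special case $l=d$, $w=1$, $w'=0$, because $\Opl{d}$ is scalar. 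For the inductive step I would unroll $\Opl{l+1}$ via its defining recursion, substitute into both summands above, and collect: the factor attached to $\mul{l+1}$ becomes $(w\W{l+1}_++w'\W{l+1}_-)\mul{l+1}$ (nonnegative, since all four quantities are), while the factor attached to $\taul{l+1}$ becomes $(w\W{l+1}_-+w'\W{l+1}_+)\taul{l+1}$ (also nonnegative). Applying the inductive hypothesis to the inner $(\mul{1},\taul{1},\dots,\mul{l},\taul{l})$ with these updated $\widetilde w,\widetilde w'$, and then the scalar lemma to the remaining $(\mul{l+1},\taul{l+1})$ pair, produces $(w-w')(\W{l+1}_+-\W{l+1}_-)\relu(\netl{l+1}(x))=(w-w')\W{l+1}\relu(\netl{l+1}(x))$, which combines with the bias to give $(w-w')\netl{l+2}(x)$. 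The base case $l=1$ is this same calculation on the explicit expansion of $\Opl{1}$.

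\textbf{Main obstacle.} The delicate point is the sign-bookkeeping in the recursion step: the collection above works precisely because the swap $\mubar\leftrightarrow\taubar$ in the second summand of the $\Opl{l+1}$ recursion is exactly what keeps the effective coefficients of both $\mul{l+1}$ and $\taul{l+1}$ nonnegative after the subtraction. Without this swap the coefficients would have mixed signs, the outer $\max/\min$ would fail to decouple across the independent binary variables, and the inductive hypothesis would no longer apply. Once this structural alignment is recognized, the rest reduces to the telescoping identity $(w-w')(\W{l+1}_+-\W{l+1}_-)=(w-w')\W{l+1}$, which collapses the positive--negative decomposition back into the original weight matrix and lets the recursion close.
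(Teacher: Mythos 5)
Your proof is correct and shares the paper's overall strategy (prove the max--min identity \eqref{eqn:net_minmax} by induction on depth, then peel off quantifier pairs with Proposition~\ref{prop:minmax2bool} to get \eqref{eqn:net_orand}), but the way you organize the induction is genuinely different in its key step. The paper keeps the statement unstrengthened: it writes $\netl{k+1}=\B{k+1}+\max_{\mul{k+1}}\min_{\taul{k+1}}(\W{k+1}_{+}\mul{k+1}-\W{k+1}_{-}\taul{k+1})\netl{k}(\x)$, substitutes the (vector-valued, coordinate-wise) max--min representation of $\netl{k}$, pushes the nonnegative matrices through the nested chain, converts the negated $\max\min$ chain to $\min\max$ and renames dummy variables to realize the $\mu\leftrightarrow\tau$ swap, and finally merges the two nested chains into a single one. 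You instead strengthen the inductive hypothesis to the two-weight form $\max_{\mul{l}}\min_{\taul{l}}\cdots\max_{\mul{1}}\min_{\taul{1}}[w\,\Opl{l}(\mubar^{[l]},\taubar^{[l]},\x)-w'\,\Opl{l}(\taubar^{[l]},\mubar^{[l]},\x)]$ for arbitrary nonnegative row vectors $w,w'$, so that each inductive step is just the recursion \eqref{eqn:def_recurse}, a regrouping into the new nonnegative weights $w\W{l+1}_{+}+w'\W{l+1}_{-}$ and $w\W{l+1}_{-}+w'\W{l+1}_{+}$, the hypothesis applied at fixed $(\mul{l+1},\taul{l+1})$, and a coordinate-wise scalar lemma giving $(w-w')\W{l+1}\relu(\cdot)$. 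What this buys is rigor exactly where the paper is informal: you never need to interpret $\max/\min$ of vector-valued expressions, distribute a max--min over a sum of two nested chains, or argue that the optima of the two chains are attained simultaneously --- the swap $\mubar\leftrightarrow\taubar$ is absorbed into the hypothesis itself, and you correctly identify it as the reason both effective coefficients stay nonnegative. Two trivial remarks: your $\netl{l+1}$ on the right-hand side is shifted by one relative to the appendix convention ($\Opl{l}$ at the saddle point equals $\netl{l}$ there, and $\netl{d}=\net$), which is harmless given the paper's own main-text/appendix indexing mismatch; and when you invoke the scalar lemma in the inductive step the two nonnegative vectors need not have disjoint supports, so you are really using the slightly more general statement $\max_{\mu}\min_{\tau}(a\mu v-b\tau v)=(a)\relu(v)-(b)\relu(v)$ for arbitrary $a,b\geq 0$, which your decoupling argument already proves.
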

Notice that we can derive the second line (2) from the first (1) by recursive application of Proposition \ref{prop:minmax2bool}. 
Since we index over all binary states, the number of terms in our decomposition (Eqn 2) is extremely large. 
Though (it turns out) we may simply matters considerably by indexing instead over network states, $\Sigbar$. The next Theorem says that when the right hand side(RHS) of Eqn. $1$ is indexed by only those states realized at the decision boundary, $\SigbarO$, the RHS still agrees with $\net(x)$ in \textit{sign}, but necessarily numerical value. Thus they are equivalent classifiers. 

\begin{restatable}{theorem}{BoundaryTheorem}
\label{thm:sig_bdry}
Let $\net$ be a fully-connected ReLU network with net operand, $\Op$, and boundary states, $\SigbarO$. Then,
\begin{align}
&[\net(x)\geq 0]\Leftrightarrow
    \bigvee_{\mul{d}\in\SigbarOl{d}}
    \bigwedge_{\taul{d}\in\SigbarOl{d}}
    \bigvee_{\{\mul{d-1}|(\mul{d-1},\mul{d})\in\SigbarOl{d-1,d}\}}
    \cdots\nonumber\\
&\quad
    \bigvee_{\{\mul{1}|\mubar\in\SigbarO\}}
    \bigwedge_{\{\taul{1}|\taubar\in\SigbarO\}}
    \biggl[
    \DotOp{\mubar,\taubar}{\x}\geq 0
    \biggr]
\end{align}\label{eqn:EquivSigO}
\end{restatable}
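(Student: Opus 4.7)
The plan is to deduce Theorem~3 from Theorem~2 by showing that restricting the nested max-min of the operand to $\bar\mu,\bar\tau\in\SigbarO$ preserves the sign of $\net(\x)$. First, an iterated application of Proposition~\ref{prop:minmax2bool} converts the proposed nested Boolean formula into the equivalent claim
\[
\mathrm{sign}(\net(\x)) \;=\; \mathrm{sign}\Bigl(\max_{\bar\mu\in\SigbarO}\min_{\bar\tau\in\SigbarO} \DotOp{\bar\mu,\bar\tau}{\x}\Bigr),
\]
with the outer-to-inner layer ordering and the layer-wise consistency constraints on $\bar\mu,\bar\tau$ built in (they just enforce the chains to be projections of full states in $\SigbarO$). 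So it suffices to establish this sign identity pointwise.

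The easy case uses the saddle-point identity $\DotOp{\bar\sigma(\x),\bar\sigma(\x)}{\x}=\net(\x)$, combined with the one-sided monotonicity already used in Theorem~2: because each $\W{l}_+$ is entry-wise non-negative, flipping a coordinate of $\mu^{(l)}$ away from $\sigma^{(l)}(\x)$ can only decrease the operand, and symmetrically $\bar\tau=\bar\sigma(\x)$ is the true inner minimizer. Whenever $\bar\sigma(\x)\in\SigbarO$, this unrestricted saddle is already feasible in the restricted game, so the restricted max-min equals $\net(\x)$ and the sign equality is immediate.

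The main obstacle is $\x$ with $\bar\sigma(\x)\notin\SigbarO$, that is, $\x$ strictly inside a linear cell that never meets the decision boundary; on such a cell $\net$ has constant nonzero sign and the natural saddle point is infeasible. My approach is geometric: the cells with states in $\SigbarO$ form a ``collar'' between $\SigbarP$-cells and $\SigbarM$-cells, because any two adjacent cells with opposite signs of $\net$ must share a facet where some neuron's preactivation vanishes, and that facet lies in a cell whose state is in $\SigbarO$. For $\net(\x)>0$, I would pick a straight line from $\x$ to any point with $\net<0$ and let $\bar s^\star\in\SigbarO$ be the state of the cell traversed just before the line crosses the decision boundary. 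Setting $\bar\mu=\bar s^\star$ in the outer OR, $\DotOp{\bar s^\star,\bar s^\star}{\cdot}$ is the affine extension of $\net$ from $\mathrm{cell}(\bar s^\star)$ to all of $\bbR^{\width{0}}$, whose zero hyperplane contains the decision-boundary piece inside that cell; since $\x$ lies on the positive side of this hyperplane, the extension is non-negative at $\x$.

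The hardest step is certifying $\min_{\bar\tau\in\SigbarO}\DotOp{\bar s^\star,\bar\tau}{\x}\geq 0$ for this $\bar s^\star$, since saddle monotonicity does not deliver this directly once $\bar\sigma(\x)\notin\SigbarO$. I expect to close this via induction on depth $d$ together with a symmetric collar argument on the $\bar\tau$-side: any $\bar\tau\in\SigbarO$ that would otherwise drive the value negative can be replaced, without decreasing the operand, by a $\bar\tau'\in\SigbarO$ lying in a cell closer to the decision boundary along an $\net$-sign-preserving path, reducing the inner optimum to $\DotOp{\bar s^\star,\bar s^\star}{\x}\geq 0$. The case $\net(\x)<0$ is handled by the mirror argument from the adversary's side, completing Equation~(4).
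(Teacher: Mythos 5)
Your outer-player step is sound and appealing: choosing $\bar{s}^\star\in\SigbarO$ as the state of the positive cell met just before a segment from $\x$ crosses the decision boundary, the function $t\mapsto \DotOp{\bar{s}^\star,\bar{s}^\star}{\gamma(t)}$ is affine, equals $\net>0$ inside that cell and vanishes at the crossing, hence is positive back at $\x$ (modulo a convention check that this state really lies in $\SigbarO$ when many neurons flip exactly at the crossing point). The easy case $\sigbar(\x)\in\SigbarO$ is also fine via Lemma \ref{lem:fund_operand}. But the step you yourself flag as hardest --- certifying $\min_{\taubar\in\SigbarO}\DotOp{\bar{s}^\star,\taubar}{\x}\geq 0$ --- is exactly the content of the theorem, and your plan for it does not work as described. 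Saying that a $\taubar$ which ``would drive the value negative can be replaced, without decreasing the operand, by a $\taubar'$ closer to the boundary'' is logically backwards: to lower-bound a minimum you must bound the operand at the \emph{bad} $\taubar$, and replacing it by a friendlier $\taubar'$ with a value at least as large says nothing about the value you discarded. Nor does the one-sided saddle inequality help here, since $\DotOp{\bar{s}^\star,\taubar}{\cdot}\geq\net\geq 0$ only on the cell of $\bar{s}^\star$, which gives no sign control at $\x$ outside that cell (the affine function could be increasing along your segment). The paper closes precisely this gap with a different engine: after reducing to the flat formulations via Proposition \ref{prop:SigPM} (max over $\SigbarP$, min over $\SigbarM$), it fixes a non-boundary positive state, partitions the input into the polyhedral pieces $K^{\muplus}_{\tauminus}$ on which a given $\tauminus\in\SigbarM$ attains the inner minimum, and applies Farkas' Lemma to the vectors $\DotV{\mubar}{\tauminus}$ to show the maximum can be achieved, in sign, by states in $\SigbarO$; no explicit witness like your $\bar{s}^\star$ is constructed, and it is not clear your single segment-based witness is even always valid against every $\taubar\in\SigbarO$.

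A secondary gap: your opening reduction asserts that the nested, layer-wise constrained Boolean formula is equivalent to the sign of the flat $\max_{\mubar\in\SigbarO}\min_{\taubar\in\SigbarO}\DotOp{\mubar,\taubar}{\x}$. Proposition \ref{prop:minmax2bool} only converts the nested formula to a nested, order-dependent MaxMin whose value lies \emph{between} the flat max-min and the flat min-max; it need not equal either. The paper handles this by a sandwich (the operator-shuffling chain used for Theorem \ref{thm:swap_minmax}), which forces two separate one-sided arguments: flat max-min $\geq 0$ when $\net(\x)\geq 0$, and flat min-max $<0$ when $\net(\x)<0$ (your ``mirror'' case is the right instinct for the latter). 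So your overall skeleton can be repaired to match the paper's, but as written the proposal is missing the duality (Farkas) argument that actually proves the inner-minimization bound, and that omission is fatal rather than cosmetic.
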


The proofs for both Theorems \ref{thm:hierarchical} and \ref{thm:sig_bdry} can be found in the Appendix \ref{sec:proofs}. 
We also include explicit pseudocode, "Network Tree Algorithm" \ref{alg:op_tree} (in Appendix \ref{app:algorithms}) for constructing our Logical Circuit from $\SigbarO$. Somehow, we find the actual python implementation more readable, which we have included in the supplemental named "network\_tree\_decomposition.py". We use this file to generate the readout in Figure \ref{fig:es-d} (Appendix \ref{sec:exp-support}) to provide tangible,  experimental support for the validity of our conversion algorithm.

\def\GeneralizationSection{Formalizing Capacity for Logical Circuits}
\subsection{\GeneralizationSection{}}
\label{sec:gen_bound}

We repurpose the following theorem used by \citep{Bartlett2017b} for ReLU networks data-independent VC dimension bounds. 

\begin{restatable}{theorem}{Jerrum} (Theorem 17 in \citep{Goldberg1995a}):
Let $k$,$n$ be positive integers and $f:\bbR^n\times\bbR^k\mapsto\{0,1\}$ be a function that can be expressed as a Boolean formula containing $s$ distinct atomic predicates where each atomic predicate is a polynomial inequality or equality in $k+n$ variables of degree at most $d$. Let $\mathcal{F}=\{f(\cdot,w):w\in\bbR^k\}$. Then $\VCvap(\mathcal{F})\leq 2k\log_2(8eds)$.
\label{thm:Jerrum}
\end{restatable}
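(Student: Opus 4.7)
The plan is to reduce shattering to a sign-pattern counting problem and then invoke a Warren-type bound. Suppose $\mathcal{F}$ shatters a set of $m$ points $x_1,\ldots,x_m \in \bbR^n$; the aim is to upper bound $m$. I would fix these points and view $w\in\bbR^k$ as the variable. Each of the $s$ atomic predicates, instantiated at each $x_i$, becomes a polynomial $p_{i,j}(w)$ on $\bbR^k$ of degree at most $d$, yielding $N=ms$ such polynomials in $k$ variables of degree $\leq d$. Because $f(x_i,\cdot)$ is the same fixed Boolean combination applied to these $s$ predicates, the label vector $(f(x_1,w),\ldots,f(x_m,w)) \in \{0,1\}^m$ is determined entirely by the sign pattern $(\mathrm{sign}(p_{i,j}(w)))_{i\in[m], j\in[s]}$.

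The next step is to quote a classical sign-pattern theorem (e.g.\ Warren 1968, Milnor--Thom, or the refinement used in \citep{Goldberg1995a}): the number of distinct sign vectors in $\{-,0,+\}^N$ achievable by $N$ real polynomials in $k$ variables of degree at most $d$ is at most $(O(edN/k))^k$, with the constant being $8e$ in the version needed here. Since shattering requires $2^m$ distinct labelings, this yields the key inequality
\begin{equation*}
2^m \;\leq\; \left(\frac{8ed\,ms}{k}\right)^{k}.
\end{equation*}

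The final step is algebraic: extract $m \leq 2k\log_2(8eds)$ from this self-referential inequality. Taking $\log_2$ gives $m \leq k\log_2(8eds) + k\log_2(m/k)$. I would then verify that the ansatz $m^\star = 2k\log_2(8eds)$ satisfies the inequality while any $m > m^\star$ violates it; the check reduces to observing that $2\log_2(8eds) - \log_2(8eds) = \log_2(8eds)$ dominates the doubly-logarithmic residual $\log_2(2\log_2(8eds))$ for any non-trivial parameter regime. This pins down the stated bound.

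The substantive content lives entirely in the sign-pattern theorem; that is the "hard" input and I would use it as a black box. The only real care points in the proof are (i) remembering that atomic predicates may be equalities as well as strict inequalities, so "sign pattern" must be taken over $\{-,0,+\}$ rather than $\{-,+\}$ (this is why the Warren variant with constant $8e$, not $4e$, is the right one to cite), and (ii) handling the transcendental inequality cleanly enough to land exactly the constants $8$ and $e$ inside the logarithm rather than weaker ones. Both are bookkeeping rather than mathematical obstacles.
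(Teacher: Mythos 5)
This theorem is quoted by the paper directly from \citep{Goldberg1995a} (their Theorem 17) with no proof given in the paper itself, and your sketch reconstructs essentially the argument used in that source: fix a putatively shattered set, view the $ms$ instantiated atomic predicates as degree-$d$ polynomials in the $k$ weight variables, bound the number of realizable sign patterns over $\{-,0,+\}$ by a Warren/Milnor--Thom count (the $\epsilon$-perturbation treatment of equalities is exactly what turns Warren's $4e$ into the $8e$ here), and then solve $2^m\leq(8edms/k)^k$ to get $m\leq 2k\log_2(8eds)$. The only points you gloss over are pure bookkeeping and not gaps: the degenerate case $ms<k$ (equivalently $m\leq k$), where the sign-pattern bound is not applicable but the claimed bound holds trivially because $\log_2(8eds)>1$, and the monotonicity of $m-k\log_2(m/k)$ needed to exclude every $m>2k\log_2(8eds)$ rather than only the boundary value.
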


\def\ksdBool{(k,s,d)-Formula}
As a short hand, we refer to any Boolean formula satisfying the premises in the above theorem as class $(k,s,d)$. If we consider (for fixed $\SigbarO$) the complexity of learning the weights defining the linear maps in Eqn. \ref{eqn:EquivSigO}, Jerrum's Theorem tells us that we are primarily concerned with the number of parameters being learned. Fortunately, we only pay a learning penalty for those weights distinguishable by neuron activations in $\SigbarO$. For example, within the same layer, a single neuron is sufficient model any collection of neurons which are always "on" or "off" simultaneously at the decision boundary. In general, we can restrict to a subset of $r_l=rank(\SigbarOl{l})$ representative neurons without sacrificing expressivity at the boundary. 
We can additionally delete entire layers when $r_l=1$. 
We use $\bar{r}\triangleq\defrankbar$ to group the dimensions of the reduced architecture into a single vector. Note that when $\net$ is a linear classifier, then $\bar{r}$ is a vector of all $1$s. $r_l=1$ at every layer. 

Finally, we define
$\Phi(\net):\bbR^k\times\bbR^{\width{0}}$ to be the Boolean function in Eqn \ref{eqn:EquivSigO} corresponding to the \textit{reduced} network, whose depth we also overload as $d$, and take $k$ to be the number of parameters on which the formula depends. The formula has $s=|\SigbarO|^2$ inequalities. The explicit calculations for determining $k,s,d,\bar{r}$ are described Function $MinimalDescrip$ in Algorithm \ref{alg:gen_bound} in Appendix \ref{app:algorithms}. The following is automatic given the discussion so far.

\def\HypBool{\mathcal{H}_{\Phi(\net)}}
\def\defHypBool{\{ x\mapsto\Phi(\net)(w,x) | w\in\bbR^k\}}
\begin{restatable}{theorem}{VCBoolTheorem}
\label{thm:vcbool}
Let $\net:\bbR^{\width{0}}\mapsto\bbR$ be a fully-connected ReLU network. Suppose the Boolean formula, $\Phi(\net)$, is of class $(k,s,d)$. Define the hypothesis class $\HypBool\triangleq\defHypBool{}$. Then 
\begin{enumerate}
    \item $x\mapsto[\net(x)\geq 0]\in\HypBool{}$
    \item $\VCvap(\HypBool{})\leq 2k\log_2(8esd)$
\end{enumerate}
\end{restatable}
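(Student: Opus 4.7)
The statement is nearly immediate from Theorems~\ref{thm:sig_bdry} and~\ref{thm:Jerrum}; the plan is to string the two together and to justify the reduction underlying the definition of $k$.

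For part (1), I would exhibit a single weight vector $w^{*}\in\bbR^{k}$ with $\Phi(\net)(w^{*},\cdot)=[\net(\cdot)\geq 0]$. The natural choice is to take $w^{*}$ to be the parameters of the reduced-width network produced by $MinimalDescrip$, which keeps $\rankl{l}=\mathrm{rank}(\SigbarOl{l})$ representative neurons in each hidden layer and excises layers with $\rankl{l}=1$. Plugging $w^{*}$ into $\Phi(\net)$ recovers exactly the right-hand side of Eqn.~\ref{eqn:EquivSigO}, so Theorem~\ref{thm:sig_bdry} then gives pointwise agreement of the two classifiers. The subtlety is checking that the reduction preserves the boundary classifier in the first place: this holds because $\Op(\mubar,\taubar,x)$ depends on the layer states only through their projections onto $\SigbarOl{l}$, so collapsing neurons that are indistinguishable on $\SigbarOl{l}$, or deleting a layer in which $\rankl{l}=1$ (a constant activation pattern on the boundary), cannot alter $\Op$ for any $\sigbar\in\SigbarO$.

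For part (2), the class $\HypBool{}=\defHypBool{}$ has precisely the form required by Theorem~\ref{thm:Jerrum}: it is indexed by $k$ real parameters $w\in\bbR^{k}$, and membership of an input $x$ is decided by a fixed Boolean combination of $s=|\SigbarO|^{2}$ atomic predicates, each a polynomial (in)equality in the $k+\width{0}$ variables $(w,x)$ of total degree at most $d$. Hence Jerrum's bound applies verbatim and gives $\VCvap(\HypBool{})\leq 2k\log_{2}(8esd)$.

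The only step requiring real care is the reduction argument in part (1); once that is in hand, the VC bound itself is a one-line citation. A secondary fact, which logically belongs to the definition of $\Phi(\net)$ rather than to this theorem, is that $\Op$ really is polynomial in $(w,x)$ of the claimed degree; that can be verified by a straightforward induction on depth from the recursion defining $\Op$, and underwrites the hypothesis that $\Phi(\net)$ is of class $(k,s,d)$.
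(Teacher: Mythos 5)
Your proposal is correct and follows essentially the same route as the paper, which treats Theorem~\ref{thm:vcbool} as immediate: part (1) from the sign-equivalence of Eqn.~\ref{eqn:EquivSigO} in Theorem~\ref{thm:sig_bdry} (instantiating $\Phi(\net)$ at the reduced network's parameters), and part (2) as a direct application of Theorem~\ref{thm:Jerrum} to the class $(k,s,d)$ formula. Your extra care about why collapsing boundary-indistinguishable neurons and deleting rank-one layers preserves the boundary classifier is a detail the paper only sketches informally, and it is a welcome addition rather than a deviation.
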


Of course, this bound only applies to the learned DNN if the hypothesis class $\HypBool{}$ is implied in advance. To address (informally) the capacity for a single classifier, $\net$, we define $\VCbool(\net)\triangleq \defVCBool{}$ to be the complexity of  learning the parameters of the $(k,s,d)-$Boolean formula representing $\net$. This is an upper bound for the smallest complexity over formulae $\Phi$ and classes $\HypBool{}$ containing $\net\geq 0$ as a member. In Figure \ref{fig:gen_bound}, we train $\net$ to classify samples in \Dataiii{} and compare qualitatively 
our capacity measure, $\VCbool(\net)$, with those of other well-known approaches as we vary the network size and training duration and depth. We compare with methods which appear at first glance to make use of additional information---that of scale, norm, and margin---which should in principle produce tighter bounds. 

And yet, we enjoy a comfortable edge over other comparable methods. Under all conditions, our bound seems to be orders of magnitude smaller than these other (well-respected) bounds. So, what is going on? In fact, it is \textit{our} bound that is advantaged by using more (between-layer) information!. 

We revisit the observation that a very deep DNN trained on linearly separable data is a linear classifier. We think that this simple characterization should somehow be accessible to our capacity measure through the weights. Linearly separable data represents, to us, the simplest, plausible, real-world proving ground for models of DNN generalization error. The methods with which we compare bound the distortion applied by each layer in terms of a corresponding weight matrix norm and accumulate the result. We should like our method to "realize" that the DNN classifier is linear, but this can not be discovered by scoring each layer. In fact, having an efficient Boolean representation is a \textit{global} property that is sensitive to the relative configuration of weights across all layers. It is not information that is contained in the weight norms used by other methods, which destroy weight-sign information, among other properties, on which linearity of the classifier depends. We would even suggest that our notion of regularity is "more nuanced" in the sense that whether a layer is well-behaved only makes sense to talk about within the context of the overall network.

Returning to Figure \ref{fig-b}, we observe that we our bound is relatively stable with respect to increasing architecture size and depth. This behavior is instructive in its distinction from that of uniform (data-independent) VC dimension bounds, $\VCind$, which depend on the architecture alone. That these bounds produce unreasonably large, vacuous bounds for over-parameterized models is widely known and often recited. 
Perhaps this notoriety 
has dissuaded combinatorial analyses of DNN complexity altogether. 
However, our results demonstrate 
that the vast majority of the bloat in these $\VCind$ bounds can be attributed to a lack of strong data assumptions and not to its combinatorial nature. When we compare against our own (also combinatorial) measure, $\VCbool$, in Table \ref{tab:comb_stat} we observe  that $\VCbool$ produces bounds that are orders of magnitude smaller. We account for this discrepancy as follow: While $\VCind$ yields weak bounds on generalization that always apply, $\VCbool$ instead produces strong bounds that apply only when the data is nice. These bounds are smaller because the set of DNNs achievable by gradient descent on nice data is much more regular, and of smaller VC dimension. We explore this comparison in more depth in Appendix \ref{sec:bartlett_compare}.

Lastly, we offer some perspectives connecting our generalization studies to building better models in the future. 
There are many descriptions of complexity for DNNs. What makes ours a "good" one? All are equally valid in the sense that \textit{every} one of them can prescribe some sufficiently strong regularity condition that will provably close the gap between training and test error. But, perhaps we should be more ambitious. We actually want to decrease model capacity \textit{while also} retaining the ability to fit those patterns "typical" of real world data. 
While this second property is critical, it is also completely unclear how to guarantee, even analyze, or even define unambiguously. 
We surmise that since our capacity measure $\VCbool{}$ seems empirically to be \textit{already} minimized when the data is sufficiently structured, we can hope (and plausibly hypothesize) that those patterns that can be learned efficiently by a function class where $\VCbool{}$ is controlled \textit{explicitly} will not differ from those suited to unregularized DNNs, where we expect the structured nature of real world data to \textit{implicitly} regulate $\VCbool{}$ already.



\def\fght{5cm}
\begin{figure*}
\begin{subfigure}[t]{0.625\textwidth}
\hspace{1cm}
\includegraphics[height=\fght]{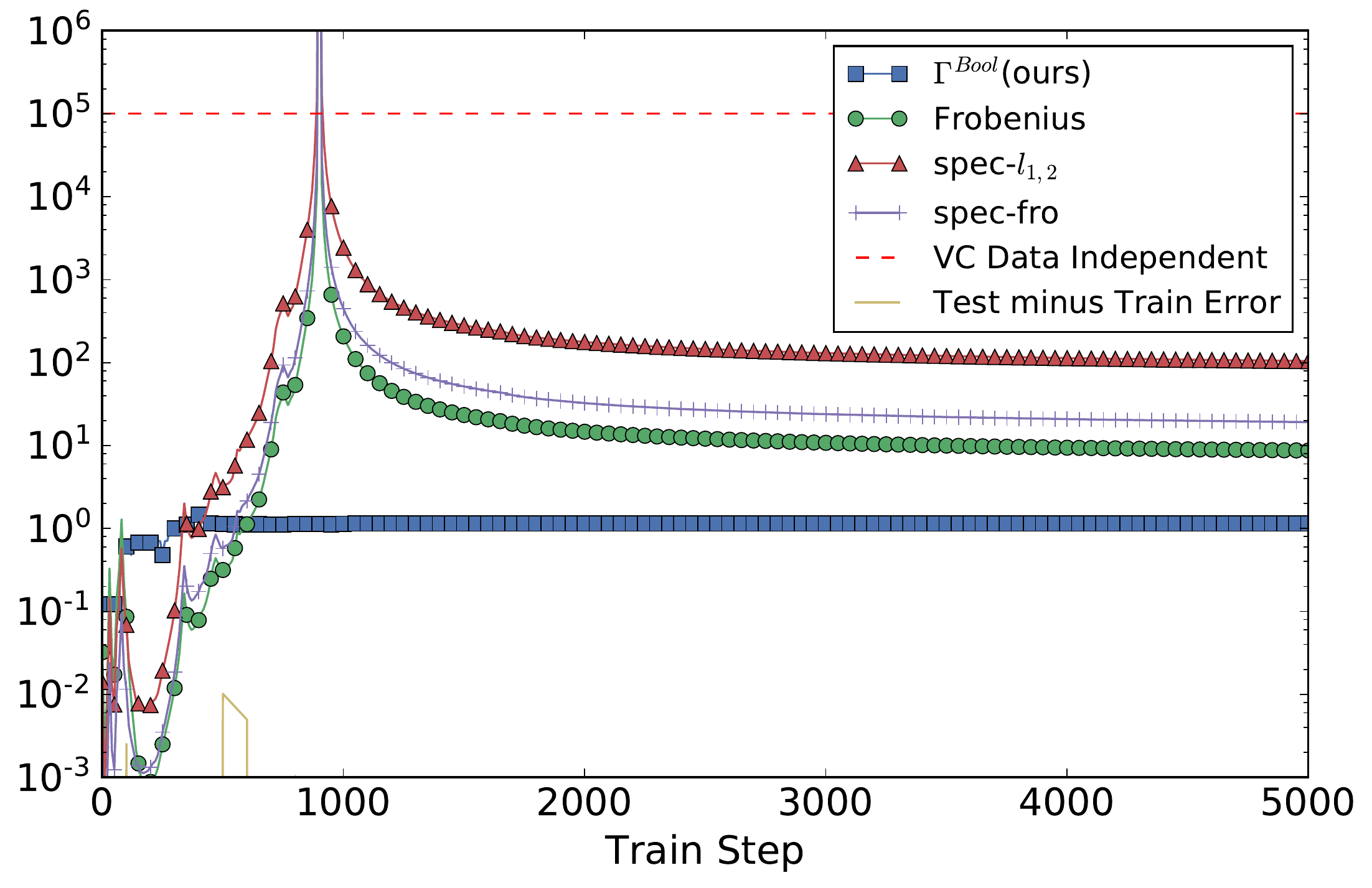}
\caption{Capacity vs Training Step}
\label{fig-a}
\end{subfigure}
\begin{subfigure}[t]{0.375\textwidth}
\hspace{0.4cm}
\includegraphics[height=\fght]{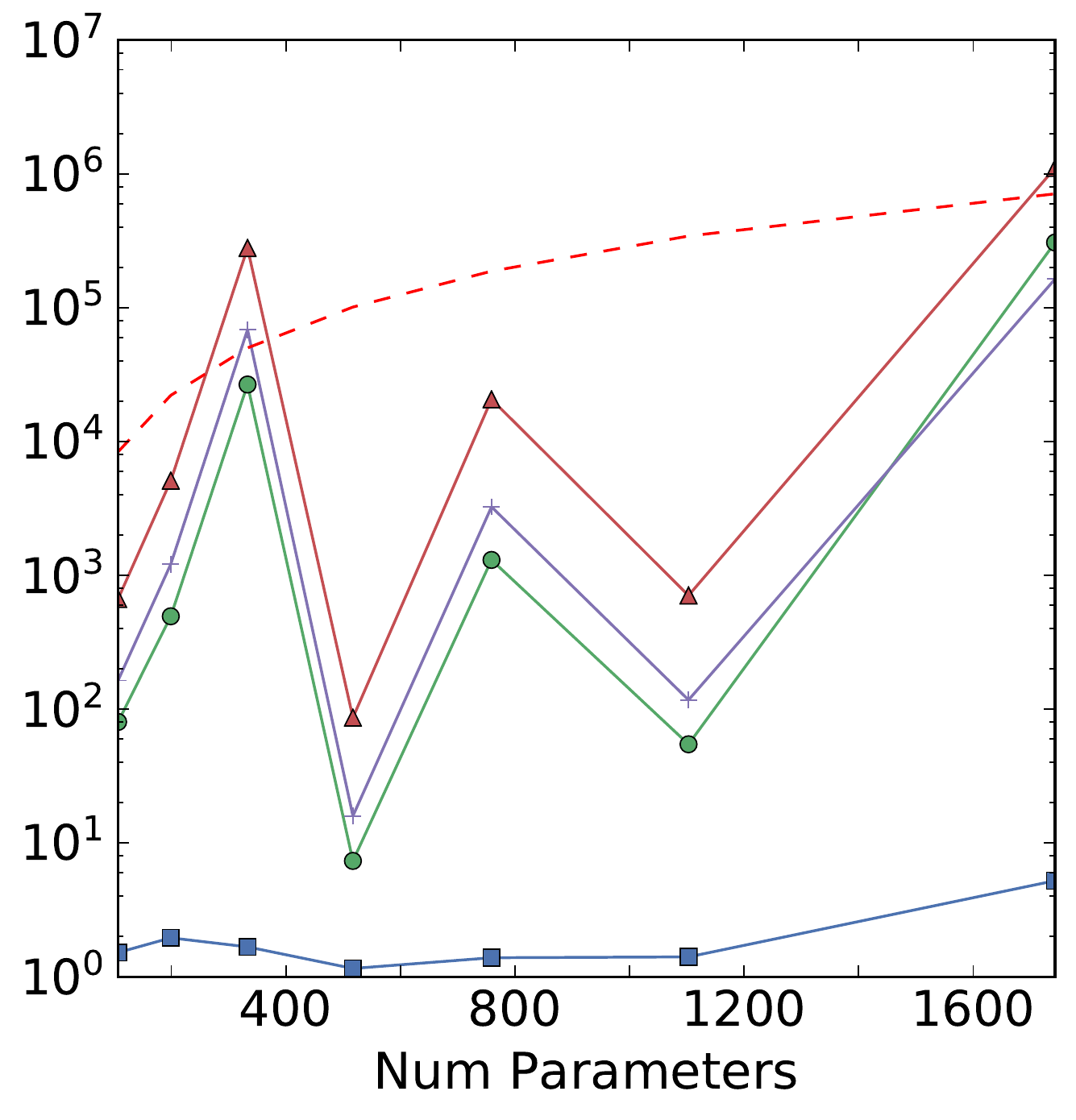}
\caption{Capacity vs Num Parameters}
\label{fig-b}
\end{subfigure}
\caption{
Qualitative comparisons of bounds on the generalization error for networks trained on {\Dataiii} during training (Fig. \ref{fig-a}) and as additional layers are added (Fig. \ref{fig-b}). Though our bound is in terms of VC dimension only, we compare favorably with other bounds that additionally use margin. 
Interestingly, the spike in capacity that occurs around $1000$ training steps is not reflected in our bound, but captured by others. Thus, our method may be blind to some interesting training dynamics, for example, a massive shift in the relative alignment of weight vectors that leaves the intersection system of neuron state boundaries unchanged. 
The empirical phenomenon of depth-invariant generalization error is consistent with the behavior of our bound (Fig. \ref{fig-b}). These trends are representative of all $9$ experiments (Figure \ref{fig:all_bounds}). 
\\
\\
\\
Frobenius: $\frac{1}{m}\frac{1}{\gamma^2}\prod_{l=1}^d\norm{\W{l}}_{F}^2$\citep{Neyshabur2015}\\
spec-$l_{1,2}$:$\frac{1}{m}\frac{1}{\gamma^2}\prod_{l=1}^d\norm{\W{l}}_{2}^2\sum_{l=1}^d \frac{\norm{\W{l}}_{1,2}^2}{\norm{\W{l}}_2^2}$\citep{Bartlett2017}\\
spec-fro: $\frac{1}{m}\frac{1}{\gamma^2}\prod_{l=1}^d\norm{\W{l}}_{2}^2\sum_{l=1}^d h_l \frac{\norm{\W{l}}_{F}^2}{\norm{\W{l}}_2^2}$\citep{Neyshaburb}\\ 
$\Gamma^{Bool}$(ours):$\min_{k,s,d} \sqrt{\frac{VC^{Bool}_{k,s,d}(\net)}{m}}$
  .}
  \label{fig:gen_bound}
\end{figure*}



\section{Related Work}
Our discussion of the role the data plays in generalization is perhaps most similar to 
\citet{Arpit2017}. 
Many authors have studied the number of linear regions of a DNN before, usually focusing on a $2$D slice or path through a the data
\citep{Serra2018,
Raghu,
Aroraa}
, optionally including study of how these regions change with training
\citep{Hanin,
Novak2018a} 
or an informal proxy for network "complexity"
\citep{Zhang2018,
Novak2018a}


Formal approaches to explain generalization of DNNs fall into either "direct" or "indirect" categories.
By direct, we mean that the bounds apply exactly to the trained learner, not to an approximation or stochastic counterpart. Ours falls under this category, so these are the bounds we compare to, including \citep{Neyshabur2015,Bartlett2017,Neyshaburb}, which we compare to in Fig. \ref{fig:gen_bound}.
While our approach relies on bounding possible training labelings (VCdim), these works all rely on having small enough weight norm compared to output margin.


Indirect approaches analyze either a compressed or stochastic version of the DNN function. For example, PAC-Bayes analysis \citep{McAllester1999} 
of neural networks 
\citep{Langford2002,Dziugaite2017}
produces uniform generalization bounds over distributions of classifiers which scale with the divergence from some prior over classifiers. 
Recently, 
\citet{Valle-Perez2018} 
produced promising such PAC-Bayes bounds, but they rely on an assumption that training samples the zero-error region uniformly, as well as some approximations of the prior marginal likelihood. Interestingly, they also touch on descriptional complexity in the appendix, which is thematically similar to our approach, but do not seem to have an algorithm to produce such a description. 
Another popular approach is to study a DNN through its compression 
\citep{Arora2018}
\citep{Zhou2018}
. Unlike our approach, which studies an equivalent classifier, these bounds apply only to the compressed version.

\section{Conclusions}
The motivation for our investigation was to describe regularity from the viewpoint of "monotonicity". Suppose that during training, the activations of a neuron in a lower layer separate the training data. 
While the specifics of gradient descent can be messy, there is no "reason" to learn anything other than a monotonic relationship (as we move in the input space) between the activations of that neuron, intermediate neurons in later layers, and the output. Two neurons related in this manner necessarily share discrete information about their state. The same is true of any tuple whose corresponding set of NSBs have empty intersection. We showed that NSBs adopt non-intersecting, onion-like structures, implying that very few measurements of network state are sufficient to determine the output label with a linear classifier. The "reason" $\VCind$ produces such pessimistic bounds is because in the worst case, every binary value of $\sigbar(x)$ is required to determine $\net(x)\geq 0$ up to linear classifier. We expect structure in the data to reduce capacity by excluding these worst cases. For linearly separable data, the the learned DNN classifier depends on \textit{no entry} of $\sigbar(x)$.


As a result, we have produced a powerful method for analyzing, interpreting, and improving DNNs. A deep network is a black box model for learning, but it need not be treated as such by those who study it. Our logical circuit formulation requires no assumptions and seems extremely promising for introspection and discussion of DNNs in many applications.



Whether our approach can be extended or adapted to other datasets is an pressing question for future research. An important and particularly difficult open question (precluding such an investigation presently) is the efficient determination of $\SigbarO$ (or even $\Sigbar$) analytically given the network weights. Such an algorithm seems prerequisite to bring deep logical circuit analysis to bear on datasets of higher dimension where we can no longer grid search. 

\section{Acknowledgements}
This work was supported by NSF under grants 1731754 and 1559997.

\newpage
\small
\bibliography{MonotonicityPaper}

\newpage


\onecolumn 
\section{Appendix}
\subsection{A Comparison of VC Dimension Bounds: Why Can We Get Away with Less?}
\label{sec:bartlett_compare}
The generalization bounds we presented can be extremely small, despite thematically similar to traditional VC dimension uniform bounds, denoted $\VCind$), which by contrast produce some of the largest bounds. 
That these measures differ by orders of magnitude (refer again to Table \ref{tab:comb_stat}) becomes more notable still once one realizes that "under the hood" they apply exactly the same theoretical machinery. Our workhorse theorem \ref{thm:Jerrum} can also be applied directly to derive tight bounds on $\VCind$ as in \citet{Bartlett2017b}. We reproduce it below for convenience.

\Jerrum*

In this section, we dissect the proof of the data independent VC dimension bound to understand more concretely what allows ours to be so much smaller. To rephrase, we want to understand through what mechanism the data-independent VC bound could potentially be improved if allowed additional assumptions on $\net$ of the form supported by our empirical observations of networks training on nice data. We will show that the notorious numerical bloat characteristic of $\VCind$ bounds can be  localized to a particular proof step, and that this step can be greatly accelerated when the set system of neuron state boundaries organized to minimize intersections. 

In addition to Figure \ref{fig:unused_capacity} and the surrounding discussion, we also provide in the Appendix a more complete catalogue of similar experiments in Figure \ref{fig:neuron_states}, which may be a useful reference for the following discussion.

We will summarize briefly Theorem $8$ in \citet{Bartlett2017b}. The theorem orders the neurons in the network so that every neuron in layer $l$ comes before every neuron in layer $l+1$, with the output neuron last, whose state is the label by convention.
Moving from beginning to end of this list of neurons, one asks how many additional polynomial
\footnote{With respect to the input, these are all linear inequalities. But, the weights parameterizing these linear functionals are composed by multiplication, so they are polynomial of degree bounded by the depth.} 
queries must be answered to determine the state of the ${i+1}^{\text{th}}$ neuron given the state of the first $1,\ldots,i$. 
The total resources in terms of the number of parameters $k$, polynomial inequalities $s$, and polynomial degree $d$ required to determine all the states is plugged into the VC bound in Theorem \ref{thm:Jerrum}.

Perhaps much of the gap is attributable to neuron state coupling. Beyond the fact that many neurons are simply always on or always off, those with a nonempty neuron state boundary (NSB) share \textit{a lot} of state information. 
A subset of $k$ neurons whose NSBs have empty intersection share state information, as not all $2^k$ states are possible. 
When they form parallel structures, as in Figure \ref{fig:neuron_states}, often the state of one will imply immediately the state of the other. 
Furthermore, we can infer by analyzing our Boolean formula (Eqn \ref{thm:sig_bdry}) that to determine the output label, we \textit{do not need to determine} the state of those neurons whose NSB does not intersect the DB. To  In our experiments, it is rare for even two such neuron boundaries to cross, and all seem repulsed from the decision boundary, implying that gradient descent has strong \textit{combinatorial} regularizing properties.

Note that strong coupling of neuron states in a single hidden layer does \textit{not} require linear dependence of the corresponding weight matrix rows: strong coupling is possible even for weight vectors in (linear) general position. For example, consider that for ReLU, the activations from the previous layer are always in the (closed) positive orthant, $\bbR_+^n$. General (linear) dependence means there is no nonzero linear combination of weight vectors reaching the origin,$\{0\}$,
which it will be instructive to consider as the dual of the entire space, $\{0\}=(\bbR^n)^*$. We propose that a more suitable notion of "general position" is instead the absence of nonzero linear combination of weights reaching the dual of some set containing the activation-image of the input. For simplicity, consider the positive orthant, $\bbR_+^n=(\bbR_+^n)^*$. Our new notion of dependence coincides with neuron state dependence in the next hidden layer. For example, suppose for $\gamma_i>0, \eta\in\bbR_+^n\setminus\{0\}$, we have $\gamma_1w_1+\gamma_2w_2-\gamma_3w_3-\gamma_4w_4=\eta$. Then because $\forall x\in\bbR_+^n$, we have $\eta^Tx>0$, we cannot simultaneously have $w_1^Tx,w_2^Tx<0$ while both $w_3^Tx,w_4^Tx>0$. Conversely if for all $x\in\bbR^n$, 
the states $w_1^Tx,w_2^Tx<0$ and $w_3^Tx>0$ imply $w_4^Tx>0$, then apply Farkas' lemma to get a positive linear combination of $-w_1,-w_2,w_3,w_4$ in $\bbR_+^n$.

These observations about dependencies between these neuron states illustrate nicely how the combinatorial capacity of networks trained on structured data diverges from the worst case theoretical analysis. Rather than handle one neuron at a time, we found it more useful to shift from a neuron-level to a network-level analysis by introducing network states $\sigbar(x)$, which we found to be significantly cleaner for interpretation and generalization bounds.

\begin{table*}[!htb]
  \caption{Measures of Combinatorial Capacity:}
  \label{tab:comb_stat}
  \centering
  \begin{tabular}{lllll}
    \toprule
    \multicolumn{2}{c}{} & \multicolumn{3}{c}{$\VCbool(\net)$($|\SigbarO|)[|\Sigbar|$]}\\
    \cmidrule(r){3-5}
    Architecture & $\VCind(\text{Arch})$   & {\Datai}     & {\Dataii} & {\Dataiii} \\
    \midrule
    {\Archi} & 8,400   &  18(1)[36]  & 380(10)[121]   & 1870(38)[247]      \\
    {\Archii} & 101,000 &  74(2)[80]  & 252(8)[166]   & 806(19)[507]    \\
    {\Archiii} & 710,000 &  74(2)[167]  & 335(6)[622]   & 22,000(144)[2884]   \\
    \bottomrule
  \end{tabular}
\end{table*}
\subsection{Further Discussion on Simple Observations}

\def\insertscale{.05}
\def\inslocx{05}
\def\inslocy{60}


\begin{figure}[!htb]
\includegraphics[width=\textwidth]{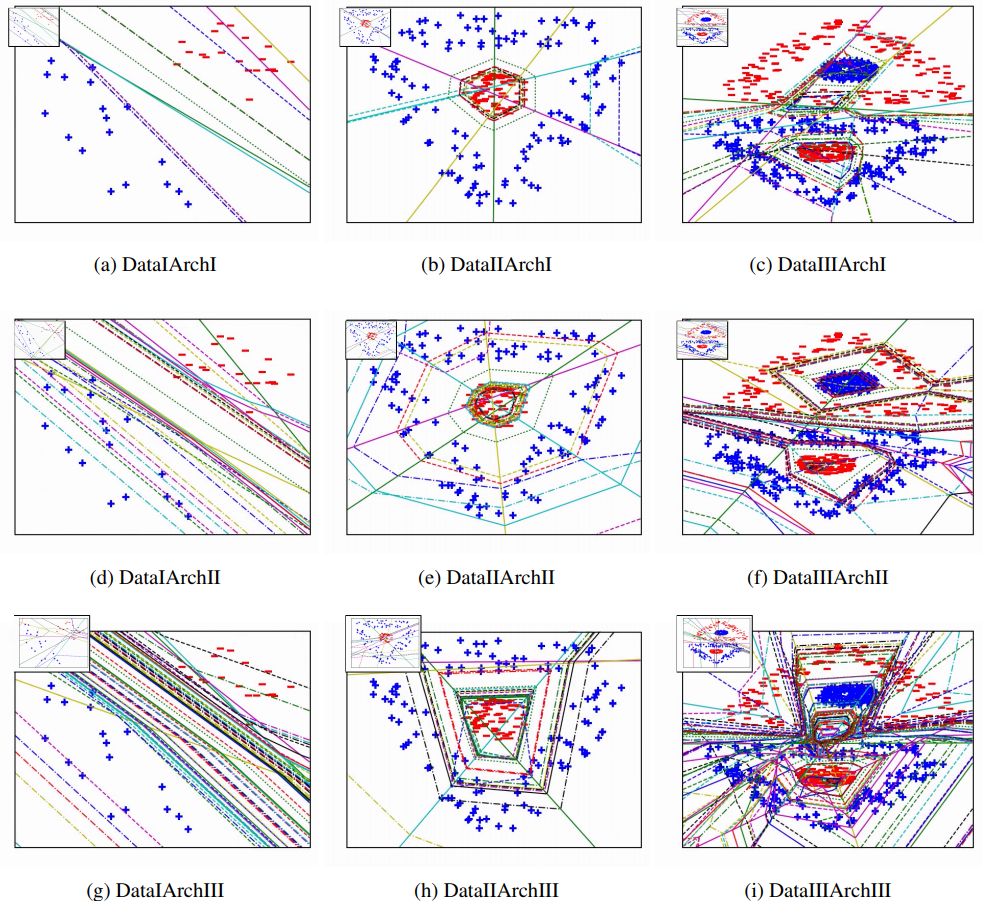}
\caption{All neuron state boundaries (NSBs) of $9$ networks after training (large) and near the start of training (each top left insert) with varying data and architecture complexity. Animations of this process for all $9$ experiments are available in the supplemental material. For {\Archi} (top row), the $4$ line styles in order of increasing dotted-ness, solid, dashed, dash-dot, and dotted, correspond to increasing layer numbers. For all networks, we reserve the most dotted line for the decision boundary. We make the following observations: 
 For fixed data, increasing the architecture size increases the number of linear regions, $|\Sigbar|$, but not necessarily the number of linear decision boundary pieces, $|\SigbarO|$, making it a more plausible candidate to relate to generalization error, which is also architecture size invariant. 
For fixed architecture, varying the data complexity controls the number of boundary pieces, which seem to be \textit{around} as few as needed to separate the regions of positive and negative density.
There is the appearance of a sort of "repulsive force", most readily apparent in the {\Archii} figures, between the decision boundary and the other NSBs. Recall that the decision boundary only "bends" when it intersects other NSBs. Such a force would then have a regularizing effect that minimizes the number of boundary pieces.
Regions where individual neurons are active tend to nest hierarchically with somewhat parallel boundaries, producing fewer possible network states.
}
\label{fig:neuron_states}
\end{figure}

\begin{figure}
  \centering
         \begin{subfigure}[b]{\linewidth}
                \centering
                \includegraphics[width=\textwidth]{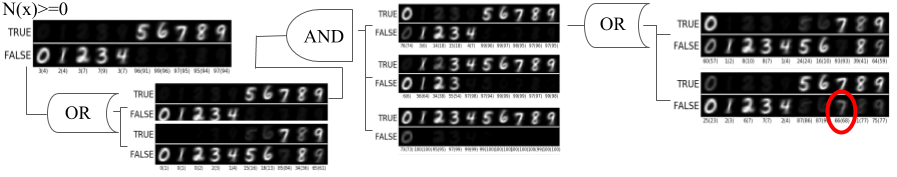}
                \caption{A Strange Way to Learn $7$s}
         \end{subfigure}
  \caption{This is a companion figure to Figure \ref{fig:mnist}, which contains generic instructions on how to read this type of diagram. Depicted is a selected subset of the logical circuit of a network trained on MNIST. This circuit is particularly large, and all depicted classifiers are Boolean combinations of other classifiers that are not shown. Similarly to the redundancy of Figure \ref{fig:bool_equiv_D2A3}, many of the omitted are similar to those shown. In contrast to the very sensible organization of circuit Fig\ref{fig:mnist-a}, the point of this figure is to show that these intermediate logical steps, though interpretable, can also be counter-intuitive, even circuitous seeming. 
  We proceed from right to left. Already two logical operations before the terminal network output, one of the classifiers (row $2$ far right) has \textit{nearly} separated the data (images of training data shown here), except that it has difficulty with many $7$s (red circle). Curiously, this is amended by combining using OR with a classifier that labels \textit{both} $7$ and $0$ as True. This is curious. It is not clear why including $0$ should be necessary or useful when including $7$. To remedy the new problem of returning True for $0$, it uses an AND junction to combine with another intermediate classifier that outputs False for $0,1,2,3$. .}
  \label{fig:mnist-b}
\end{figure}

\begin{figure}
\def\nsbwid{0.3\textwidth}
\def\nsbht{2.8cm}
  \centering
    \begin{tabular}[t]{ccc}
        \begin{tabular}{c}
         \begin{subfigure}[b]{\nsbwid}
                \centering
                \includegraphics[height=\nsbht{}]{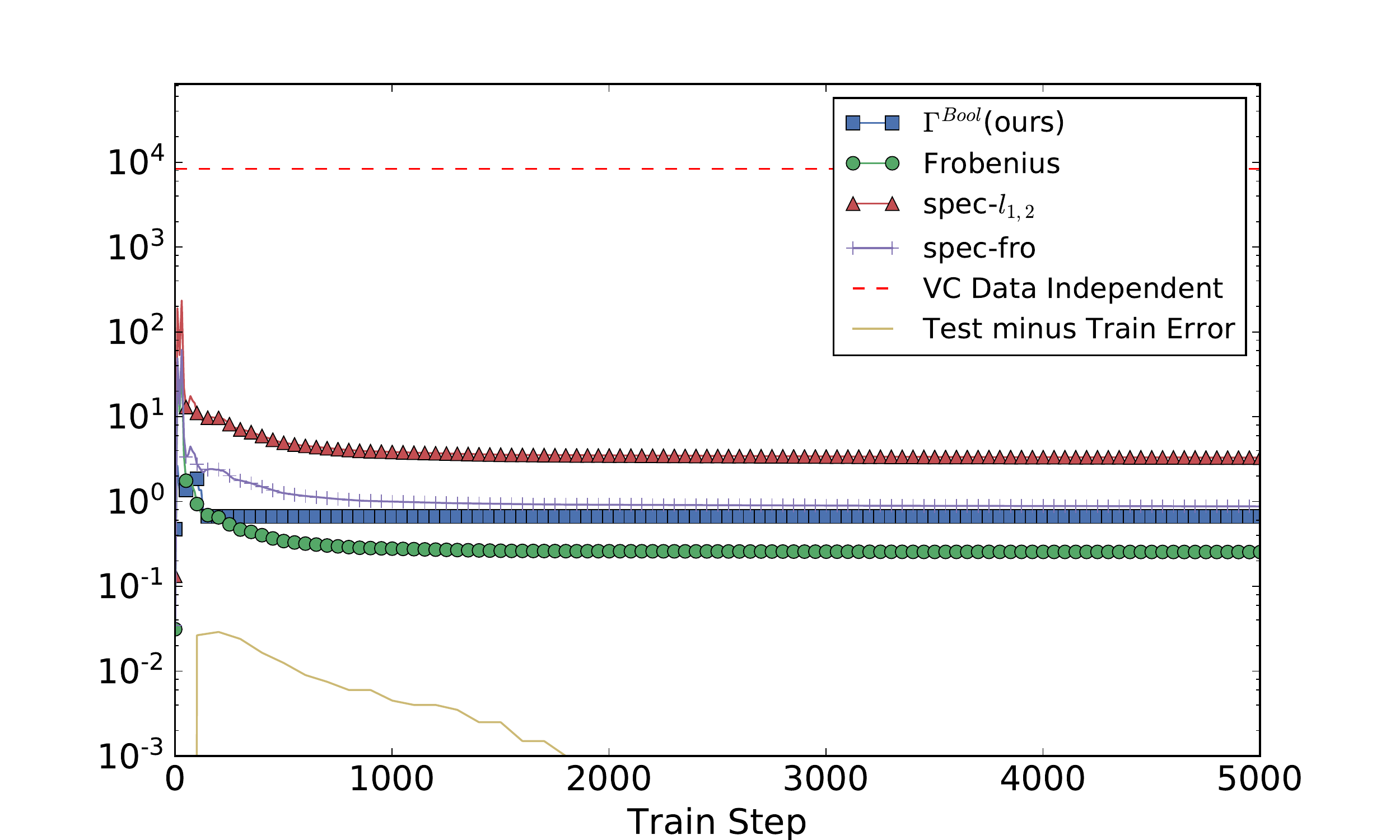}
                \caption{\Datai{}\Archi{}}
         \end{subfigure}\\
         \begin{subfigure}[b]{\nsbwid}
                \centering
                \includegraphics[height=\nsbht{}]{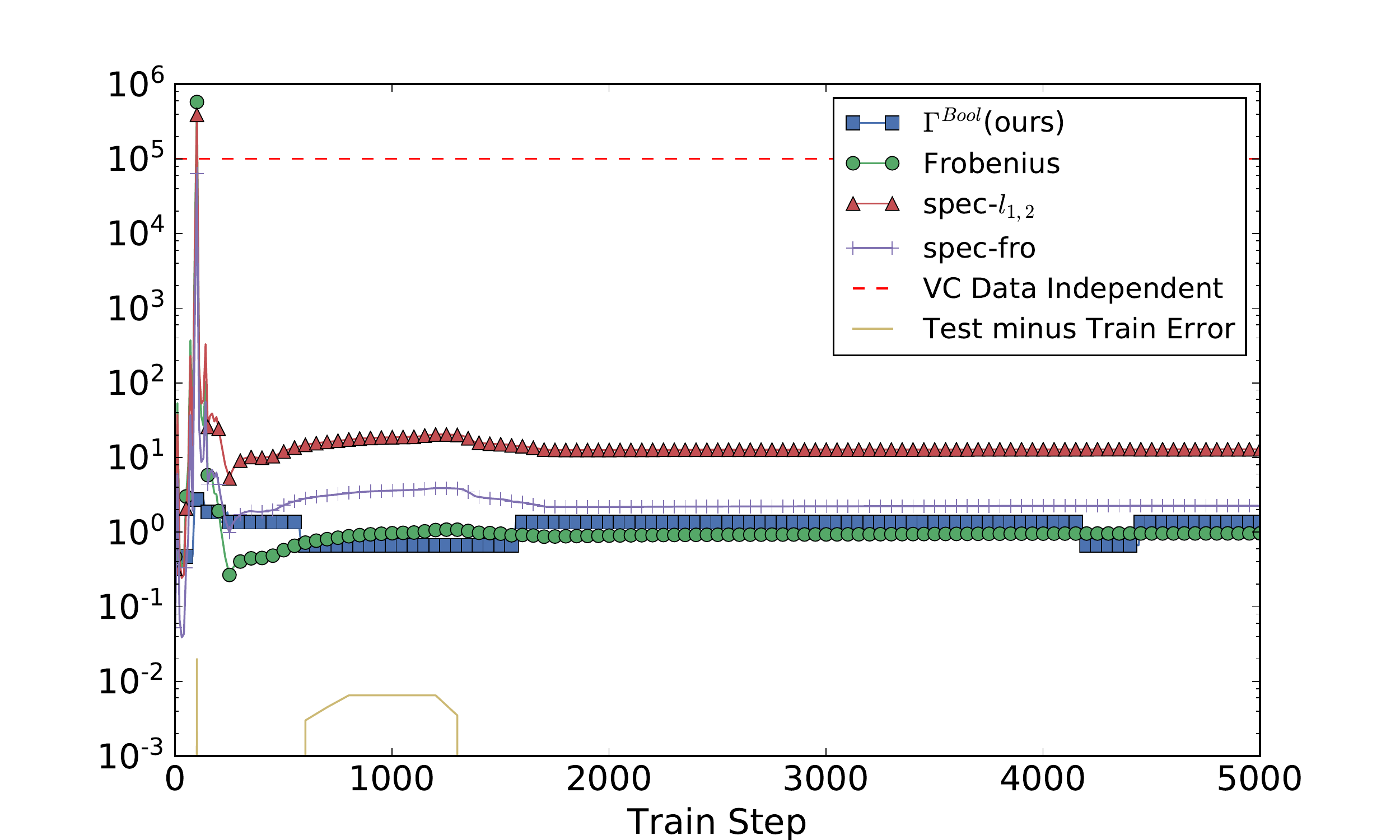}
                \caption{\Datai{}\Archii{}}
         \end{subfigure}\\
         \begin{subfigure}[b]{\nsbwid}
                \centering
                \includegraphics[height=\nsbht{}]{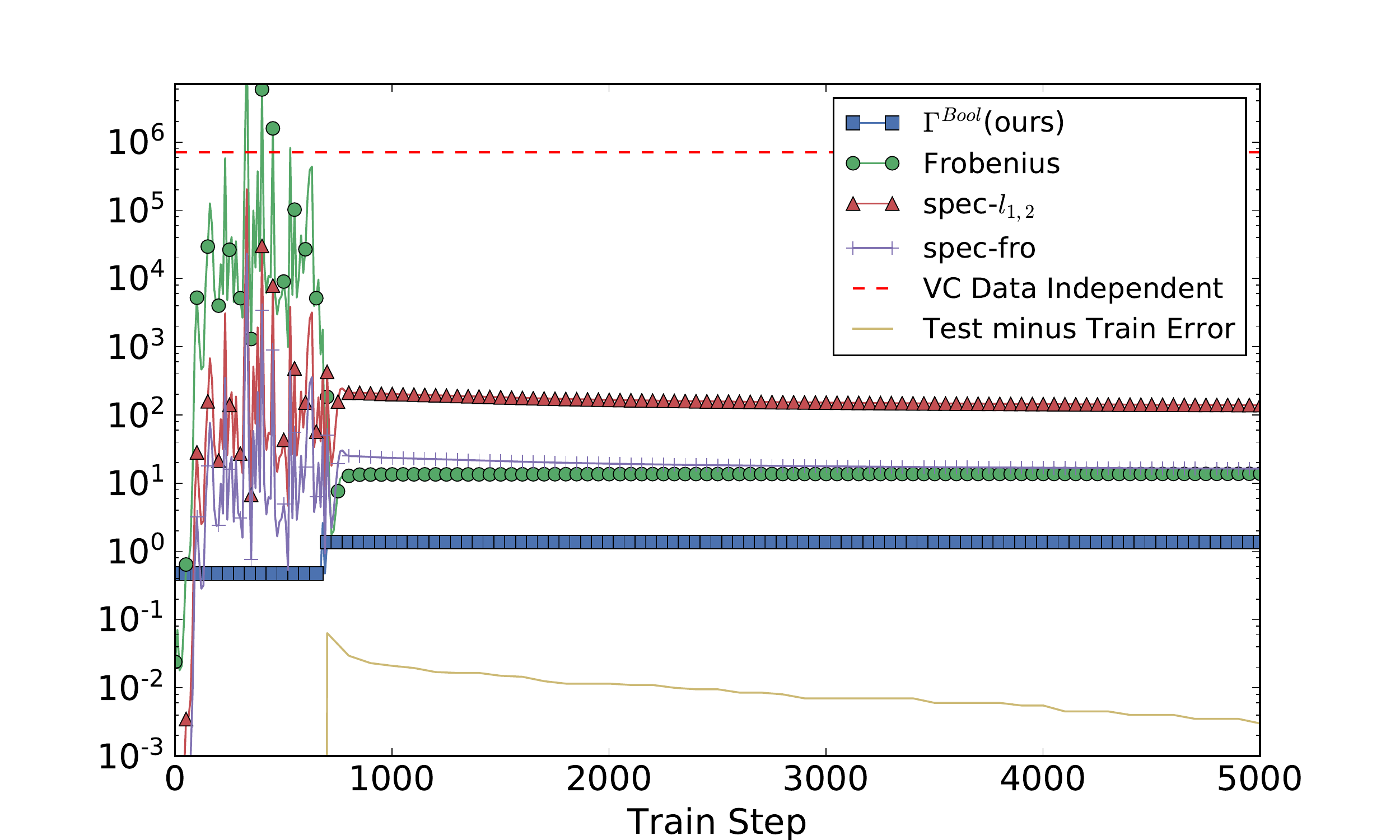}
                \caption{\Datai{}\Archiii{}}
         \end{subfigure}
        \end{tabular}
    &
        \begin{tabular}{c}
         \begin{subfigure}[b]{\nsbwid}
                \centering
                \includegraphics[height=\nsbht{}]{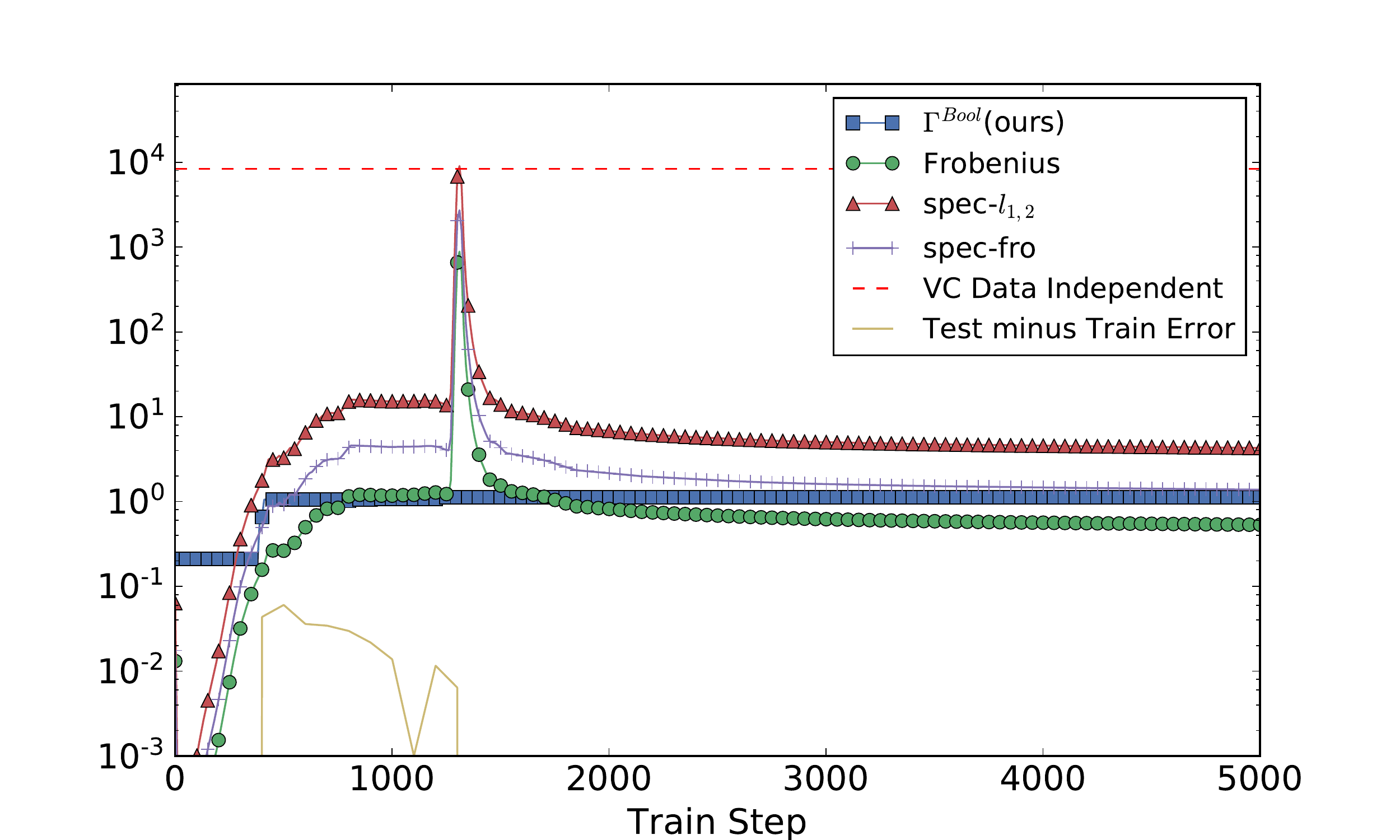}
                \caption{\Dataii{}\Archi{}}
         \end{subfigure}\\
         \begin{subfigure}[b]{\nsbwid}
                \centering
                \includegraphics[height=\nsbht{}]{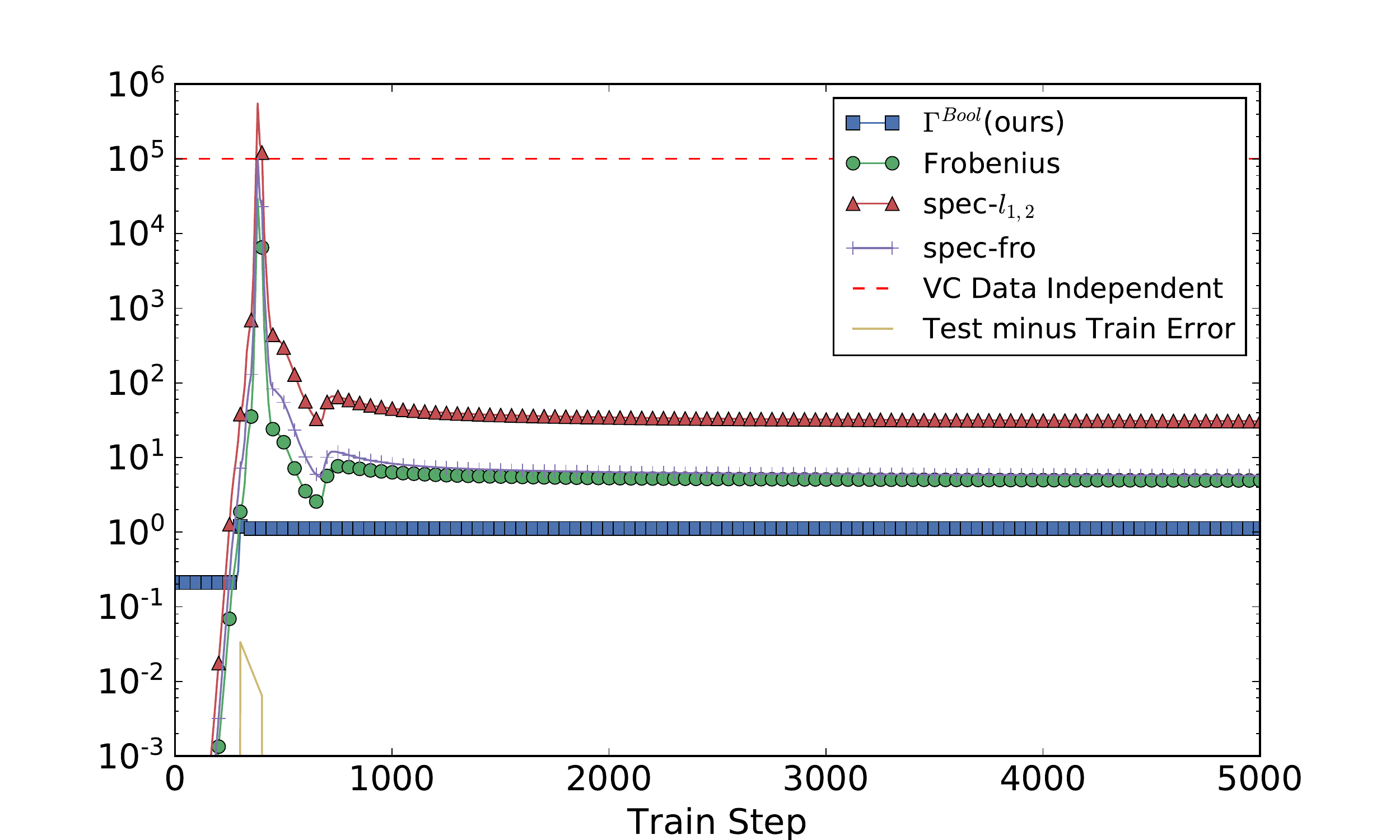}
                \caption{\Dataii{}\Archii{}}
         \end{subfigure}\\
         \begin{subfigure}[b]{\nsbwid}
                \centering
                \includegraphics[height=\nsbht{}]{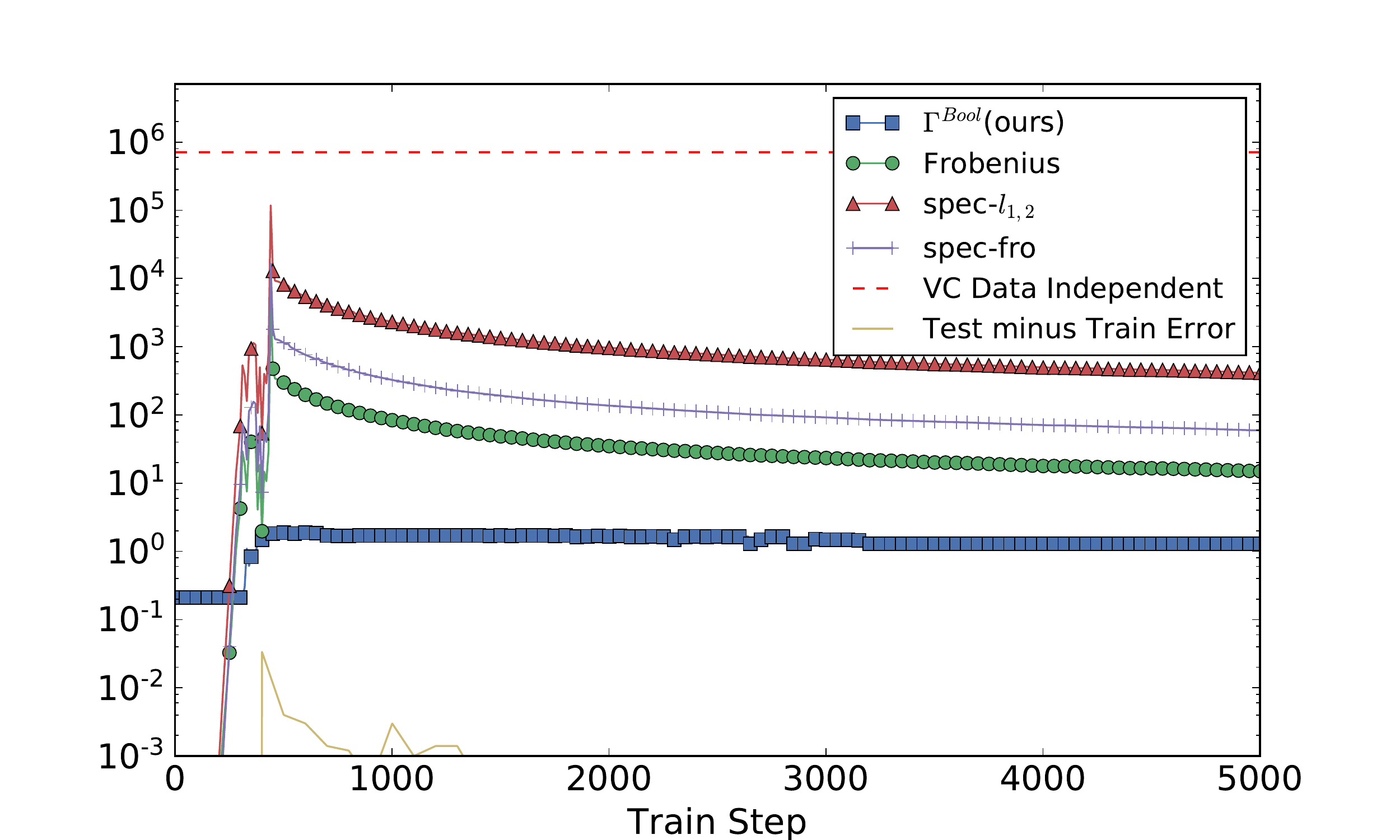}
                \caption{\Dataii{}\Archiii{}}
         \end{subfigure}
        \end{tabular}
&
        \begin{tabular}{c}
         \begin{subfigure}[b]{\nsbwid}
                \centering
                \includegraphics[height=\nsbht{}]{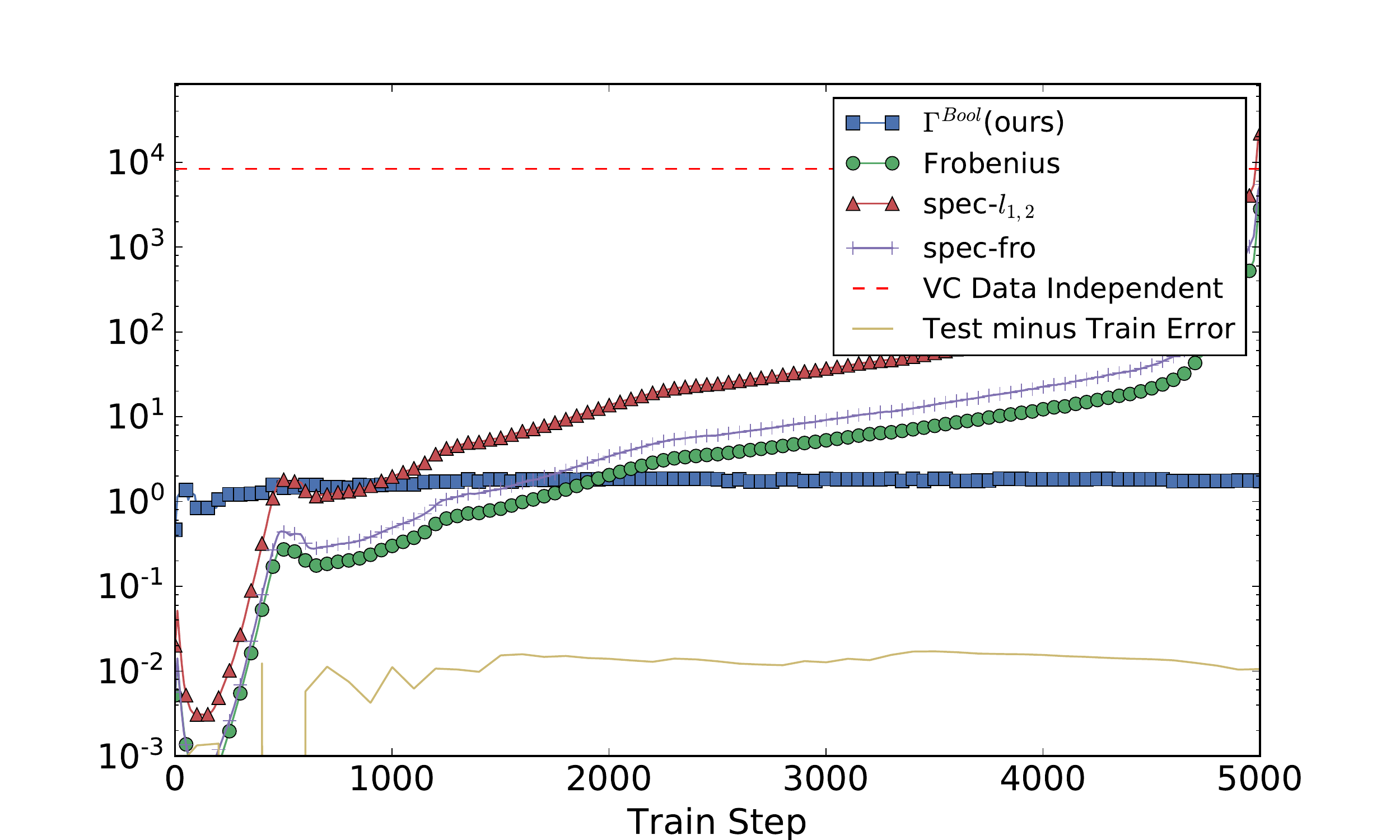}
                \caption{\Dataiii{}\Archi{}}                
         \end{subfigure}\\
         \begin{subfigure}{\nsbwid}
                \centering
                \includegraphics[height=\nsbht{}]{figures/FigGen/FigBounds_D3A1.pdf}
                \caption{\Dataiii{}\Archi{}}                
         \end{subfigure}\\
         \begin{subfigure}{\nsbwid}
                \centering
                \includegraphics[height=\nsbht{}]{figures/FigGen/FigBounds_D3A1.pdf}
                \caption{\Dataiii{}\Archi{}}
         \end{subfigure}
        \end{tabular}\\
    \end{tabular}
  \caption{Generalization bounds for all experiments. See Figure \ref{fig:gen_bound} for additional details.
    }
\label{fig:all_bounds}
\end{figure}

\subsection{Experimental Conditions}\label{sec:exp_details}

This training process can be seen as a function mapping a training set, network architecture pair to the DNN classifier and thus also to the generalization error we aim to study. As such, we are interested in observing how the complexity of our learned classifier depends on the "architecture complexity" and "data complexity", which we treat as independent variables. 
For this purpose, we designate $3$ different network architectures, {\Archi}, {\Archii}, and {\Archiii}, of increasing size and three different datasets, {\Datai}, {\Dataii}, and {\Dataiii}, of increasing "complexity" (both pictured in Appendix Figure \ref{fig:arch_diagrams},\ref{fig:data_diagrams}).
Together, these comprise $9$ experiments total. Rather than define explicitly what makes a dataset complex, to justify {\Datai}, {\Dataii}, and {\Dataiii} are of "increasing complexity", we simply note that these datasets are nested by construction, and, as a result, so too are the sets of classifiers that fit the data \footnote{There are many ways to define data complexity, but it's not clear which one should apply here. The ability to convincingly vary the data complexity in spite of this is in fact a key advantage of using synthetic data.}.

The three architecture sizes were chosen as our best guesses for the widest range of sizes our grid search algorithm would support comfortably. We do not recall changing them thereafter. The datasets were the the simplest interesting trio with the nesting property. The specific scaling and shifting configuration hard-coded into \Datai{},\Dataii{},\Dataiii{} was simply the first one we found (after not much search) that allowed the shallowest network to achieve $0$ training error on the most difficult dataset. It is somewhat important for the data to be centered. Also, using Adam rather than gradient descent made this much easier, so we decided to standardize all our experiments to Adam(beta1=0.9,beta2=0.999) (the Tensorflow default configuration).

All experiments had a learning rate of $0.0005$, biases initialized to $0$, multiplicative weights initialized with samples from a mean, $0$, standard deviation, $0.05$, truncated normal distribution. These were originally set before the lifetime of this project in some code we re-purposed. We don't recall ever changing these thereafter.

Unlike the experiments in the rest of the paper, the deep logical circuits displayed in the MNIST Figures \ref{fig:mnist},\ref{fig:mnist-b}, are not identical from run to run. Instead, it seems almost every run gives a circuit that is at least slightly different. Initially, our intention was to train networks to label just a few digits, (instead of separating $0-4$ from $5-9$, but these circuits all turned out to be too trivial to be interesting. Our experimental design was to vary the architecture and the number of training samples, $m$. Our implementation in network\_tree\_decomposition.py not optimized, so we only had time for $21$ of these runs. Most runs were interesting, so the real criteria was whether they could fit on a page cleanly.

In order to determine $\SigbarO$ for the MNIST experiments, we use a trick. Instead of performing a grid search over the input, which is $784$ dimensional, we learn and additional $4$ to $6$ width \textit{linear} layer before the first hidden layer of each architecture. We then do grid search over the first linear layer and compose with the projection map in post-processing.

We now cover some specific details of the circuits that were shown. Figure \ref{fig:mnist-a} is $0.98$($0.94$) training(validation) accuracy {\Archi} network with a $6$ neuron linear layer, trained with $m=50$k samples. The circuit in Figure \ref{fig:mnist-b} has the same settings and has $0.96(0.93)$ training(test) accuracy.

The overfit circuit with $1.0(0.78)$ train(test) accuracy in Figure \ref{fig:mnist-c} has a $4$ neuron linear layer, uses {\Archiii}, and has only $1$k training points. The prosthetic network that we trained, also used {\Archiii} and used a subset of the \textit{same} training data. Since we trained it to label $4$ as False and $5-9$ as True, we had a class imbalance issue, so we subsampled $30$\% of the True labels. Therefore, the "prosthetic network" was trained with only $m=250$ training samples.

All experiments were run comfortably on a TitanX GPU.

\begin{figure}
  \centering
   \includegraphics[width=0.8\linewidth]{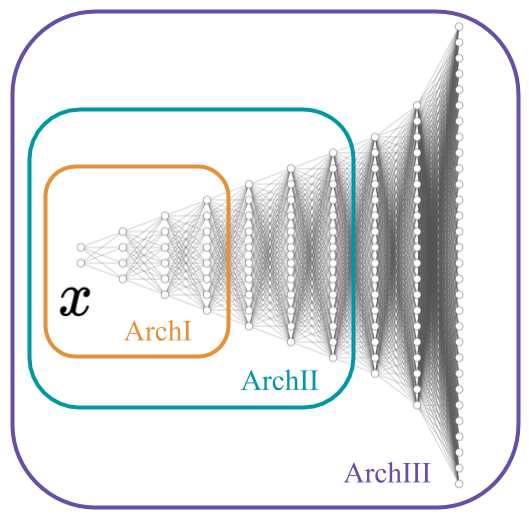}
  \caption{This figure defines the network architectures, {\Archi}, {\Archii}, and {\Archiii}, used for binary classification experiments in this work. The two leftmost neurons represent 2-dimensional input data, while the remainder are hidden. Lines correspond to multiplicative weights between neurons. Biases are used in experiments but not shown in this figure. There are additional multiplicative weights not shown that connect the final hidden (equiv. rightmost) layer of each network to a scalar output. The architectures are nested, e.g., the structure of the first $3$ hidden layers is constant across all experiments. The first $6$ hidden layers have the same structure for {\Archii} and {\Archiii}.}
  \label{fig:arch_diagrams}
\end{figure}

\begin{figure}
  \centering
  \includegraphics[width=0.8\linewidth]{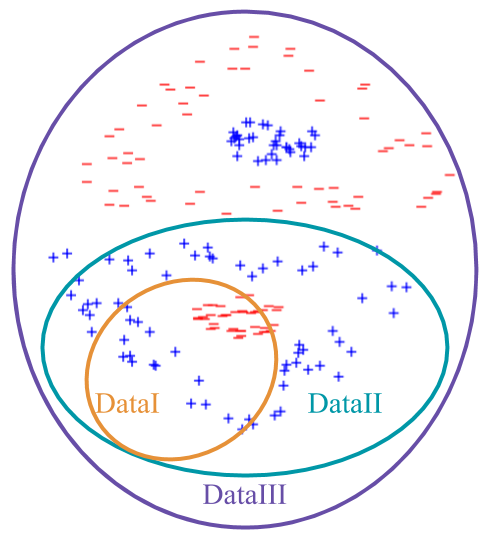}
  \caption{Displayed is one possible sampling of the three data distributions used for experiments in this paper. Blue plus[red minus] signs correspond to training data from the positive[negative] class. We designed our datasets to explore the effect of classification difficulty/complexity on neural network VC dimension. Although we do not yet understand exactly what properties make a dataset "more difficult", we can reasonably expect datasets ordered by inclusion to also be ordered in complexity, e.g., every hypothesis that correctly classifies {\Dataii} also correctly classifies {\Datai}. It should be noted that this inclusion ordering displayed in this figure is up to affine transformation. For example, the figure should be interpreted to mean that there is some affine transformation so that transformed samples from {\Dataii} follow the same distribution of samples from {\Dataiii} that lie lower than average. In fact, {\Datai}, {\Dataii}, and {\Dataiii} all have roughly the same center of mass in experiments.
  }
  \label{fig:data_diagrams}
\end{figure}

\begin{table}
  \caption{Architecture Properties}
  \label{tab:arch_prop}
  \centering
  \begin{tabular}{cccc}
    Architecture     & Depth (\depth) & Parameters & VCdim \\
    \midrule
    {\Archi}  &  3  &  107   & 8376     \\
    {\Archii}  &  6  &  517   & 101110   \\
    {\Archiii}  &  9  &  1743  & 709558   \\
    \bottomrule
  \end{tabular}
\end{table}

\clearpage
\subsection{Experimental Support for Theoretical Results}\label{sec:exp-support}
In this section we use the included file "network\_tree\_decomposition.py", which implements Algorithm \ref{alg:op_tree}, to show experimental support for Theorems \ref{thm:hierarchical} and \ref{thm:sig_bdry}. Specifically we show that Equation \ref{eqn:EquivSigO} holds everywhere in the input space. Not mentioned in the theory section, we also show that the same indexing trick can be applied to Eqn $1$ of Theorem \ref{thm:hierarchical}: if one replaces $\SigbarO$ with $\Sigbar$ then one recovers a hierarchical MinMax$\ldots$MinMax formulation that is numerically equal to $\net(\x)$.

\begin{figure}
\def\nsbwid{0.3\textwidth}
\def\nsbht{2.8cm}
  \centering
         \begin{subfigure}[b]{\nsbwid}
                \centering
                \includegraphics[height=\nsbht{}]{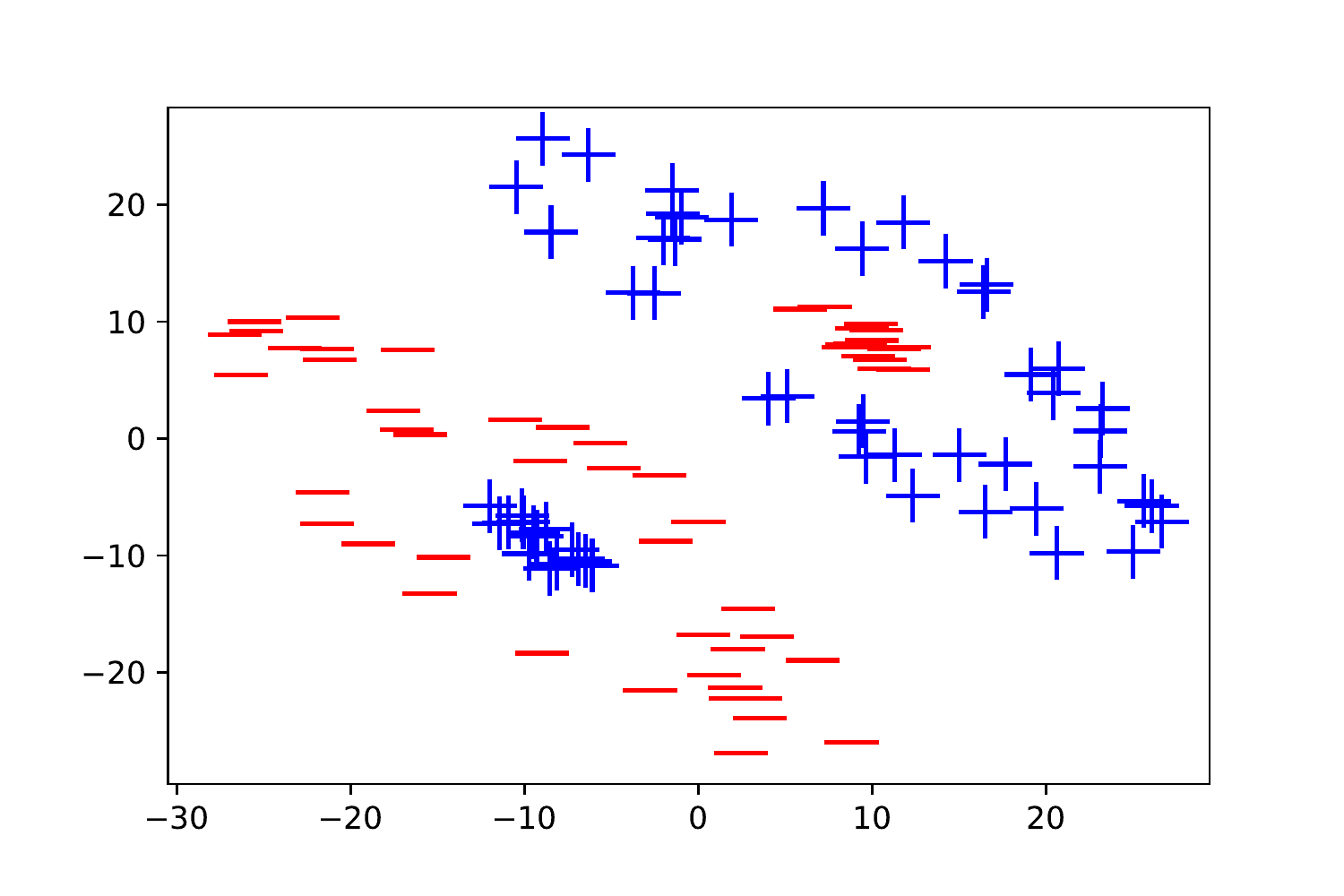}
                \caption{The training data}
                \label{fig:es-a}
         \end{subfigure}
         \begin{subfigure}[b]{\nsbwid}
                \centering
                \includegraphics[height=\nsbht{}]{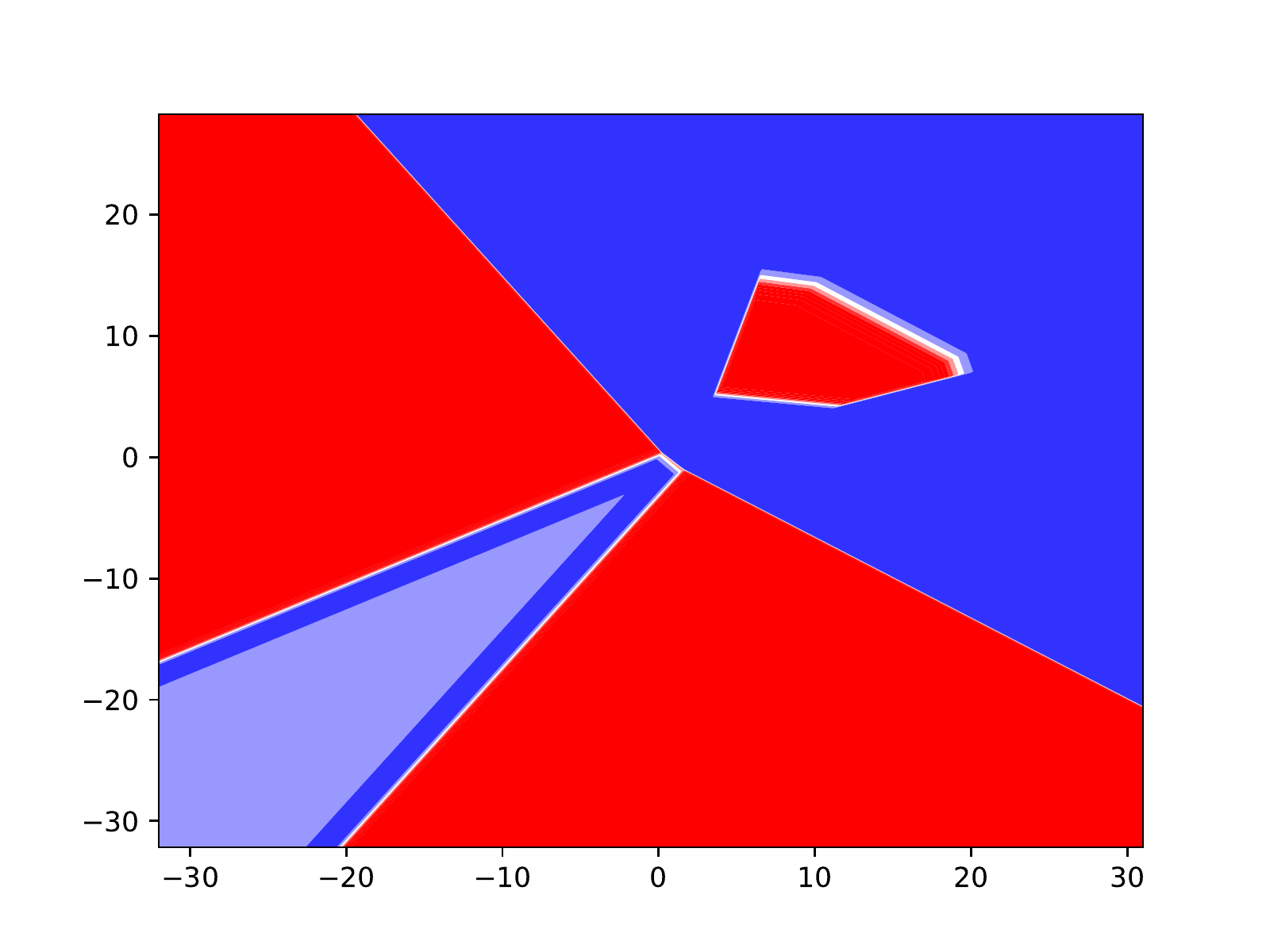}
                \caption{Learned Classification}
                \label{fig:es-b}
         \end{subfigure}
         \begin{subfigure}[b]{\nsbwid}
                \centering
                \includegraphics[height=\nsbht{}]{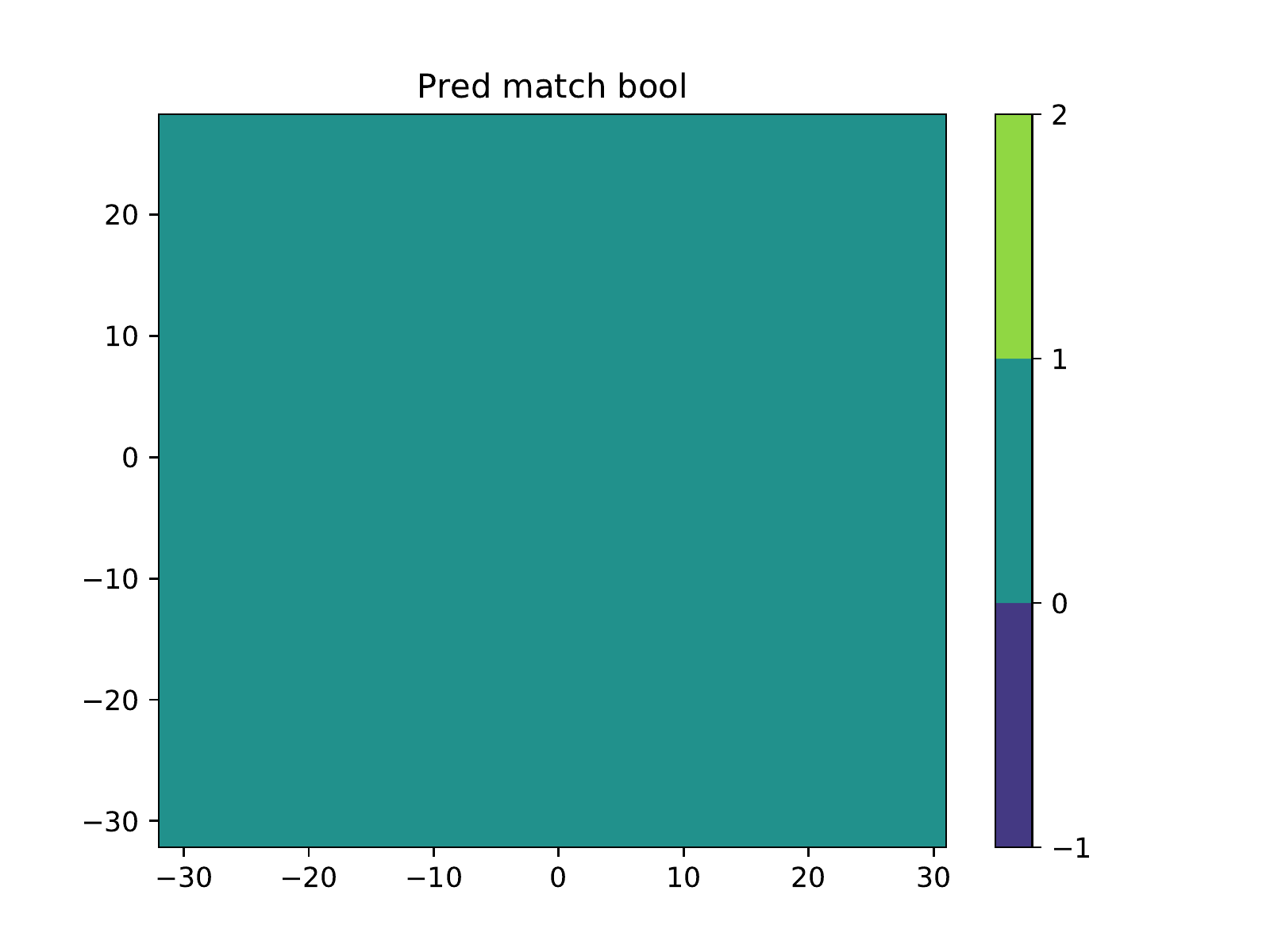}
                \caption{Validate $\net$pred$==$Bool }
                \label{fig:es-c}
         \end{subfigure}
         
         \begin{subfigure}[b]{\textwidth}
                \centering
                \includegraphics[width=\linewidth]{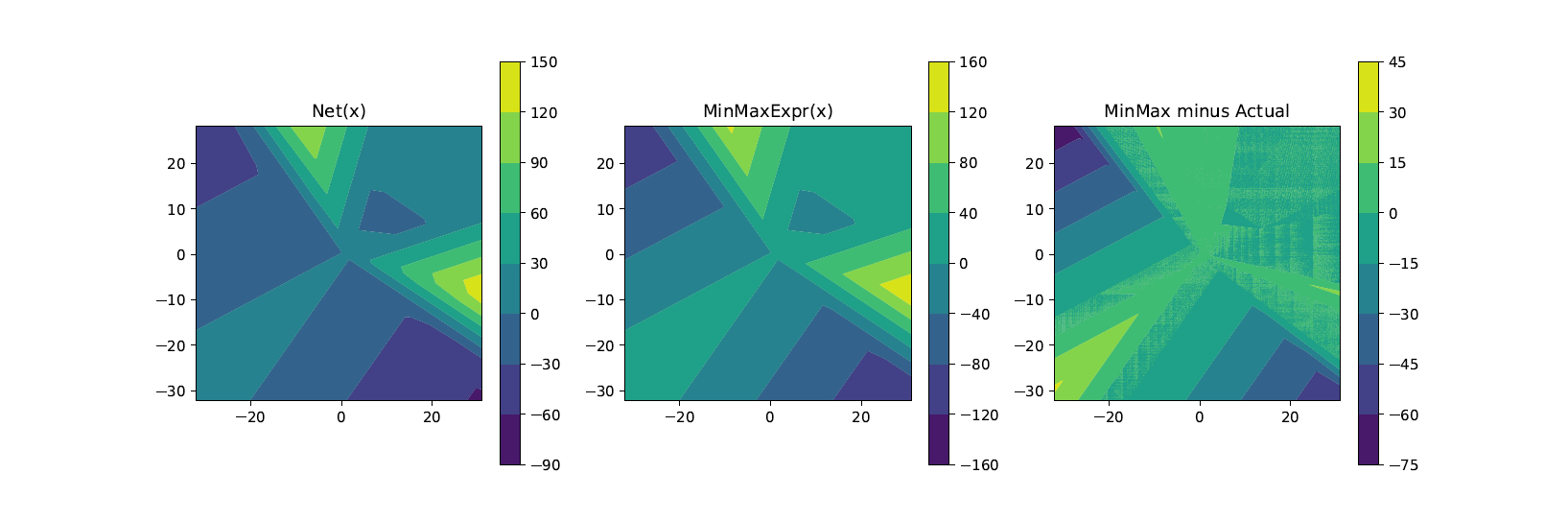}
                \caption{Algorithm \ref{alg:op_tree} (Mode$=$Logical) Output Comparison }
                \label{fig:es-d}
         \end{subfigure}
         
         \begin{subfigure}[b]{\textwidth}
                \centering
                \includegraphics[width=\linewidth]{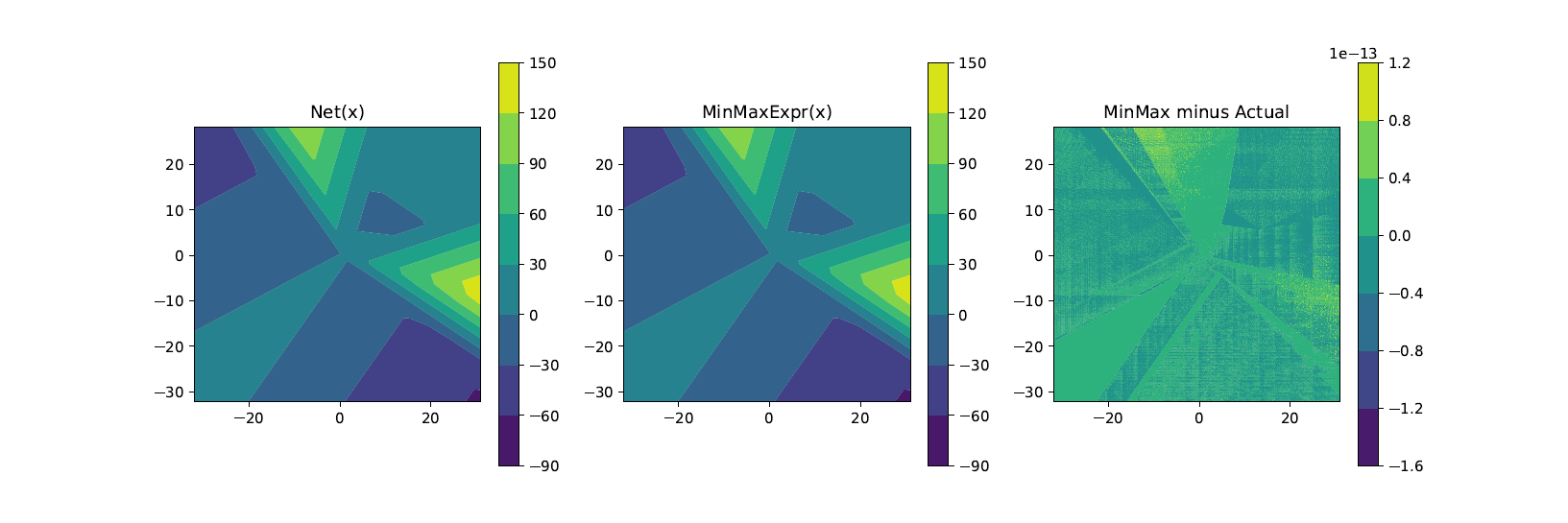}
                \caption{Algorithm \ref{alg:op_tree} (Mode$=$Numeric) Output Comparison }
                \label{fig:es-e}
         \end{subfigure}
  \caption{Experimental readout from network\_tree\_decomposition.py, which implements Algorithm \ref{alg:op_tree}, confirming the validity of our theoretical claims. Pictured is the classification boundary \ref{fig:es-b} of a DNN trained on \ref{fig:es-a}. The second[third] row (Fig. \ref{fig:es-d} [Fig. \ref{fig:es-e}]) plots the numeric value of the MinMax tree when indexed over $\SigbarO$[$\Sigbar$] corresponding to mode$=$Logical [mode$=$Numeric] in Algorithm \ref{alg:op_tree}. From left to right, rows $2$ and $3$ depect the network output, MinMax tree output, and their difference. For mode$=$Numeric, their difference is within machine precision of $0$ (row $3$ column $3$). For the second row, mode$=$Logical uses the same MinMax formulation, but indexes over $\SigbarO$ instead (It is Thm \ref{thm:sig_bdry} but with MinMax instead of $\wedge\vee$). We can see the numeric relation to $\net(\x)$ is lost (row $2$ column $3$). However, the \textit{sign} of this output still agrees with $\net(\x)$ everywhere. One can check this visually by comparing row $2$ columns $1$ and $2$. Or, one can refer to Fig. \ref{fig:es-c}, where we plot $1$ everywhere the two are equivalent. Because we see a constant image that is $1$ everywhere, we can infer our predictions are the same for all labels.
    }
\label{fig:exp_supp}
\end{figure}


\subsection{Algorithms: Definitions and Pseudocode}\label{app:algorithms}

\begin{algorithm}
  \DontPrintSemicolon
  \SetKwFunction{ScanSig}{BdryStates}
  \SetKwFunction{MinDesc}{MinimalDescrip}
  \SetKwFunction{GenBnd}{Thm\ref{thm:Jerrum}Bound}
  \SetKwProg{Fn}{Function}{:}{}
  \Fn{\ScanSig{$x\mapsto\sigbar(x),\net(x)$, $\mathcal{X}^{\text{compact}}$, $\delta$}}{
        Grid=MakeGrid($\mathcal{X}$,$\mathcal{X}^{\text{compact}}$,spacing=$\delta$)\;
        $\SigbarM$,$\SigbarP$=$\varnothing$,$\varnothing$\;
          \For{$x$ in Grid}{
              \uIf{$\net(x)\geq 0$}{
                $\SigbarP\gets\SigbarP\cup\{x\}$\;
                }
              \uElseIf{$\net(x)< 0$}{
                $\SigbarM\gets\SigbarM\cup\{x\}$\;
                }
              }
        $\SigbarO=\SigbarP\cap\SigbarM$\;
        \KwRet $\SigbarO$\;
  }
  \;
  \SetKwProg{Pn}{Function}{:}{\KwRet}
  \Pn{\MinDesc{$\widthbar$, $\SigbarO$}}{
        $s=|\SigbarO|^2$\;
        $\rankl{d+1}=1$\;
        \For{$l\in d,\ldots,1$}{
        $t_l=rank(\SigbarOl{l}$)\;
        $m_l=|\{i\in\width{l}|\exists \sigma\in\SigbarOl{l}$ with $\sigma_i\not=0\}$\;
        $\rankl{l}=\min\{m_l,t_l\rankl{l+1}\}$\;
        }
        $k=\sum_{l=0}^d (r_{l}-1)r_{l+1}$\;
        $\rankbar=\defrankbar$\%effective widths\;
        $d=|\{l | r_l>1\}|$\%effective depth\;
        \KwRet $k,s,d,\rankbar$\;
  }
  \SetKwProg{Gn}{Function}{:}{\KwRet}
  \Gn{\GenBnd{$k,s,d,m$}}{
        $VC=2k \log_2( 8e s d)$\;
        \KwRet $\sqrt{VC / m}$\;
  }
\caption{Generalization Bound Calculation}
\label{alg:gen_bound}
\end{algorithm}







\LinesNumbered
\SetKwInput{KwLet}{Let}
\begin{algorithm}[htbp]
\def\Affw{\alpha}
\def\Affb{\beta}
\def\LSig{Split\Sigma}
\def\lyridx{lyr}
\SetAlgoLined
    \KwData{$\defwidth_{total}=\sum_{l=1}^d\width{l}$, $\Sigma\in\{\Sigbar,\SigbarO\}$, $Mode\in\{Numeric,Logical\}$. \\
    $\LSig\subset\bigtimes\limits_{l=1}^d \{0,1\}^{|\Sigma|\times\width{l}}$ is $\Sigma$ split along last into a layer-indexed list of arrays of shape.\\
    network weights $\W{d},\B{d},\ldots,\W{0},\B{0}$\\
    }
\KwResult{If $\Sigma==\Sigbar$ is the set of network states, return a tree whose terminal leaves are affine functions, whose nodes are Min or Max compositions of children, and whose root computation is numerically equivalent to $\net(x)$. If $\Sigma==\Sigbar$ is the set of network states or $\Sigma==\SigbarO$ is the set of network states at the boundary, return a tree whose leaves are affine classifiers, whose nodes are And or Or compositions of children, and whose root is logically equivalent to the statement $\net(x)\geq 0$.}
\KwLet{ \begin{tabularx}{0.8\textwidth}{l l c r}
    OpTree &=ROOT &\% accumulates symbolic rep\\
    LeafInfo.$\mu Index$ &$[1,2,\ldots,|\Sigma|]$ &\\
    LeafInfo.$\tau Index$ &$[1,2,\ldots,|\Sigma|]$&\\
    LeafInfo.$HiddenLayer$ & $\depth$ & \%\text{depth},\\
    LeafInfo.$Affine.(\Affw,\Affb)$ &[$\W{d},\B{d}$] &\%($z\mapsto \Affw z+\Affb{})$\\
    LeafStack & \{ROOT:LeafInfo\}&\%leaf indexed lookup\\
    TerminalLeafs & \{\} & \%empty accumulator
\end{tabularx}}
\BlankLine
\uIf{Mode is Numeric}{
MeetOp,JoinOp=Min,Max\;
}
\uElseIf{Mode is Logicial}{
MeetOp,JoinOp=And,Or\;
}
 \While{LeafStack}{
  LeafSymbol,LeafInfo=LeafStack.pop()\;
  $\lyridx$=LeafInfo.$HiddenLayer$\; 

  RemainPosSig=$\LSig[\lyridx]$[LeafInfo.$\mu Index$]\;
  RemainNegSig=$\LSig[\lyridx]$[LeafInfo.$\tau Index$]\;
  $\Affw,\Affb$=LeafInfo.$Affine$\;
  \%Group remaining neuron states that are same on positive/negative support of $\Affw$\\ 
  JoinSymbols=[]\;
  \For{$\mu linear$,$\mu idx$ in Unique(RemainPosSig$\odot \Affw_+$) }{
    MeetSymbols=[]\;
    \For{$\tau linear$,$\tau idx$ in Unique(RemainNegSig$\odot \Affw_-$) }{
      NewInfo.$Affine$=ComposeAffine([[$\W{\lyridx-1},\B{\lyridx-1}$],[$\mu linear +\tau linear,\Affb$]]\;
      NewInfo.$\mu Index=\mu idx$\ \%tuple of indices from LeafInfo.$\mu Index$\;
      NewInfo.$\tau Index=\tau idx$\;
      NewInfo.$HiddenLayer=lyr-1$\;
      NewSymbol=MakeSymbol(LeafSymbol,$\taul{\lyridx},\mul{\lyridx}$)\;
        \eIf{$lyr-1>0$}{
          LeafStack.update(\{NewSymbol:NewInfo\})\;
          }{
          TerminalLeafs.update(\{NewSymbol:NewInfo\})\;
          }
    MeetSymbols.append(NewSymbol)\;
  }
  JoinSymbols.append( Reduce(MeetSymbols, MeetOP) )\;
  }
  NewBranch=Reduce(JoinSymbols, JoinOP )\;
  OpTree.subs( LeafSymbol, NewBranch )\;
 }
\caption{Network Tree Decomposition}
\label{alg:op_tree}
\end{algorithm}


\clearpage

\clearpage

\subsection{Supporting Theoretical Exposition}\label{sec:proofs} 

This section contains the proofs for the theorems laid out in the paper. All previous results are restated for convenience. Some new results are added to facilitate exposition. 
\PropOpEquiv*
\begin{proof}
There is essentially nothing to prove as both statements are equivalent to $\exists \alpha\forall\beta f(\alpha,\beta)\geq 0$.
\end{proof}

\begin{definition} (Network Operand, $\Op$)\\

For a ReLU DNN composed of weights, $\W{l}$, and biases, $\B{l}$, for $l=0,\ldots,d$, mapping inputs $\x$ to $\bbR$, we define the Network Operand, $\Opl{l}$ (at layer $l$), as follows:
\begin{align*}
    \Opl{0}(\x)&=\B{0}+\W{0}\x\\
    \Recurse{1}(\mul{1},\taul{1},\x)&=\B{1}+\W{1}_{+}\mul{1}(\B{0}+\W{0}\x) - \W{1}_{-}\taul{1}(\B{0}+\W{0}\x)
\end{align*}
Given $\Opl{l}$, define $\Opl{l+1}$ by
\begin{align}
    \Recurse{l+1}(\mul{1},\ldots\mul{l+1},\taul{1},\ldots,\taul{l+1}, \x)&=
    \B{l+1}+\nonumber \\
    &\W{l+1}_{+}\mul{l+1}\Recurse{l}(\mul{1},\ldots\mul{l},\taul{1},\ldots,\taul{l}, \x)-\nonumber\\
    &\W{l+1}_{-}\taul{l+1}\Recurse{l}(\taul{1},\ldots\taul{l},\mul{1},\ldots,\mul{l}, \x)\label{eqn:def_recurse}
\end{align}
taking note that the roles of $\tau$ and $\mu$ are switched in the last term. 

\end{definition}

%


For induction purposes, we define $\netl{l}$ to be the vector valued ReLU network consisting of hidden layers $1,\ldots,l$ of $\net$:
\begin{equation}
\netl{l}(\x)\triangleq\B{l}+\W{l}\relu(\B{l-1}+\W{l-1}\relu(\B{l-2}+\ldots+\W{1}\relu(\B{0}+\W{0}\x)\ldots))\nonumber.
\end{equation}
so that $\netl{d}=\net$ and $\netl{l+1}(\x)=\B{l+1}+\W{l+1}\relu(\netl{l}(\x))$.

\CompositionalTheorem*
\begin{proof}
We first prove Eqn $1$ by induction on $\depth$. Eqn $3$ will follow directly  from $2$ through repeated application of Prop \ref{prop:minmax2bool}. Clearly, $\netl{0}(\x)=\B{0}+\W{0}\x=\Recurse{0}(\x)$, so Eqn $1$ holds for $k=0$ hidden layers. Assume Eqn $1$ is true for $1,\ldots,k$.

By definition of $\netl{k+1}$ and the inductive assumption,
\begin{align*}
    \netl{k+1}(\x)
    &=\B{k+1}+\W{k+1}\relu(\netl{k}(\x))\\
    &=\B{k+1}+\max_{\mul{k+1}}\min_{\taul{k+1}}(\W{k+1}_{+}\mul{k+1}-\W{k+1}_{-}\taul{k+1})(\netl{k}(\x))\\
    &=\B{k+1}+\max_{\mul{k+1}}\min_{\taul{k+1}}(\W{k+1}_{+}\mul{k+1}-\W{k+1}_{-}\taul{k+1})
    (\max_{\mul{k}}\min_{\taul{k}}\cdots \\
    &\qquad\max_{\mul{1}}\min_{\taul{1}} 
    \DotOperand{\mul{1},\ldots\mul{k},\taul{1},\ldots,\taul{k}}{\x}{k} )\\
    &=\B{k+1}+\max_{\mul{k+1}}\min_{\taul{k+1}}\biggl[\W{k+1}_{+}\mul{k+1}(\max_{\mul{k}}\min_{\taul{k}}\cdots \max_{\mul{1}}\min_{\taul{1}}
    \DotOperand{\mul{1},\ldots\mul{k},\taul{1},\ldots,\taul{k}}{\x}{k} )\\
    &\phantom{=}\; -(\max_{\mul{k}}\min_{\taul{k}}\cdots \max_{\mul{1}}\min_{\taul{1}} \W{k+1}_{-}\taul{k+1}
    \DotOperand{\mul{1},\ldots\mul{k},\taul{1},\ldots,\taul{k}}{\x}{k} )\\
    &=\B{k+1}+\max_{\mul{k+1}}\min_{\taul{k+1}}\biggl[\W{k+1}_{+}\mul{k+1}(\max_{\mul{k}}\min_{\taul{k}}\cdots \max_{\mul{1}}\min_{\taul{1}}
    \DotOperand{\mul{1},\ldots\mul{k},\taul{1},\ldots,\taul{k}}{\x}{k} )\\
    &\phantom{=}\;(\min_{\mul{k}}\max_{\taul{k}}\cdots \min_{\mul{1}}\max_{\taul{1}} -\W{k+1}_{-}\taul{k+1}
    \DotOperand{\mul{1},\ldots\mul{k},\taul{1},\ldots,\taul{k}}{\x}{k} )\\
    &=\B{k+1}+\max_{\mul{k+1}}\min_{\taul{k+1}}\biggl[(\max_{\mul{k}}\min_{\taul{k}}\cdots \max_{\mul{1}}\min_{\taul{1}} \W{k+1}_{+}\mul{k+1}
    \DotOperand{\mul{1},\ldots\mul{k},\taul{1},\ldots,\taul{k}}{\x}{k} )\\
    &\phantom{=}\; (\max_{\mul{k}}\min_{\taul{k}}\cdots \max_{\mul{1}}\min_{\taul{1}} -\W{k+1}_{-}\taul{k+1}
    \DotOperand{\taul{1},\ldots\taul{k},\mul{1},\ldots,\mul{k}}{\x}{k} )\biggr]\\
    &=\max_{\mul{k+1}}\min_{\taul{k+1}}\max_{\mul{k}}\min_{\taul{k}}\cdots \max_{\mul{1}}\min_{\taul{1}} \biggl[\B{k+1}+\W{k+1}_{+}\mul{k+1}
    \DotOperand{\mul{1},\ldots\mul{k},\taul{1},\ldots,\taul{k}}{\x}{k} )\\
    &\phantom{=}\; -\W{k+1}_{-}\taul{k+1}\Recurse{k}(\taul{1},\ldots\taul{k},\mul{1},\ldots,\mul{k}, \x)\biggr]\\
    &=\max_{\mul{k+1}}\min_{\taul{k+1}}\max_{\mul{k}}\min_{\taul{k}}\cdots \max_{\mul{1}}\min_{\taul{1}}\biggl[
    \DotOperand{\mul{1},\ldots\mul{k},\mul{k+1},\taul{1},\ldots,\taul{k},\taul{k+1}}{ \x}{k+1}\biggr]
\end{align*}

Thus by induction Eqn $1$ holds for any depth. Now we can apply Prop \ref{prop:minmax2bool} recursively to derive Eqn $2$.

\begin{align*}
\biggl[\net(\x)\geq0\biggr]
&\Longleftrightarrow\biggl[\max_{\mul{d}}\min_{\taul{d}} \max_{\mul{d-1}}\min_{\taul{d-1}}\cdots \max_{\mul{1}}\min_{\taul{1}} \Recurse{d}(\mul{1},\ldots\mul{d},\taul{1},\ldots,\taul{d}, x)\geq 0\biggr]\\
&\Longleftrightarrow \bigvee_{\mul{d}}\bigwedge_{\taul{d}}\biggl[ \max_{\mul{d-1}}\min_{\taul{d-1}}\cdots \max_{\mul{1}}\min_{\taul{1}} \Recurse{d}(\mul{1},\ldots\mul{d},\taul{1},\ldots,\taul{d}, x)\geq 0\biggr]\\
&\Longleftrightarrow \bigvee_{\mul{d}}\bigwedge_{\taul{d}} \bigvee_{\mul{d-1}}\bigwedge_{\taul{d-1}}\biggl[\cdots \max_{\mul{1}}\min_{\taul{1}}\Recurse{d}(\mul{1},\ldots\mul{d},\taul{1},\ldots,\taul{d}, x)\geq 0 \biggr]\\
&\vdots\\
&\Longleftrightarrow \bigvee_{\mul{d}}\bigwedge_{\taul{d}} \bigvee_{\mul{d-1}}\bigwedge_{\taul{d-1}}\cdots \bigvee_{\mul{1}}\bigwedge_{\taul{1}}\biggl[
\Recurse{d}(\mul{1},\ldots\mul{d},\taul{1},\ldots,\taul{d}, x)\geq 0\biggr]
\end{align*}
\end{proof}

For the following, we recall the following notation:
For $J\subset[\depth]$, $\Sigbarl{J}$[$\SigbarOl{J}$] 
is the projection of $\Sigbar$[$\SigbarO$] onto the coordinates indexed by $J$. 
The symbols 
$\mubar^{[l]}=(\mul{1},\ldots,\mul{l})$ for $l\leq d$ 
and 
$\mubar=\mubar^{[d]}=(\mul{1},\ldots,\mul{d})$
to be equivalent (representing the same quantity) in any context they appear together. 
For example, if $\mubar\in\Sigbar$ then $\mubar^{[l]}\in\Sigbarl{[l]}$. 
We will here consider $\mubar^{[l]}$ as a concatenated vector in $\bbR^{\sum_{i=1}^l\width{i}}$.

\def\fixmu{\hat{\mu}}
\def\fixtau{\hat{\tau}}
\begin{lemma}The Fundamental Lemma of the Net Operand\\
\label{lem:fund_operand} Let $\net$ be a ReLU network with net operand, $\Op$, and $\sigbar:\bbR^{\width{0}}\mapsto\Sigbar$, mapping inputs, $\x$, to network states, $\sigbar(x)$. Then for arbitrary binary vectors $\fixmu,\fixtau$ the following relations hold
\begin{align}
     \min_{\taubar\in\Sigbar}\DotOp{\sigbar(\x),\taubar}{\x} = &\net(\x) =\max_{\mubar\in\Sigbar}\DotOp{\mubar,\sigbar(\x)}{\x}
     \label{eqn:fund_operand}\\
    \min_{\taubar\in\Sigbar}\DotOp{\fixmu{},\taubar}{\x}\leq\DotOp{\fixmu,\sigbar(\x)}{\x}
    \leq &\net(\x) 
    \leq \DotOp{\sigbar(\x),\fixtau}{\x} 
    \leq\max_{\mubar\in\Sigbar}
    \DotOp{\mubar,\fixtau}{\x}.
    \label{eqn:net_operand}
\end{align}
\end{lemma}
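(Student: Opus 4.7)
The plan is to prove (\ref{eqn:fund_operand}) and (\ref{eqn:net_operand}) simultaneously by induction on the layer index $l$, carrying three interlocking statements at each level: (A) $\Recurse{l}(\sigbar^{[l]}(\x), \sigbar^{[l]}(\x), \x) = \netl{l}(\x)$; (B) $\Recurse{l}(\mubar^{[l]}, \sigbar^{[l]}(\x), \x) \leq \netl{l}(\x)$ for every binary $\mubar^{[l]}$; and (C) $\netl{l}(\x) \leq \Recurse{l}(\sigbar^{[l]}(\x), \taubar^{[l]}, \x)$ for every binary $\taubar^{[l]}$. Setting $l = d$, (A) supplies the attainment point $\DotOp{\sigbar(\x),\sigbar(\x)}{\x} = \net(\x)$, while (B) and (C), specialized to $\mubar = \fixmu$ and $\taubar = \fixtau$, yield the two inner inequalities of (\ref{eqn:net_operand}). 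The outer inequalities, and with them the equalities (\ref{eqn:fund_operand}), then follow immediately, since $\sigbar(\x) \in \Sigbar$ is a feasible extremizer that realizes the bound certified by (B) or (C).

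The base case $l = 0$ is vacuous, as $\Recurse{0}(\x) = \B{0} + \W{0}\x = \netl{0}(\x)$ has no $\mu, \tau$ dependence. For the inductive step, unfold (\ref{eqn:def_recurse}) at level $l+1$ with the $\tau$-slot pinned to $\sigbar^{[l+1]}(\x)$. The first summand is $\W{l+1}_{+}\mul{l+1}\Recurse{l}(\mubar^{[l]},\sigbar^{[l]}(\x),\x)$, bounded above by (B). The second summand is $-\W{l+1}_{-}\sigl{l+1}(\x)\Recurse{l}(\sigbar^{[l]}(\x),\mubar^{[l]},\x)$; here the argument swap is essential, forcing us to invoke (C) at the swapped pair $(\sigbar^{[l]}(\x),\mubar^{[l]})$ so that a lower bound on $\Recurse{l}$ becomes an upper bound on the negated term. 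Because $\W{l+1}_{+}, \W{l+1}_{-}, \mul{l+1}$ and $\sigl{l+1}(\x)$ all have non-negative entries, both bounds propagate in the correct direction. Applying the elementary facts $\mul{l+1}v \leq \relu(v)$ (component-wise, for any binary diagonal $\mu$) and $\sigl{l+1}(\x)\netl{l}(\x) = \relu(\netl{l}(\x))$ then collapses the right-hand side exactly to $\netl{l+1}(\x)$, establishing (B) at level $l+1$. Statement (C) is symmetric, and (A) falls out by specializing both bounding chains to $\mubar^{[l+1]} = \sigbar^{[l+1]}(\x)$, which forces equality throughout since the $\mu \leftrightarrow \tau$ swap is then a no-op.

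The main obstacle is the bookkeeping around the swap: each level of the recursion calls $\Recurse{l}$ with its arguments in both orderings, so the upper bound (C) and the lower bound (B) must be carried \emph{together} through the induction --- neither is provable in isolation, because the $-\W{l+1}_{-}$ term at level $l+1$ needs the $\sigbar$-first chain while the $+\W{l+1}_{+}$ term needs the $\sigbar$-second chain. Once this coupled invariant is chosen, the remaining ingredients are only entry-wise non-negativity of the reweighted matrices and the defining ReLU-mask identity, so no substantive analytic difficulty remains.
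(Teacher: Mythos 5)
Your proposal is correct, and it reaches the lemma by a genuinely different organization than the paper's proof. The paper first reduces everything to the middle sandwich $\max_{\mubar\in\Sigbar}\DotOp{\mubar,\sigbar(\x)}{\x}\leq\net(\x)\leq\min_{\taubar\in\Sigbar}\DotOp{\sigbar(\x),\taubar}{\x}$ (your feasibility-of-$\sigbar(\x)$ step plays the same role there), and then proves that sandwich by a term-wise expansion of the depth-$d$ operand: it argues every term containing $\mul{1}$ appears with entrywise-nonnegative coefficients and a leading $+$, every term containing $\taul{1}$ with a leading $-$, so that for any fixed higher coordinates $\sigl{1}(\x)$ is simultaneously the optimal $\mul{1}$ and $\taul{1}$; it then pins $\mul{k}=\taul{k}=\sigl{k}(\x)$ one layer at a time, substituting $\netl{k-1}(\x)$ for $\Opl{k-1}$ in the expansion and repeating (an outside-in peeling of the saddle point). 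You instead run a bottom-up structural induction directly on the recursion defining $\Recurse{l}$, carrying the coupled invariants (A)--(C); your observation that the $\mu\leftrightarrow\tau$ swap in the $-\W{l+1}_{-}$ term forces (B) and (C) to be proved jointly is exactly the right crux, and the elementary facts you invoke ($\mul{l+1}v\leq\relu(v)$, $\sigl{l+1}(\x)\netl{l}(\x)=\relu(\netl{l}(\x))$, and entrywise nonnegativity of $\W{l+1}_{\pm}$ and the binary masks) are the same facts underlying the paper's sign-pattern claim. What your route buys is that it avoids reasoning about the global term expansion altogether and delivers the inner inequalities for arbitrary binary vectors in one clean induction; what the paper's peeling buys is the explicit chain of partially-pinned max/min expressions that it displays and reuses stylistically in later arguments. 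One point to make explicit when writing yours up: at intermediate layers $\Recurse{l}$ and $\netl{l}$ are vector valued, so (A)--(C) must be stated and propagated as entrywise inequalities --- that is what licenses multiplying them by the nonnegative matrices $\W{l+1}_{+}\mul{l+1}$ and $\W{l+1}_{-}\sigl{l+1}(\x)$ without reversing direction; your wording implies this but should say it.
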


It's worth pointing out that Lemma \ref{lem:fund_operand} implies $\net(x)=\DotOp{\sigbar(\x),\sigbar(\x)}{\x}$, as was claimed in the text. This is in fact how numerical equality is achieved in Equation 1.

\begin{proof}
We claim it is sufficient to show
\begin{align}
     \max_{\mubar\in\Sigbar}\DotOp{\mubar,\sigbar(\x)}{\x}\leq \net(x) 
     \leq \min_{\taubar\in\Sigbar}\DotOp{\sigbar(\x),\taubar}{\x}.
     \label{eqn:lem_fund_reduced}
\end{align}
To see this, note the following chain of inequalities. They make use of the fact that the maximum[minimum] is always greater[less] than or equal to any particular fixed value.
\begin{align*}
    &\min_{\taubar\in\Sigbar}\DotOp{\fixmu{},\taubar}{\x}
    \leq\DotOp{\fixmu,\sigbar(\x)}{\x}
    \leq \max_{\mubar\in\Sigbar}\DotOp{\mubar,\sigbar(\x)}{\x}\leq \net(x)\\
    &\leq\min_{\taubar\in\Sigbar}\DotOp{\sigbar(\x),\taubar}{\x} 
    \leq \DotOp{\sigbar(\x),\fixtau}{\x} 
    \leq\max_{\mubar\in\Sigbar}
    \DotOp{\mubar,\fixtau}{\x}.
\end{align*}
Thus the we obtain the inequalities in Eqn 6. Analyzing the special case that $\fixmu{}=\fixtau{}=\sigbar(x)$, we also obtain 
\begin{align*}
    \min_{\taubar\in\Sigbar}\DotOp{\sigbar(\x),\taubar}{\x}
    \leq \max_{\mubar\in\Sigbar}\DotOp{\mubar,\sigbar(\x)}{\x}
    \leq \min_{\taubar\in\Sigbar}\DotOp{\sigbar(\x),\taubar}{\x}
    \leq  \max_{\mubar\in\Sigbar}
    \DotOp{\mubar,\sigbar(\x)}{\x}.
\end{align*}
Thus we obtain the equalities in Eqn. 5. 
Now we turn to proving Equation 7.

Consider the term-wise expansion of $\Op$. We claim every term containing $\mul{1}$ has a leading $+$ and every term containing $\taul{1}$ has a leading $-$. When $d=1$, this is obvious. And, if it is true in the expansion of $\DotOperand{\mu^{[l]},\tau^{[l]}}{\x}{l}$, then by definition (Eqn. $4$) it is true in the expansion of $\DotOperand{\mu^{[l+1]},\tau^{[l+1]}}{\x}{l+1}$, since the minus sign in front of $\DotOperand{\tau^{[l]},\mu^{[l]}}{\x}{l}$ also features $\mul{1}$ and $\taul{1}$ switching roles.

\def\posveci{v^a_+}
\def\posvecii{v^b_+}
Since every matrix in the expansion of $\Op$ is entry-wise nonnegative, for every fixed
$\mul{2},\ldots,\mul{d}$ and 
$\taul{2},\ldots,\taul{d}$,
we can by combining terms obtain some bias $\beta$ and nonnegative vectors $\posveci,\posvecii\in\bbR^{\width{1}}_+$ so that 
\begin{align*}
    \DotOp{\mubar,\taubar}{\x}&=\beta+\posveci\mul{1}(\B{0}+\W{0}\x)-\posvecii\taul{1}(\B{0}+\W{0}\x)\\
    &=\beta+\posveci\mul{1}\netl{0}(\x)-\posvecii\taul{1}\netl{0}(\x)\\
\end{align*}
Therefore, for every fixed 
$\mul{2},\ldots,\mul{d}$ and 
$\taul{2},\ldots,\taul{d}$,
we may consider $\taul{1}=\sigl{1}(\x)$ to be an optimal choice (for any $\mul{1}$) and $\mul{1}=\sigl{1}(\x)$ to be an optimal choice (for any $\taul{1}$). We have shown
that for any $\mul{2},\ldots,\mul{d}$ and 
$\taul{2},\ldots,\taul{d}$, that
\begin{align*}
    &\phantom{\leq}\max_{\mul{1}}\DotOp{\mul{1},\mul{2},\ldots,\mul{d},\sigl{1}(\x),\taul{2},\ldots,\taul{d}}{\x}\\
    &\leq \DotOp{\sigl{1}(\x),\mul{2},\ldots,\mul{d},\sigl{1}(\x),\taul{2},\ldots,\taul{d}}{\x}\\
    &\leq\min_{\taul{1}}\DotOp{\sigl{1}(\x),\mul{2},\ldots,\mul{d},\taul{1},\taul{2},\ldots,\taul{d}}{\x}
\end{align*}

Now suppose for some $k\leq d$ that 
$\mul{1}=\taul{1}=\sigl{1}(\x),\ldots, \mul{k-1}=\taul{k-1}=\sigl{k-1}(\x)$. Let 
$\mul{k+1},\ldots,\mul{d}$ and 
$\taul{k+1},\ldots,\taul{d}$ be arbitrary and fixed. 
It is quite clear by substitution that $\DotOperand{\sigbar^{[k-1]},\sigbar{[k-1]}}{\x}{k-1}=\netl{k-1}(\x)$. Then we may substitute $\netl{k-1}(\x)$ for every $\Opl{k-1}$ in the expansion of $\Op$. We are left with the same expansion as before but with $(\mul{k},\taul{k})$ in place of $(\mul{1},\taul{1})$, $d-k+1$ in place of $d$, and $\netl{k-1}(\x)$ in place of $\netl{0}(\x)$. Accordingly, we can conclude by the same logic that $\taul{k}=\sigl{k}(\x)$ minimizes $\Op$, and that $\mul{k}=\sigl{k}(\x)$ maximizes $\Op$, as soon as $\mul{l}=\taul{l}=\sigl{l}(\x)$ for $l<k$.

If we apply this reasoning recursively, we can see
\begin{align*}
    \max_{\mubar}\DotOp{\mubar,\sigbar(\x)}{\x}
    &=\max_{\mul{d},\ldots,\mul{1}}\DotOp{\mul{1},\ldots\mul{d},\sigbar(\x)}{\x}\\
    &\leq\max_{\mul{d},\ldots,\mul{2}}\DotOp{\sigl{1}(\x),\mul{2},\ldots\mul{d},\sigbar(\x)}{\x}\\
    &\leq\max_{\mul{d},\ldots,\mul{3}}\DotOp{\sigl{1}(\x),\sigl{2}(\x),\mul{3},\ldots\mul{d},\sigbar(\x)}{\x}\\
    &\vdotswithin{\leq}\\
    &\leq \DotOp{\sigbar(\x),\sigbar(\x)}{\x}=\net(\x)\\
    &\vdotswithin{\leq}\\
    &\leq\min_{\taul{d},\ldots,\taul{3}}\DotOp{\sigbar(\x),\sigl{1}(\x),\sigl{2}(\x),\taul{3},\ldots\taul{d}}{\x}\\
    &\leq\min_{\taul{d},\ldots,\taul{2}}\DotOp{\sigbar(\x),\sigl{1}(\x),\taul{2},\ldots\taul{d}}{\x}\\
    &\leq\min_{\taul{d},\ldots,\taul{1}}\DotOp{\sigbar(\x),\taul{1},\ldots\taul{d}}{\x}\\
    &=\min_{\mubar}\DotOp{\mubar,\sigbar(\x)}{\x}
\end{align*}
Since we have shown Equation $7$, this concludes the proof.

\end{proof}

This Theorem is not essential to the main story, but the last line (Eqn. 10) is referenced as "Mode=Numeric" in Algorithm \ref{alg:op_tree}. It can be thought of as a numeric analogue of Theorem \ref{thm:sig_bdry} that models the \textit{function} $\net(\x)$ using Min and Max. Of course, this model requires that all network states, $\Sigbar$, participate. Not just those at the boundary, $\SigbarO$.

\begin{restatable}{theorem}{SwapMinMaxTheorem}\label{thm:swap_minmax}
The order of the operands in Eqn 1 may be switched as         
    \begin{align}
        \net(\x)&=\max_{\mul{d}}\min_{\taul{d}}\cdots\max_{\mul{1}}\min_{\taul{1}}
        \DotOp{\mubar,\taubar}{\x}\nonumber\\
        &=\max_{\mubar\in\Sigbar}\min_{\taubar\in\Sigbar}       
        \DotOp{\mubar,\taubar}{\x}\label{eqn:4-Mm}\\
        &=\min_{\taubar\in\Sigbar}\max_{\mubar\in\Sigbar}
        \DotOp{\mubar,\taubar}{\x}\label{eqn:4-mM}\\
        &=\max_{\mul{d}\in\Sigbarl{d}}\min_{\taul{d}\in\Sigbarl{d}}
        \max_{\{\mul{d-1}|(\mul{d-1},\mul{d})\in\Sigbarl{d-1,d}\}}
        \min_{\{\taul{d-1}|(\taul{d-1},\taul{d})\in\Sigbarl{d-1,d} \}}\cdots\nonumber\\
        &\max_{\{\mul{1}|(\mul{1},\ldots,\mul{d})\in\Sigbar\}}
        \min_{\{\taul{1}|(\taul{1},\ldots,\taul{d})\in\Sigbar\}}
        \DotOp{\mubar,\taubar}{\x}       \label{eqn:EquivSig}
    \end{align}
\end{restatable}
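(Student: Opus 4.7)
The plan is to verify each equality by sandwiching the value of the expression between $\net(\x)$ from above and below, leveraging Lemma~\ref{lem:fund_operand}: its saddle-point identities (Eqn.~\ref{eqn:fund_operand}) and one-sided bracketing (Eqn.~\ref{eqn:net_operand}) do essentially all the work. The first displayed equality is just Theorem~\ref{thm:hierarchical}, which is already proved, so the remaining task reduces to showing that restricting $(\mubar,\taubar)$ to range over $\Sigbar$, and shuffling the Min/Max ordering, leaves the numeric value untouched.

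For Equation~\ref{eqn:4-Mm}, the lower bound follows by plugging in $\mubar=\sigbar(\x)$: Eqn.~\ref{eqn:fund_operand} gives $\min_{\taubar\in\Sigbar}\DotOp{\sigbar(\x),\taubar}{\x}=\net(\x)$, so $\max_{\mubar}\min_{\taubar}\geq\net(\x)$. For the upper bound, for any $\mubar\in\Sigbar$ the chain of inequalities in Eqn.~\ref{eqn:net_operand} yields $\min_{\taubar\in\Sigbar}\DotOp{\mubar,\taubar}{\x}\leq\DotOp{\mubar,\sigbar(\x)}{\x}\leq\net(\x)$, and taking the max over $\mubar$ preserves this. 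Equation~\ref{eqn:4-mM} is the mirror-image argument, with $\taubar=\sigbar(\x)$ witnessing the upper bound and Eqn.~\ref{eqn:net_operand} furnishing the lower bound.

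For the nested form in Equation~\ref{eqn:EquivSig}, I would use a ``constant-strategy'' argument in the sequential game $\max_{\mul{d}}\min_{\taul{d}}\cdots\max_{\mul{1}}\min_{\taul{1}}$. For the lower bound, let the Max-player commit to the fixed sequence $\mul{l}=\sigl{l}(\x)$ at every level; this is feasible at every stage because $\sigbar(\x)\in\Sigbar$ guarantees every partial prefix lies in the required projection. Since replacing each Max by a specific feasible value can only decrease the nested expression, the resulting bound equals the nested Min over the remaining $\taubar$-choices with $\mubar$ pinned to $\sigbar(\x)$, which collapses to $\min_{\taubar\in\Sigbar}\DotOp{\sigbar(\x),\taubar}{\x}=\net(\x)$ by Eqn.~\ref{eqn:fund_operand}. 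The upper bound is symmetric: the Min-player commits to $\taul{l}=\sigl{l}(\x)$, forcing the surviving nested Max to collapse to $\max_{\mubar\in\Sigbar}\DotOp{\mubar,\sigbar(\x)}{\x}=\net(\x)$.

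The main (modest) obstacle is the bookkeeping lemma behind both collapses: when one player's choices are pinned to $\sigbar(\x)$, the remaining nested layer-wise Min (resp.\ Max) over $\Sigbar$-consistent sequences coincides with the joint Min (resp.\ Max) over $\Sigbar$. This is a clean bijection between the admissible decision trees and $\Sigbar$ itself: a sequence $(\taul{d},\ldots,\taul{1})$ is admissible in the nested form precisely when each partial tuple lies in the relevant projection of $\Sigbar$, which is equivalent to $\taubar\in\Sigbar$. Once that bijection is in hand, every remaining step is a direct invocation of Lemma~\ref{lem:fund_operand}, and no min-max exchange beyond the one guaranteed by the saddle point at $\sigbar(\x)$ is required.
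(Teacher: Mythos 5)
Your proof is correct, and for the first two equalities (\ref{eqn:4-Mm}, \ref{eqn:4-mM}) it coincides with the paper's argument: sandwich via Lemma~\ref{lem:fund_operand} and the minmax inequality, with $\sigbar(\x)$ as the witness on each side. For the nested equality (\ref{eqn:EquivSig}) your route genuinely differs. The paper never pins a player to a strategy; instead it starts from $\min_{\taubar\in\Sigbar}\max_{\mubar\in\Sigbar}\DotOp{\mubar,\taubar}{\x}=\net(\x)$, rewrites that joint optimization as fully nested layer-wise operators (all mins outside, all maxes inside), and then performs a chain of roughly $2d$ adjacent min/max swaps, each one an application of the minmax inequality yielding ``$\geq$'', passing through the interleaved expression of Eqn.~\ref{eqn:EquivSig} on the way down to $\max_{\mubar\in\Sigbar}\min_{\taubar\in\Sigbar}\DotOp{\mubar,\taubar}{\x}=\net(\x)$, so the interleaved form is squeezed between two copies of $\net(\x)$. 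Your constant-strategy argument replaces that operator-shuffling chain with a direct appeal to the saddle point: pin $\mubar$ (resp.\ $\taubar$) to $\sigbar(\x)$, use monotonicity of nested min/max under pointwise bounds, and collapse the surviving nested optimization to a joint one over $\Sigbar$ via the prefix-extendability observation (every admissible partial tuple is a projection of some element of $\Sigbar$, so no index set is empty and the admissible leaves are exactly $\Sigbar$). Both proofs rest on Lemma~\ref{lem:fund_operand}, and both implicitly need that same nested-to-joint correspondence (the paper uses it when it re-expands the joint min--max into layer-wise form), but yours is arguably more self-contained at the nested step --- it avoids tracking a long chain of inequalities --- while the paper's shuffle has the side benefit of exhibiting \emph{all} intermediate operator orderings as equal, not just the particular interleaving in Eqn.~\ref{eqn:EquivSig}, which is the trick it then reuses in the proof of Theorem~\ref{thm:sig_bdry}.
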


\begin{proof}
To show Equations 8 and 9, simply use the equality relations in Lemma \ref{lem:fund_operand} and the minmax inequality:
\begin{align*}
    \net(\x)&=\min_{\taubar\in\Sigbar}\DotOp{\sigbar(\x),\taubar}{\x} \\
    &\leq \max_{\mubar\in\Sigbar}\min_{\taubar\in\Sigbar}\DotOp{\mubar,\taubar}{\x}
    \leq \min_{\taubar\in\Sigbar}\max_{\mubar\in\Sigbar}\DotOp{\mubar,\taubar}{\x}\\
    &\leq \max_{\mubar\in\Sigbar}\DotOp{\mubar,\sigbar(\x)}{\x}\\
    &=\net(\x)
\end{align*}
\end{proof}
The proof of Eqn 10 could have been incorporated into the minmax inequality step above, but instead we treat it separately for notational clarity. In the next chain of equations we suppress the index set notation for a few lines so as not to obscure the shuffling of Min and Max operators.

\begin{align}
    \net(\x)&=
    \min_{\taubar}\max_{\mubar}
    \DotOp{\mubar,\taubar}{\x}\label{eqn:start_mM}\\
    &=\min_{\taul{d}\in\Sigbarl{d}}  \min_{\{\taul{d-1}|(\taul{d-1},\taul{d})\in\Sigbarl{d-1,d}\}} \min_{\{\taul{1}|(\taul{1},\ldots,\taul{d})\in\Sigbar\}}\cdots\nonumber\\
    &\qquad\max_{\mul{d}\in\Sigbarl{d}}  \max_{\{\mul{d-1}|(\mul{d-1},\mul{d})\in\Sigbarl{d-1,d}\}} \max_{\{\mul{1}|(\mul{1},\ldots,\mul{d})\in\Sigbar\}}
    \DotOp{\mubar,\taubar}{\x}\nonumber\\
    &=\min_{\taul{d}}\cdots\min_{\taul{1}}\max_{\mul{d}}\cdots\max_{\mul{1}}
    \DotOp{\mubar,\taubar}{\x}\nonumber\\
    &\geq\min_{\taul{d}}\cdots\min_{\taul{2}}\max_{\mul{d}}
    \cdots\max_{\mul{1}}\min_{\taul{1}}\DotOp{\mubar,\taubar}{\x}\nonumber\\
    &\geq\min_{\taul{d}}\cdots\min_{\taul{3}}\max_{\mul{d}}
    \cdots\max_{\mul{2}}\min_{\taul{2}}\max_{\mul{1}}\min_{\taul{1}}
    \DotOp{\mubar,\taubar}{\x}\nonumber\\
    &\vdots\nonumber\\
    &\geq\max_{\mul{d}\in\Sigbarl{d}}\min_{\taul{d}\in\Sigbarl{d}}\cdots
    \max_{\{\mul{1}|(\mul{1},\ldots,\mul{d})\in\Sigbar\}}
    \min_{\{\taul{1}|(\taul{1},\ldots,\taul{d})\in\Sigbar\}}
    \DotOp{\mubar,\taubar}{\x}=\text{Eqn 10}\nonumber\\
    &\geq\nonumber\\
    &\vdots\nonumber\\
    &\geq \max_{\mubar}\min_{\taubar}\DotOp{\mubar,\taubar}{\x}\label{eqn:end_Mm}\\
    &=\net(\x)\nonumber
\end{align}

    %

\begin{proposition}\label{prop:SigPM} Let $\net$ be a ReLU network with net operand, $\Op$, and states $\Sigbar=\SigbarP\cup\SigbarM$. Then
\begin{equation}
    \biggl[\net(\x)\geq 0\biggr]
    \Leftrightarrow\biggl[\max_{\mubar\in\SigbarP}\min_{\taubar\in\SigbarM}\DotOp{\mubar,\taubar}{\x}\geq 0\biggr]
    \Leftrightarrow \biggl[\min_{\taubar\in\SigbarM}\max_{\mubar\in\SigbarP}\DotOp{\mubar,\taubar}{\x}\geq 0\biggr] \label{eqn:SigPM}
\end{equation}
\begin{proof}
If $\net(\x)\geq 0$, then $\sigbar(x)\in\SigbarP$. Therefore
\begin{align*}
\max_{\mubar\in\SigbarP}\min_{\taubar\in\SigbarM}\DotOp{\mubar,\taubar}{\x}\geq \min_{\taubar\in\Sigbar}\DotOp{\sigbar(\x),\taubar}{\x}=\net(\x)\geq 0.
\end{align*}
Conversely, if $\net(\x)<0$, then $\sigbar(x)\in\SigbarM$ and 
\begin{align*}
\min_{\taubar\in\SigbarM}\max_{\mubar\in\SigbarP}\DotOp{\mubar,\taubar}{\x}\leq \max_{\mubar\in\Sigbar}\DotOp{\mubar,\sigbar(\x)}{\x}=\net(\x)< 0.
\end{align*}
We have proved the first and third arrows of the below sequence of implications,
\begin{equation}
    \biggl[\net(\x)\geq 0\biggr]\Rightarrow
    \biggl[\max_{\mubar\in\SigbarP}\min_{\taubar\in\SigbarM}\DotOp{\mubar,\taubar}{\x}\geq 0\biggr]\Rightarrow
    \biggl[\min_{\taubar\in\SigbarM}\max_{\mubar\in\SigbarP}\DotOp{\mubar,\taubar}{\x}\geq 0\biggr]\Rightarrow
    \biggl[\net(\x)\geq 0\biggr].\nonumber
\end{equation}
The middle one is a consequence of the minmax inequality.
\end{proof}
\end{proposition}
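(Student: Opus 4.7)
The plan is to establish the three-way equivalence as a cycle of implications. The workhorse is Lemma \ref{lem:fund_operand}, which provides the saddle-point-style identity $\max_{\mubar\in\Sigbar}\DotOp{\mubar,\sigbar(\x)}{\x}=\net(\x)=\min_{\taubar\in\Sigbar}\DotOp{\sigbar(\x),\taubar}{\x}$. With that in hand, the argument reduces to bookkeeping: track which of the two subsets $\SigbarP,\SigbarM$ of $\Sigbar$ the realized state $\sigbar(\x)$ lies in, and use the elementary monotonicity that $\min$ over a subset is at least $\min$ over its superset, and dually for $\max$.

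For the first implication, suppose $\net(\x)\geq 0$. Then by definition of $\SigbarP$ we have $\sigbar(\x)\in\SigbarP$, which makes it an admissible choice of $\mubar$ in the outer maximum. Substituting this witness and then using $\SigbarM\subseteq\Sigbar$ gives
\[
\max_{\mubar\in\SigbarP}\min_{\taubar\in\SigbarM}\DotOp{\mubar,\taubar}{\x}
\geq \min_{\taubar\in\SigbarM}\DotOp{\sigbar(\x),\taubar}{\x}
\geq \min_{\taubar\in\Sigbar}\DotOp{\sigbar(\x),\taubar}{\x}
=\net(\x)\geq 0,
\]
where the final equality is Lemma \ref{lem:fund_operand}. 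The implication from the second statement to the third is a direct application of the classical minmax inequality $\max_{\mubar}\min_{\taubar}f\leq\min_{\taubar}\max_{\mubar}f$ to $f=\DotOp{\cdot,\cdot}{\x}$ on the index set $\SigbarP\times\SigbarM$.

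For the closing implication I would argue by contrapositive: if $\net(\x)<0$, then $\sigbar(\x)\in\SigbarM$, hence an admissible choice of $\taubar$ in the outer minimum. Using $\SigbarP\subseteq\Sigbar$ together with Lemma \ref{lem:fund_operand} yields
\[
\min_{\taubar\in\SigbarM}\max_{\mubar\in\SigbarP}\DotOp{\mubar,\taubar}{\x}
\leq \max_{\mubar\in\SigbarP}\DotOp{\mubar,\sigbar(\x)}{\x}
\leq \max_{\mubar\in\Sigbar}\DotOp{\mubar,\sigbar(\x)}{\x}
=\net(\x)<0,
\]
which contradicts the hypothesis that the third quantity is nonnegative, closing the cycle.

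I do not expect a serious obstacle here: all three implications collapse to a one-line application of Lemma \ref{lem:fund_operand} combined with subset monotonicity of $\max$/$\min$. The only subtlety worth flagging is a vacuous-case check, namely that $\SigbarP$ and $\SigbarM$ must be nonempty wherever they appear as outer index sets; but this is automatic in each of the two nontrivial implications above, since the witness $\sigbar(\x)$ is explicitly an element of the relevant set at the moment it is needed.
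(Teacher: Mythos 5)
Your proposal is correct and follows essentially the same route as the paper's proof: substitute the witness $\sigbar(\x)\in\SigbarP$ (resp. $\sigbar(\x)\in\SigbarM$) into the outer optimization, use subset monotonicity together with the equalities of Lemma \ref{lem:fund_operand}, and close the cycle of implications with the minmax inequality in the middle. The only difference is that you spell out the intermediate subset-monotonicity inequality that the paper compresses into a single step.
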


We are almost done with our theoretical development, and we have not yet used the fact that $\Op$ is a linear (affine) function of $x$. That changes with the next Theorem, which requires Farkas' Lemma. To linearize, we define $\xv=\xcol\in\bbR^{\width{0}+1}$ to be the embedding of our inputs in the affine plane. 

\BoundaryTheorem*

We will reuse the same trick operator shuffling technique from the proof of Theorem 5 (From Eqn. 11 to Eqn. 12). If we can prove equivalence of the form in Equation 13, but with $(\SigbarO,\SigbarO)$ in place of $(\SigbarP,\SigbarM)$ for the index sets, then we can use $2d$ applications of the minmax inequality to sandwich the actual term we care about. The reader should refer to these manipulations, as it is a very handy trick!

\def\muplus{\mubar_{+}}
\def\tauminus{\taubar_{-}}
\def\muzero{\mubar_{(0)}}
\def\vv{v_{\mubar,x}}
\def\vvp{v_{\muplus,x}}
\newcommand{\DotV}[2]{v_{#1,#2}}
\newcommand{\DotVp}[1]{v_{\muplus{},#1}}

\begin{proof}

Let $X_{\taubar}=\{x|\sigbar(x)=\taubar\}$.
Each pair of binary vectors, $\mubar,\taubar$, corresponds to a vector $\DotV{\mubar}{\taubar}\in \bbR^{\width{0}+1}$ so that $\DotOp{\mubar,\taubar}{\x}=\DotV{\mubar}{\taubar}\cdot\xv$. 
Consider $\muplus{}\in\SigbarP\setminus\SigbarM$.
For $\tauminus\in\SigbarM$, let $K^{\muplus}_{\tauminus}$ be the (convex) region of the input where $\DotVp{\tauminus}\cdot\xv=\min_{\taubar\in\SigbarM}\DotVp{\taubar}\cdot\xv$. 

Suppose that $\net(\x)\geq 0$. Let $x$ be an arbitrary input. It must, for some $\tauminus$, be in one of the convex sets, say $K^{\muplus}_{\tauminus}$. Now suppose further that for this binary vector, $\tauminus$, we have $\DotV{\muplus{}}{\tauminus{}}\cdot\xv \geq \DotV{\mubar}{\tauminus{}}\cdot\xv$
 for all $\mubar\in\SigbarP$. Suppose that for $\forall\mubar$. In this case we have 
\begin{align*}
\DotV{\muplus}{\tauminus}&\geq \max_{\mubar\in\SigbarP}\DotV{\mubar}{\tauminus}\geq 0\\
&\geq \max_{\mubar\in\SigbarP}\min_{\taubar\in\SigbarM}\DotV{\mubar}{\taubar}\geq 0
\end{align*} 
To rephrase, let $A$ be a matrix with the vectors $\{\DotV{\mubar}{\tauminus}\}_{\mubar\in\SigbarP}$ for columns. We showed $\not \exists\xv\in K^{\muplus}_{\tauminus}$ such that $A^T\xv\succeq 0 $ and $\DotV{\muplus{},\tauminus{}}\cdot\xv<0$. Therefore, Farkas' Lemma tells us that we can find positive nonnegative scalars, $\{\alpha_{\mubar}|\mubar\in\SigbarP\}$, and some dual vector $\eta\in (K^{\muplus}_{\tauminus})^*$
such that
\begin{align}
&\DotV{\muplus}{\tauminus}=\eta+\sum_{\mubar\in\SigbarP}\alpha_{\mubar}(\DotV{\muplus,\tauminus{}}-\DotV{\mubar}{\tauminus{}})\nonumber\\
\Leftrightarrow &\sum_{\mubar\in\SigbarP}\alpha_{\mubar}v_{\mubar,\tauminus}+
\biggl(1-\biggl(\sum_{\mubar\in\SigbarP}\alpha_{\mubar}\biggr)\biggr)(v_{\muplus{},\tauminus})=\eta
\label{alpha_lt1}\\
\Leftrightarrow &\sum_{\mubar\in\SigbarP}\alpha_{\mubar}v_{\mubar,\tauminus}=\eta+
\biggl(\biggl(\sum_{\mubar\in\SigbarP}\alpha_{\mubar}\biggr)-1\biggr)(v_{\muplus{},\tauminus})\label{alpha_gt1}
\end{align}

Here, we point out that $\sum_{\mubar\in\SigbarP}\alpha_{\mubar}<1$ is not possible. Since the RHS of Equation 14 has nonnegative dot product with every $\xv\in K^{\muplus}_{\tauminus}$, we know that at least one of the vectors on the LHS must as well. This would imply nonnegativity of maximum over all of $\SigbarP$, given by, $\max_{\mubar\in\SigbarP} v_{\mubar,\tauminus}\xv\geq 0$ $\forall\xv\in K^{\muplus}_{\tauminus}$. But, we know this to be false, since $\tauminus\in\SigbarM$, which guarantees the existence of at least some $\xv$ with $\max_{\mubar}\DotV{\mubar}{\tauminus{}}\cdot\xv<0$.
But now, consider the RHS of Eqn 15, which is nonnegative. We see that every time $v_{\muplus,\tauminus}\xv\geq 0$, then also we are guaranteed some term in the left summation, say indexed by $\mu'\in\SigbarP$, such that $v_{\mu',\tau^*}\bar{y}\geq 0$ (we may assume not all $\alpha_{\mubar}=0$). In other words, we have the first arrow of
\begin{align*}
\biggl[\net(\x)\geq 0\biggr]&\Rightarrow\biggl[\max_{\mubar\in\SigbarO}\min_{\taubar\in\SigbarM}\DotOp{\mubar,\taubar}{\x}\geq 0\biggr]
\Rightarrow\biggl[\max_{\mubar\in\SigbarO}\min_{\taubar\in\SigbarO}\DotOp{\mubar,\taubar}{\x}\geq 0 \biggr]\Rightarrow \text{Eqn.\ref{eqn:EquivSigO}}\\
&\Rightarrow\biggl[\min_{\taubar\in\SigbarO}\max_{\mubar\in\SigbarO}\DotOp{\mubar,\taubar}{\x}\geq 0 \biggr]
\Rightarrow\biggl[\min_{\taubar\in\SigbarO}\max_{\mubar\in\SigbarP}\DotOp{\mubar,\taubar}{\x}\geq 0 \biggr]\Rightarrow\biggl[\net(\x)\geq 0\biggr]
\end{align*}
The contrapositive of the very last arrow (starting with $\net(x)<0$) is extremely similar to the first, so we omit it. The remaining arrows are all either minmax inequality or obtained shrinking or growing the set being optimized over. Thus completing the proof.
\end{proof}



\end{document}